\newcommand{\st}{\mathop{\mathrm{s.t.}}}
\newtheorem{theorem}{Theorem}
\newtheorem{lemma}[theorem]{Lemma}
\newtheorem{corollary}[theorem]{Corollary}
\theoremstyle{remark}
\theoremstyle{definition}
\def\E{\mathbb{E}}
\def\P{\mathbb{P}}
\def\half{\frac{1}{2}}
\def\df{\mathrm{df}}
\def\col{\mathrm{col}}
\def\row{\mathrm{row}}
\def\nul{\mathrm{null}}
\def\rank{\mathrm{rank}}
\def\nuli{\mathrm{nullity}}
\def\spa{\mathrm{span}}
\def\supp{\mathrm{supp}}
\def\conv{\mathrm{conv}}
\def\hbeta{\hat{\beta}}
\def\tbeta{\tilde{\beta}}
\def\R{\mathbb{R}}
\def\cG{\mathcal{G}}
\def\cN{\mathcal{N}}
\def\cS{\mathcal{S}}
\def\TV{\mathrm{TV}}
\def\MSE{\mathrm{MSE}}
\def\op{\Delta}
\def\tA{\tilde{A}}
\def\tB{\tilde{B}}
\def\tD{\tilde{D}}
\def\tK{\tilde{K}}
\def\tS{\tilde{\cS}}
\begin{document}

\title{Trend Filtering on Graphs} 

\author{
Yu-Xiang Wang$^{1}$ \\
{\tt  yuxiangw@cs.cmu.edu}\\
\and
James Sharpnack$^{3}$ \\
{\tt  jsharpna@gmail.com} \\
\and
 Alex Smola$^{1,4}$\\
{\tt  alex@smola.org} \\
\and
 Ryan J. Tibshirani$^{1,2}$\\
{\tt  ryantibs@stat.cmu.edu}\\
\and
\begin{tabular}{c}
  $^{1}$ Machine Learning Department, Carnegie Mellon University,
  Pittsburgh, PA 15213\\ 
  $^{2}$ Department of Statistics, Carnegie Mellon University,
  Pittsburgh, PA 15213\\  
  $^{3}$ Mathematics Department, University of California at San
  Diego, La Jolla, CA 10280\\ 
  $^{4}$ Marianas Labs, Pittsburgh, PA 15213\\ 
\end{tabular}
}

\date{}

\maketitle
	
	\begin{abstract}
		We introduce a family of adaptive estimators on graphs, based
		on penalizing the $\ell_1$ norm of discrete graph differences.
		This generalizes the idea of trend filtering
		\citep{l1tf,trendfilter}, used for univariate nonparametric
		regression, to graphs. Analogous to the univariate case, graph trend
		filtering exhibits a level of local adaptivity unmatched by the
		usual $\ell_2$-based graph smoothers.  It is also defined by a
		convex minimization problem that is readily solved (e.g., by fast
		ADMM or Newton algorithms).  We demonstrate the merits of graph
		trend filtering through both examples and theory. \\
		
		\noindent
		Keywords: {\it trend filtering, graph smoothing, total variation
			denoising, fused lasso, local adaptivity}   
		
	\end{abstract}
	
	\section{Introduction}
	\label{sec:intro}
	
	Nonparametric regression has a rich history in statistics, carrying
	well over 50 years of associated literature.  The goal of this paper
	is to port a successful idea in univariate nonparametric
	regression, trend filtering
	\citep{hightv,l1tf,trendfilter,fallfact}, to the setting of
	estimation on graphs. The proposed estimator, graph trend filtering,
	shares three key properties of trend filtering in the univariate
	setting.
	
	\begin{enumerate*}
		\item {\bf Local adaptivity:} graph trend filtering can adapt to
		inhomogeneity in the level of smoothness of an observed signal
		across nodes.  This stands in contrast to the usual
		$\ell_2$-based methods, e.g., Laplacian
		regularization \citep{graphlap}, which enforce smoothness globally
		with a much heavier hand, and tends to yield estimates
		that are either smooth or else wiggly throughout.
		
		\vspace{1mm}
		\item {\bf Computational efficiency:} graph trend filtering
		is defined by a regularized least squares problem, in which
		the penalty term is nonsmooth, but convex and structured enough
		to permit efficient large-scale computation.
		
		\vspace{1mm}
		\item {\bf Analysis regularization:} the graph trend filtering
		problem directly penalizes (possibly higher order) differences in
		the fitted signal across nodes.  Therefore
		graph trend filtering falls into what is called the {\it analysis}
		framework for defining estimators.
		Alternatively, in the {\it synthesis} framework, we would first
		construct a suitable basis over the graph, and then regress the
		observed signal over this basis; e.g., \citet{shuman2013} survey
		a number of such approaches using wavelets;
		likewise, kernel methods regularize in terms of the eigenfunctions
		of the graph Laplacian \citep{KonLaf02}. An advantage of analysis
		regularization is that it easily yields complex extensions of the
		basic estimator by mixing penalties.
	\end{enumerate*}
	
	As a motivating example,
	consider a denoising problem on 402 census tracts of Allegheny
	County, PA, arranged into a graph with 402 vertices and 2382 edges
	obtained by connecting spatially adjacent tracts. To illustrate the
	adaptive property of graph trend filtering we generated an
	artificial signal with inhomogeneous smoothness across the
	nodes, and two sharp peaks near the center of the graph, as can be
	seen in the top left panel of Figure~\ref{fig:pittsburgh-maps}.  (The
	signal was formed using a mixture of five Gaussians, in the underlying
	spatial coordinates.)
	We drew noisy observations around this signal, shown in the top right
	panel of the figure, and we fit graph trend filtering, graph Laplacian
	smoothing, and wavelet smoothing to these observations.  Graph trend
	filtering is to be defined in Section \ref{sec:graphtrend} (here we
	used $k=2$, quadratic order); the latter two, recall, are defined by
	the optimization problems
	\begin{align*}
	\min_{\beta \in \R^n} \, \|y-\beta\|_2^2 + \lambda \beta^\top L \beta
	\;\;\; \text{(Laplacian smoothing)}, \\
	\min_{\theta \in \R^n} \, \half \|y-W\theta\|_2^2 + \lambda \|\theta\|_1
	\;\;\; \text{(wavelet smoothing)},
	\end{align*}
	where $y \in \R^n$ the vector of observations measured over the
	$n=402$ nodes in the graph, $L \in \R^{n\times n}$ is the graph
	Laplacian matrix, and $W \in \R^{n\times n}$ is a wavelet basis built 
	over the graph.  The wavelet smoothing problem displayed above
	is really an oversimplified representation of the class of wavelets 
	methods, since it only encapsulates estimators that employ an
	orthogonal wavelet basis $W$ (and soft-threshold the wavelet 
	coefficients). For the present experiment, we
	constructed $W$ according to the spanning tree wavelet design of 
	\citet{graphwave}; we found this construction performed best among the
	graph wavelet designs we considered for the data at hand. For
	completeness, the results from alternative wavelet designs are
	given in the Appendix.  
	
	\begin{figure}[!htb]
		\centering
		\begin{tabular}{ccc}
			True signal &  Noisy observations & Graph trend filtering, 68 df \\
			\includegraphics[width=0.3\textwidth]{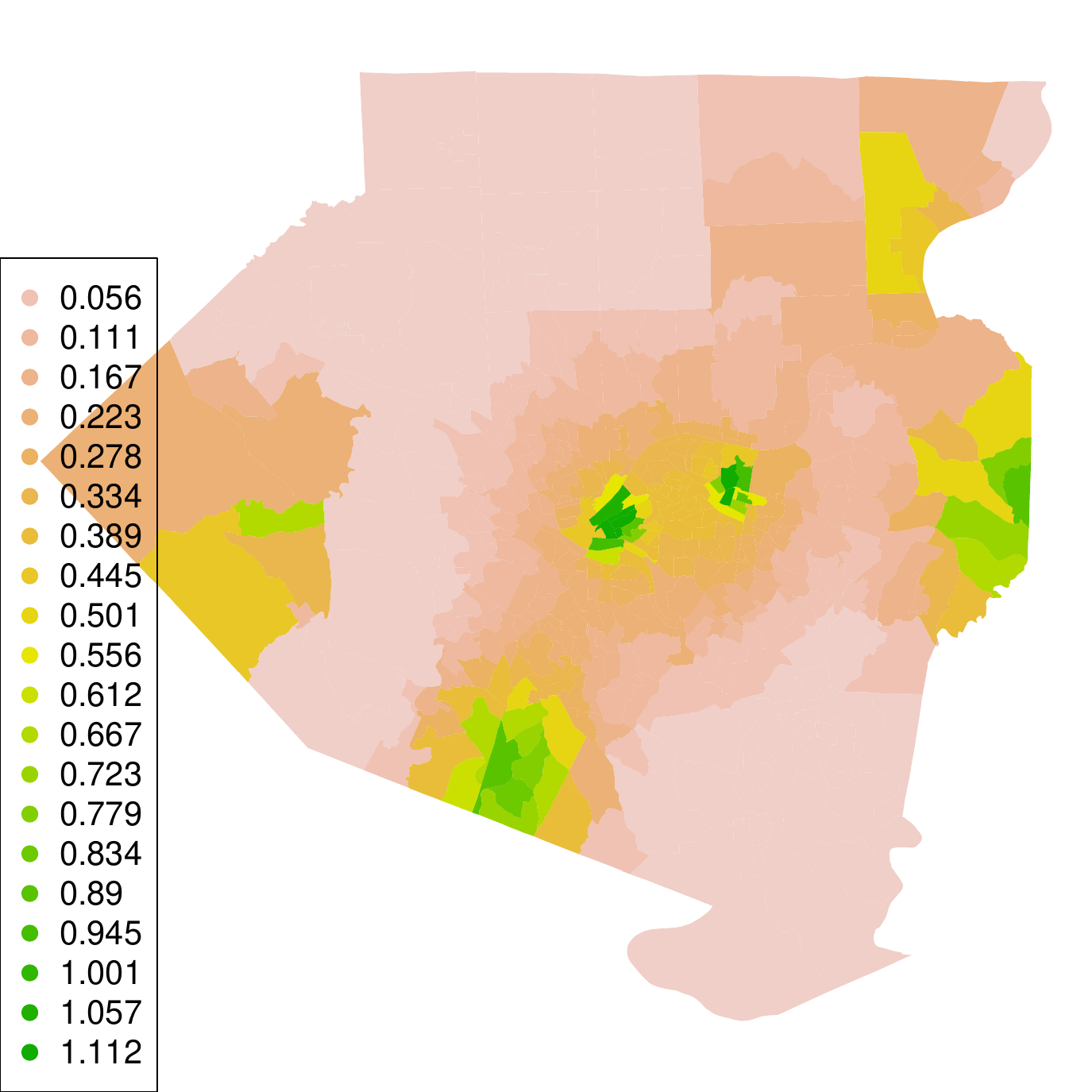} &
			\includegraphics[width=0.3\textwidth]{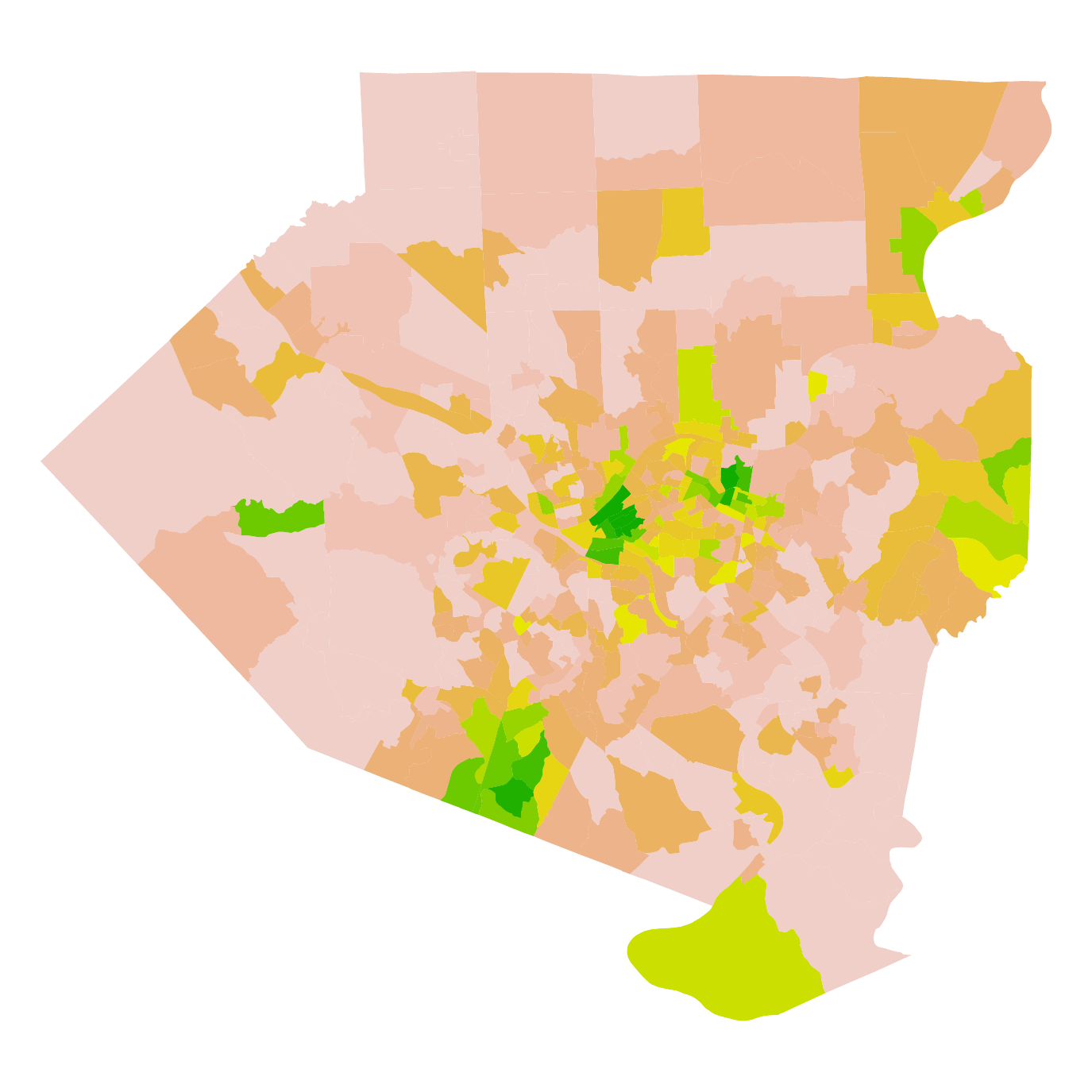} &
			\includegraphics[width=0.3\textwidth]{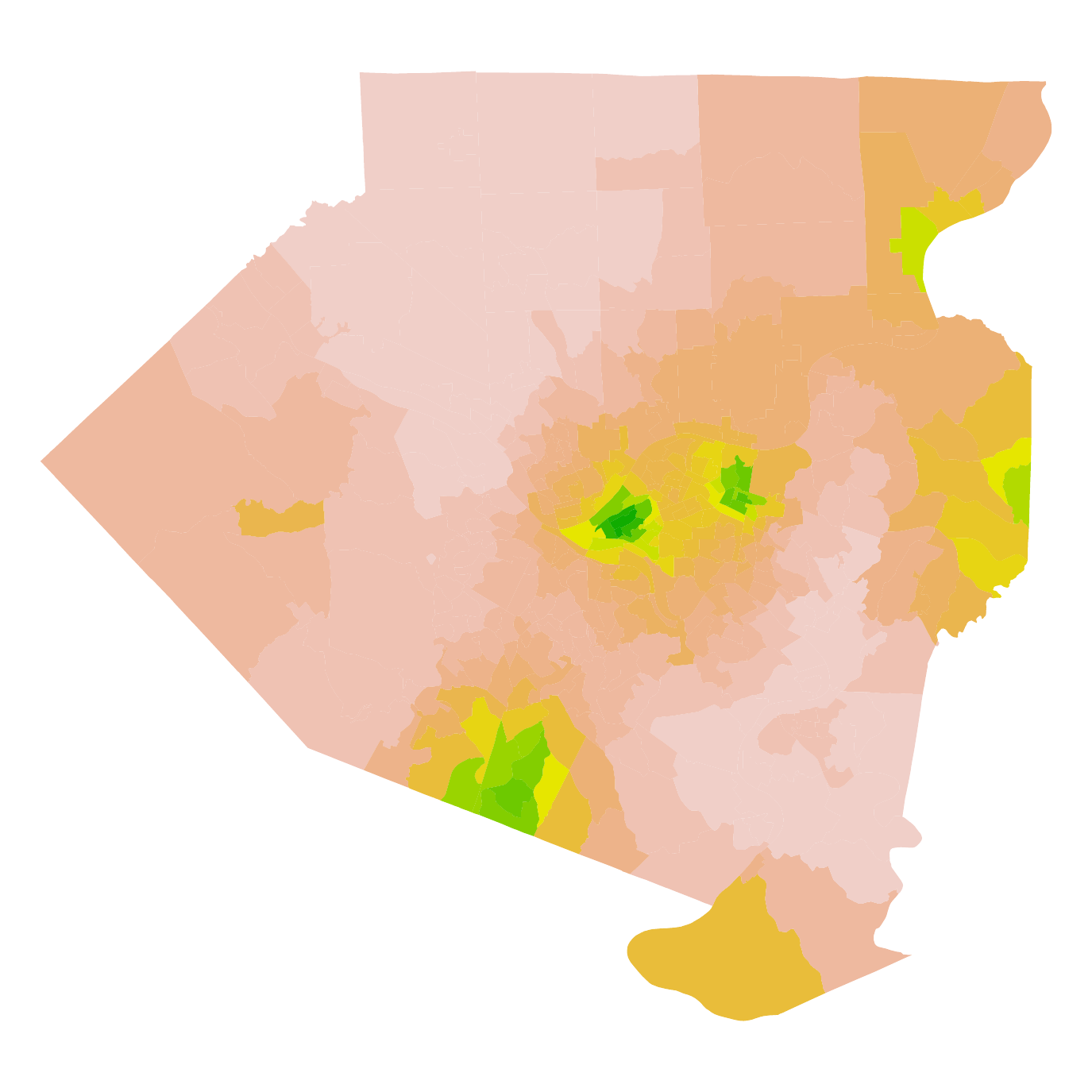} 
			\smallskip\smallskip \\
			Laplacian smoothing, 68 df & Laplacian smoothing, 132 df &
			Wavelet smoothing, 160 df \\
			\includegraphics[width=0.3\textwidth]{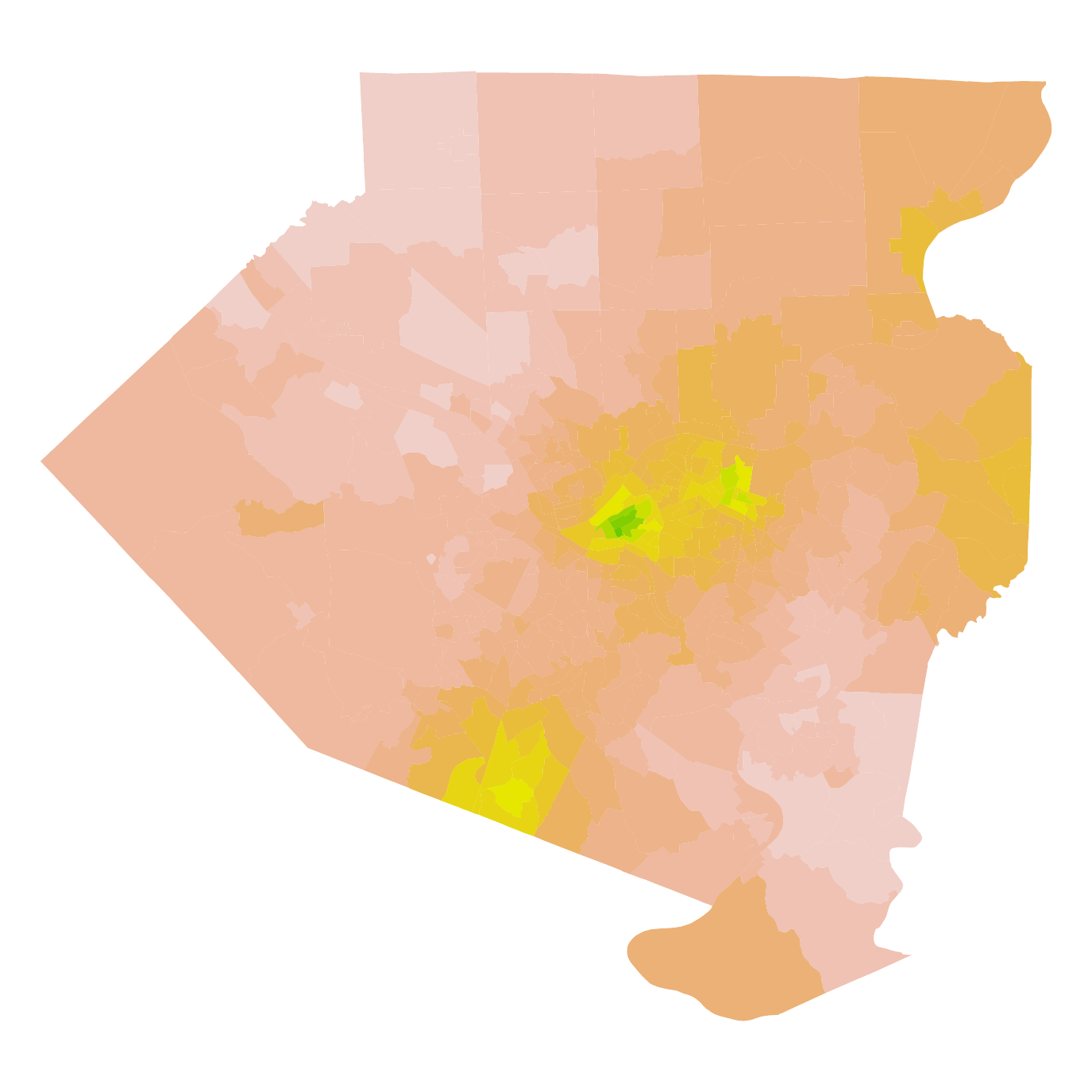} &
			\includegraphics[width=0.3\textwidth]{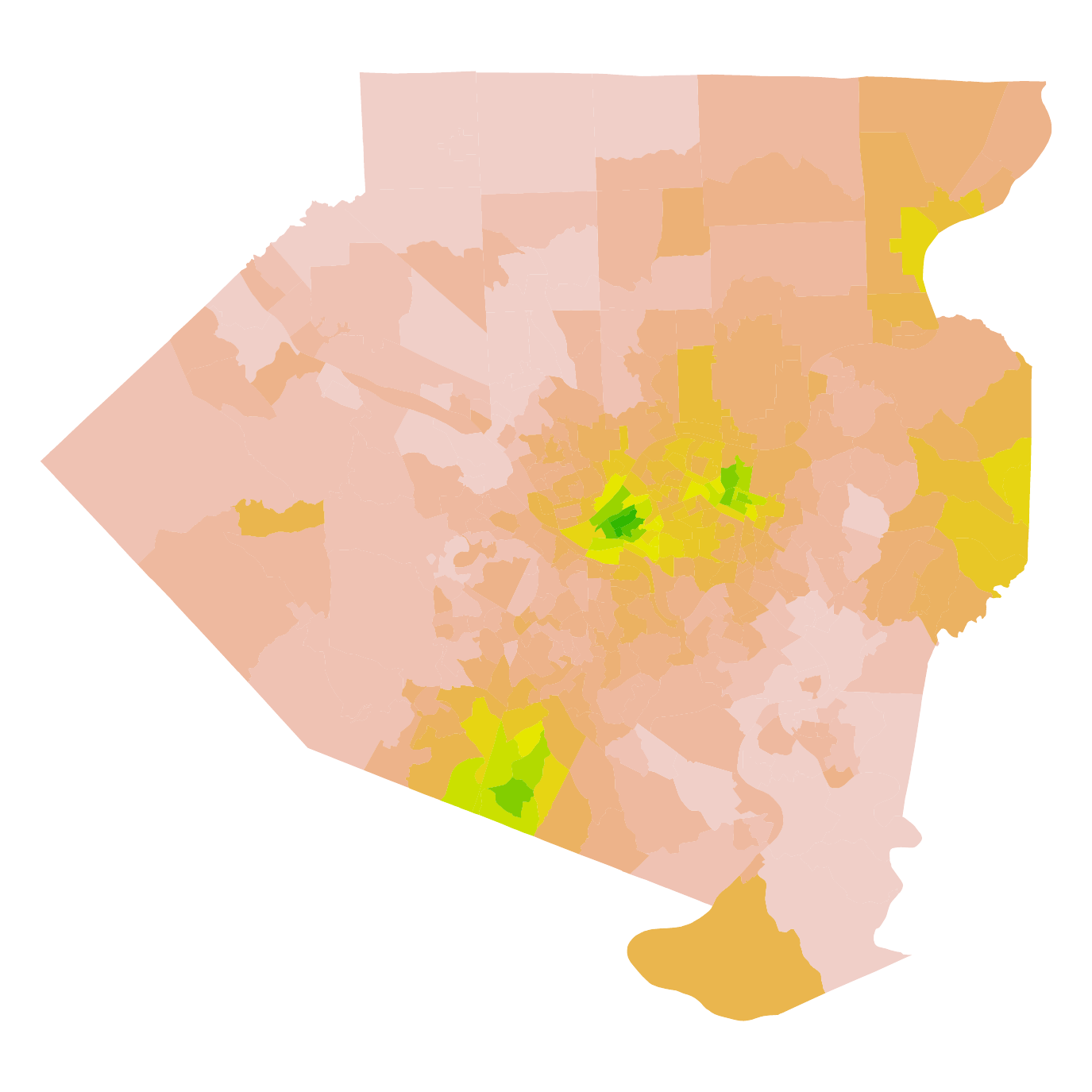} &
			\includegraphics[width=0.3\textwidth]{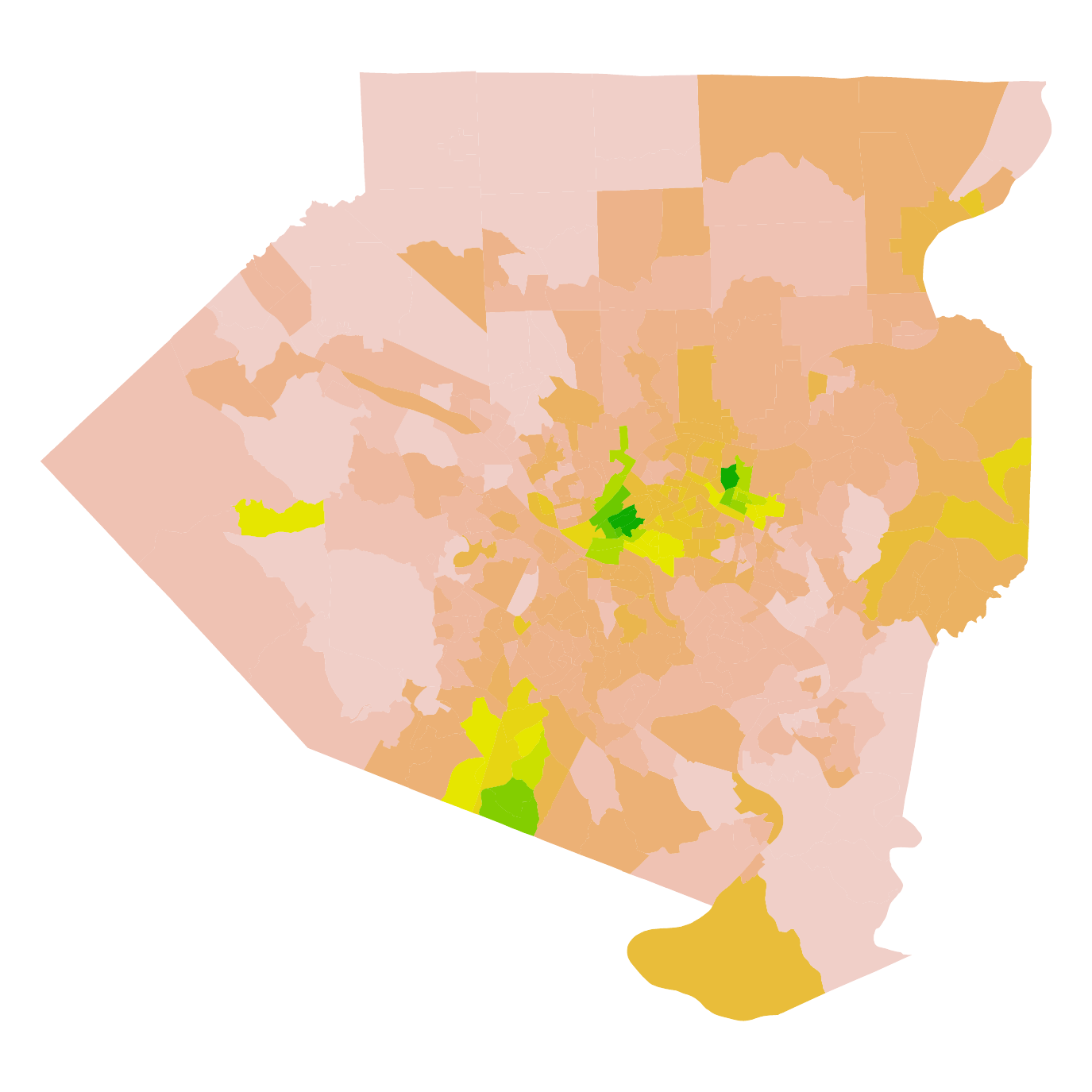}
		\end{tabular}
		\caption{Color maps for the Allegheny County example.}
		\label{fig:pittsburgh-maps}
	\end{figure}
	
	Graph trend filtering, Laplacian
	smoothing, and wavelet smoothing each have their own regularization
	parameters $\lambda$, and these parameters are not generally on the 
	same scale.  Therefore, in our comparisons we use effective degrees of
	freedom (df) as a  
	common measure for the complexities of the fitted models.   
	The top right panel of Figure \ref{fig:pittsburgh-maps} shows the
	graph trend filtering estimate with 68 df. We see 
	that it adaptively fits the sharp peaks in the center of the
	graph, and smooths out the surrounding regions appropriately. The
	graph Laplacian estimate with 68 df (bottom left),
	substantially oversmooths the high peaks in the center, while at 132
	df (bottom middle), it begins to detect the high peaks
	in the center, but undersmooths neighboring regions.  Wavelet
	smoothing performs quite poorly across all df values---it appears to
	be most affected by the level of noise in the observations. 
	
	As a more quantitative assessment, Figure \ref{fig:pittsburgh-errs}
	shows the mean squared errors between the estimates and the true
	underlying signal. The differences in
	performance here are analogous to the univariate case, when comparing
	trend filtering to smoothing splines \citep{trendfilter}. At
	smaller df values, Laplacian smoothing, due to its global
	considerations, fails to adapt to local differences across
	nodes. Trend filtering performs much better at low df
	values, and yet it matches Laplacian smoothing when both are
	sufficiently complex, i.e., in the overfitting regime. This
	demonstrates that the local flexibility of trend filtering estimates
	is a key attribute. 
	
	\begin{figure}[!htb]
		\centering
		\includegraphics[width=0.475\textwidth]{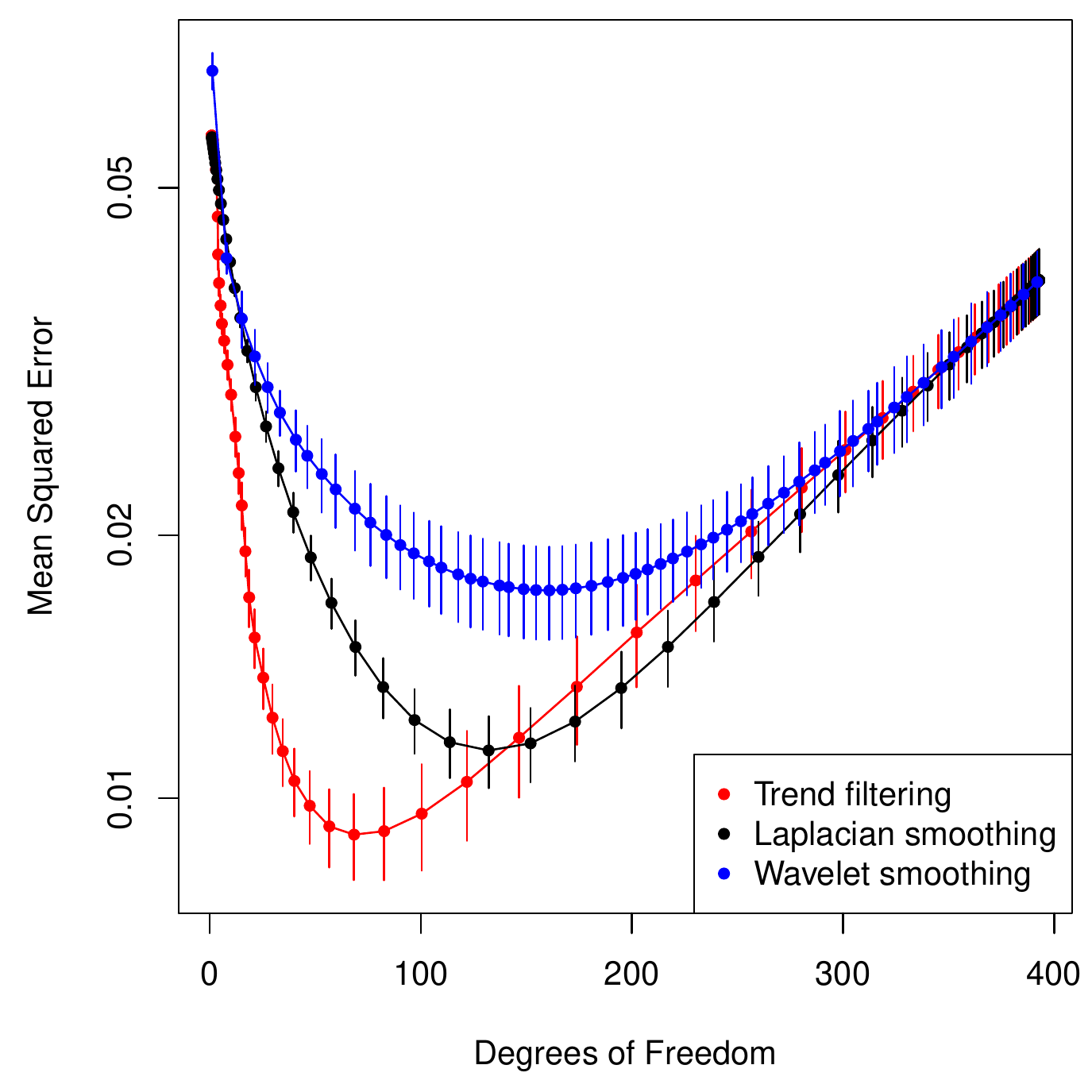}
		\caption{Mean squared errors for the Allegheny County
			example. Results were averaged over 10 simulations; the bars
			denote $\pm 1$ standard errors.
			\label{fig:pittsburgh-errs}}
	\end{figure}
	
	Here is an outline for the rest of this article.
	Section \ref{sec:graphtrend} defines graph trend filtering and gives
	underlying motivation and intuition.  Section \ref{sec:properties}
	covers basic properties and extensions of the graph trend filtering
	estimator.  Section \ref{sec:computation} examines computational
	approaches, and Section \ref{sec:examples} looks at a number of both
	real and simulated data examples.  Section \ref{sec:theory} presents
	asymptotic error bounds for graph trend filtering.  Section
	\ref{sec:discussion} concludes with a discussion.
	As for notation, we write $X_A$ to extract the rows of
	a matrix $X \in \R^{m\times n}$ that correspond to a subset $A
	\subseteq \{1,\ldots m\}$, and $X_{-A}$ to extract the complementary
	rows.  We use a similar convention for vectors: $x_A$ and $x_{-A}$
	denote the components of a vector $x \in \R^m$ that correspond to the
	set $A$ and its complement, respectively.  We write $\row(X)$ and
	$\nul(X)$ for the row and null spaces of $X$,  respectively, and
	$X^\dag$ for the pseudoinverse of $X$, with $X^\dag=(X^\top X)^\dag
	X^\top$ when $X$ is rectangular.

	\section{Trend Filtering on Graphs}
	\label{sec:graphtrend}
	
	In this section, we motivate and formally define graph trend filtering. 
	
	\subsection{Review: Univariate Trend Filtering}
	\label{sec:utf}
	
	We begin by reviewing trend filtering in the
	univariate setting, where discrete difference
	operators play a central role.  Suppose that we observe
	$y=(y_1,\ldots y_n) \in \R^n$ across input
	locations $x=(x_1,\ldots x_n) \in \R^n$; for simplicity, suppose that
	the inputs are evenly spaced, say, $x=(1,\ldots n)$. Given an integer
	$k \geq 0$, the $k$th order trend filtering estimate
	\smash{$\hbeta=(\hbeta_1,\ldots \hbeta_n)$} is
	defined as
	\begin{equation}
	\label{eq:tf}
	\hbeta = \argmin_{\beta \in \R^n} \, \half\|y-\beta\|_2^2 + \lambda
	\|D^{(k+1)} \beta\|_1,
	\end{equation}
	where $\lambda \geq 0$ is a tuning parameter, and $D^{(k+1)}$
	is the discrete difference operator of order $k+1$.  When $k=0$,
	problem \eqref{eq:tf} employs the first difference operator,
	\begin{equation}
	\label{eq:d1}
	D^{(1)} = \left[\begin{array}{rrrrrr}
	-1 & 1 & 0 & \ldots & 0 \\
	0 & -1 & 1 & \ldots & 0 \\
	\vdots & & \ddots & \ddots & \\
	0 & 0 & \ldots & -1 & 1
	\end{array}\right].
	\end{equation}
	Therefore
	\smash{$\|D^{(1)}\beta\|_1 =
		\sum_{i=1}^{n-1} |\beta_{i+1}-\beta_i|$},
	and the 0th order trend filtering estimate in \eqref{eq:tf}
	reduces to the 1-dimensional fused lasso estimator
	\citep{fuse}, also called 1-dimensional total variation denoising
	\citep{tv}. For $k \geq 1$ the operator $D^{(k+1)}$ is defined
	recursively by
	\begin{equation}
	\label{eq:dk}
	D^{(k+1)} = D^{(1)} D^{(k)},
	\end{equation}
	with $D^{(1)}$ above denoting the $(n-k-1) \times (n-k)$ version of
	the first difference operator in \eqref{eq:d1}.  In words, $D^{(k+1)}$ is
	given by taking first differences of $k$th differences.  The
	interpretation is hence that problem \eqref{eq:tf} penalizes the
	changes in the $k$th discrete differences of the fitted trend.  The
	estimated components \smash{$\hbeta_1,\ldots\hbeta_n$} exhibit the
	form of a $k$th order piecewise polynomial function, evaluated over
	the input locations $x_1,\ldots x_n$.  This can be formally verified
	\citep{trendfilter,fallfact} by examining a continuous-time analog of
	\eqref{eq:tf}.
	
	\subsection{Trend Filtering over Graphs}
	\label{sec:gtf}
	
	Let $G=(V,E)$ be an graph, with vertices
	$V=\cbr{1,\ldots n}$ and undirected edges $E=\cbr{e_1,\ldots e_m}$,
	and suppose that we observe $y=(y_1,\ldots y_n) \in \R^n$ over the
	nodes.  Following the univariate definition in
	\eqref{eq:tf}, we define the $k$th order {\it graph trend filtering}
	(GTF) estimate \smash{$\hbeta=(\hbeta_1,\ldots \hbeta_n)$} by
	\begin{equation}
	\label{eq:gtf}
	\hbeta = \argmin_{\beta \in \R^n} \,
	\half\|y-\beta\|_2^2 + \lambda \|\op^{(k+1)}\beta\|_1.
	\end{equation}
	In broad terms, this problem (like univariate trend filtering) is a
	type of generalized lasso problem \citep{genlasso}, in which the
	penalty matrix {$\op^{(k+1)}$} is a suitably defined {\it graph
		difference operator}, of order $k+1$. In fact, the novelty in our
	proposal lies entirely within the definition of this operator.
	
	When $k=0$, we define first order graph difference
	operator $\op^{(1)}$ in such a way it yields the
	graph-equivalent of a penalty on local differences:
	\begin{equation*}
	\|\op^{(1)}\beta\|_1 = \sum_{(i,j) \in E} |\beta_i-\beta_j|.
	\end{equation*}
	so that the penalty term in \eqref{eq:gtf} sums the
	absolute differences across connected nodes in $G$. To
	achieve this, we let $\op^{(1)} \in \cbr{-1, 0, 1}^{m \times n}$ be
	the oriented incidence matrix of the graph $G$, containing one row
	for each edge in the graph; specifically, if $e_\ell=(i,j)$, then
	$\op^{(1)}$ has $\ell$th row
	\begin{equation}
	\label{eq:gd1}
	\op^{(1)}_\ell = (0, \ldots
	\underset{\substack{ \;\;\uparrow \\ \;\;i }}{-1}, \ldots
	\underset{\substack{\uparrow \\ j}}{1}, \ldots 0),
	\end{equation}
	where the orientations of signs are arbitrary. Like trend filtering 
	in the 1d setting, the 0th order graph trend filtering estimate
	coincides with the fused lasso (total variation regularized) estimate
	over $G$ \citep{hoefling,genlasso,edgelasso}.  
	
	For $k \geq 1$, we use a recursion to
	define the higher order graph difference operators, in a
	manner similar to the univariate case.
	The recursion alternates in multiplying by the first difference
	operator $\op^{(1)}$ and its transpose (taking into account that
	this matrix not square):
	\begin{equation}
	\label{eq:gdk}
	\op^{(k+1)} =
	\begin{cases}
	(\op^{(1)})^\top \op^{(k)} = L^{\frac{k+1}{2}} & \text{for odd $k$} \\
	\op^{(1)} \op^{(k)} = D L^{\frac{k}{2}}  & \text{for even $k$}.
	\end{cases}
	\end{equation}
	Above, we  abbreviated the oriented incidence matrix $\op^{(1)}$ by
	$D$ of $G$, and exploited the fact that $L=D^\top D$ is one
	representation for the graph Laplacian matrix.  Note that
	$\op^{(k+1)} \in \R^{n \times n}$ for odd $k$, and $\op^{(k+1)} \in
	\R^{m \times n}$ for even $k$. 
	
	An important point is that our defined graph difference operators
	\eqref{eq:gd1}, \eqref{eq:gdk} reduce to the univariate ones
	\eqref{eq:d1}, \eqref{eq:dk} in the case of a chain graph (in which
	$V=\{1,\ldots n\}$ and $E=\{(i,i+1): i=1,\ldots n-1\}$), modulo
	boundary terms. That is, when $k$ is even, if one removes the
	first $k/2$ rows and last $k/2$ rows of $\op^{(k+1)}$ for the chain
	graph, then one recovers $D^{(k+1)}$; when $k$ is odd, if one removes
	the first and last $(k+1)/2$ rows of $\op^{(k+1)}$ for the chain
	graph, then one recovers $D^{(k+1)}$.  Further intuition for our graph
	difference operators is given next.
	
	\subsection{Piecewise Polynomials over Graphs}
	
	We give some insight for our definition of graph
	difference operators \eqref{eq:gd1}, \eqref{eq:gdk}, based on the
	idea of piecewise polynomials over graphs. In the univariate case, as
	described in Section \ref{sec:utf},
	sparsity of $\beta$ under the difference operator $D^{(k+1)}$ implies
	a specific $k$th order piecewise polynomial structure for the
	components of $\beta$ \citep{trendfilter,fallfact}.  Since the
	components of $\beta$ correspond to (real-valued) input locations
	$x=(x_1,\ldots x_n)$, the interpretation of a piecewise
	polynomial here  is unambiguous.  But for a graph, one might ask:
	does sparsity of $\op^{(k+1)}\beta$ mean that the components of
	$\beta$ are piecewise polynomial?  And what does the latter even mean,
	as the components of  $\beta$ are defined over the nodes?  To address
	these questions, we intuitively {\it define} a piecewise polynomial
	over a graph, and show that it implies sparsity under our constructed
	graph difference operators.
	
	\begin{itemize}
		\item {\bf Piecewise constant ($k=0$):} we say that a signal $\beta$
		is piecewise constant over a graph $G$ if many of the
		differences $\beta_i-\beta_j$ are zero across edges $(i,j) \in E$ in
		$G$. Note that this is exactly the property associated with
		sparsity of $\op^{(1)}\beta$, since $\op^{(1)}=D$, the oriented
		incidence matrix of $G$.
		\item {\bf Piecewise linear ($k=1$):}  we say that a signal $\beta$ has a
		piecewise linear structure over $G$ if $\beta$ satisfies
		\begin{equation*}
		\beta_i - \frac{1}{n_i} \sum_{(i,j)  \in E} \beta_j=0,
		\end{equation*}
		for many nodes $i \in V$, where $n_i$ is the number of nodes
		adjacent to $i$. In words, we are requiring that the signal
		components can be linearly interpolated from its neighboring values
		at many nodes in the graph.  This is quite a natural notion of
		(piecewise) linearity: requiring that $\beta_i$ be equal to the
		average of its neighboring values would enforce linearity at
		$\beta_i$ under an appropriate embedding of the points in Euclidean
		space.  Again, this is precisely the same as requiring
		$\op^{(2)}\beta$ to be sparse, since $\op^{(2)}=L$, the graph
		Laplacian.
		\item {\bf Piecewise polynomial ($k \geq 2$):}
		We say that $\beta$ has a piecewise quadratic structure
		over $G$ if the first differences $\alpha_i-\alpha_j$ of
		the second
		differences $\alpha=\op^{(2)}\beta$ are mostly zero, over
		edges $(i,j) \in E$.  Likewise, $\beta$ has a piecewise cubic
		structure over $G$ if the second differences
		\smash{$\alpha_i-\frac{1}{n_i}\sum_{(i,j)  \in E} \alpha_j$}
		of the second differences $\alpha=\op^{(2)}\beta$
		are mostly zero, over nodes $i \in V$. This argument extends,
		alternating between leading first and second differences for even
		and odd $k$.  Sparsity of $\op^{(k+1)} \beta$ in either case exactly
		corresponds to many of these differences being zero, by
		construction.
	\end{itemize}
	
	In Figure \ref{fig:grid}, we illustrate the graph trend filtering
	estimator on a 2d grid graph of dimension $20\times 20$, i.e., a grid
	graph with 400 nodes and 740 edges.  For each of the cases $k=0,1,2$,
	we generated synthetic measurements over the grid, and computed a
	GTF estimate of the corresponding order.  We chose the 2d grid setting
	so that the piecewise polynomial nature of GTF estimates could be
	visualized.  Below each plot, the utilized graph trend filtering
	penalty is displayed in more explicit detail.
	
	\begin{figure}[htbp]
		\centering
		\begin{tabular}{cc}
			GTF with $k=0$ & GTF with $k=1$ \\
			\includegraphics[width=0.45\textwidth]{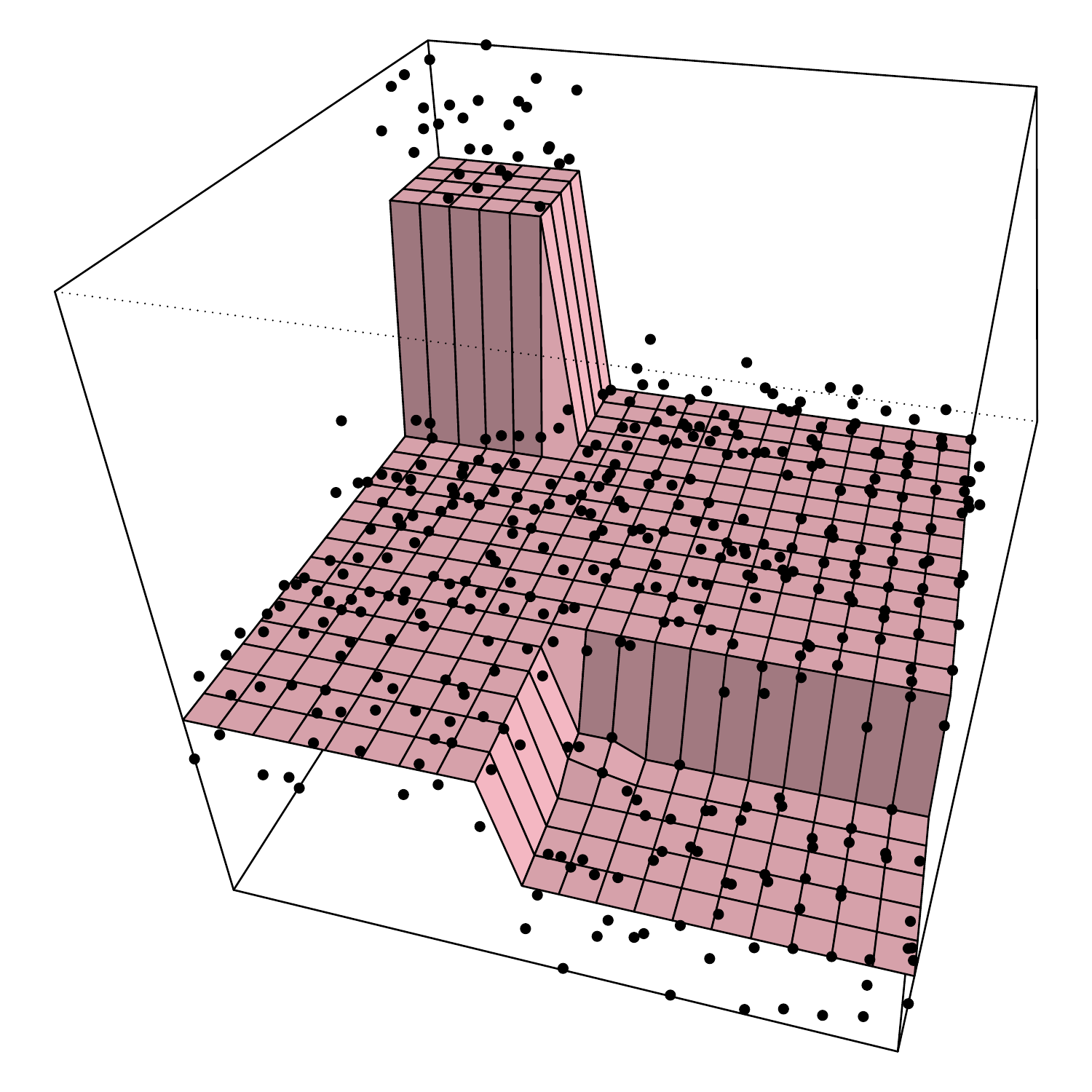} &
			\includegraphics[width=0.45\textwidth]{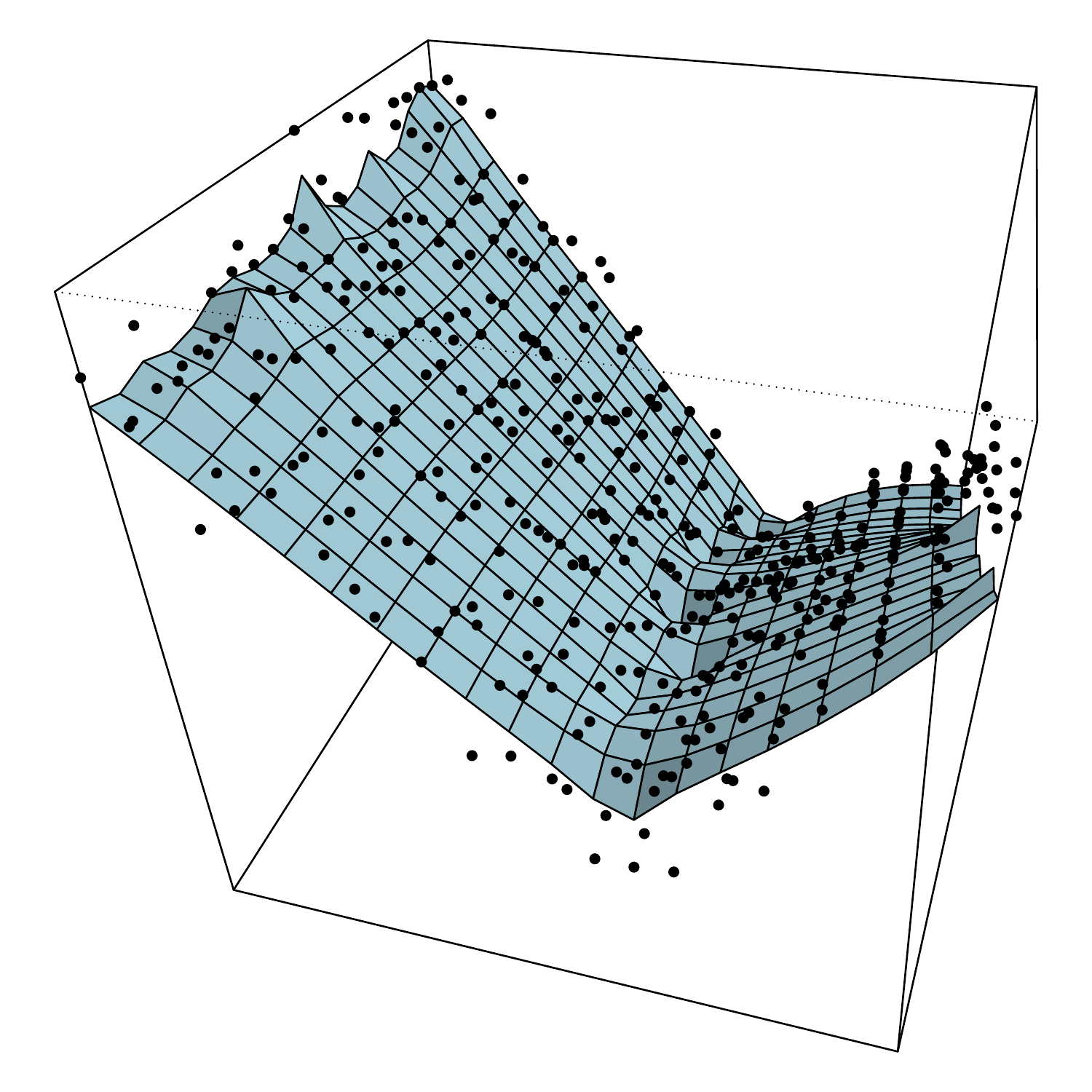} \\
			Penalty: $\displaystyle \sum_{(i,j)\in E} |\beta_i-\beta_j|$ &
			$\displaystyle \sum_{i=1}^n n_i
			\bigg|\beta_i-\frac{1}{n_i} \sum_{j : (i,j)\in E} \beta_j \bigg|$
		\end{tabular}
		
		\bigskip
		GTF with $k=2$ \\
		\includegraphics[width=0.45\textwidth]{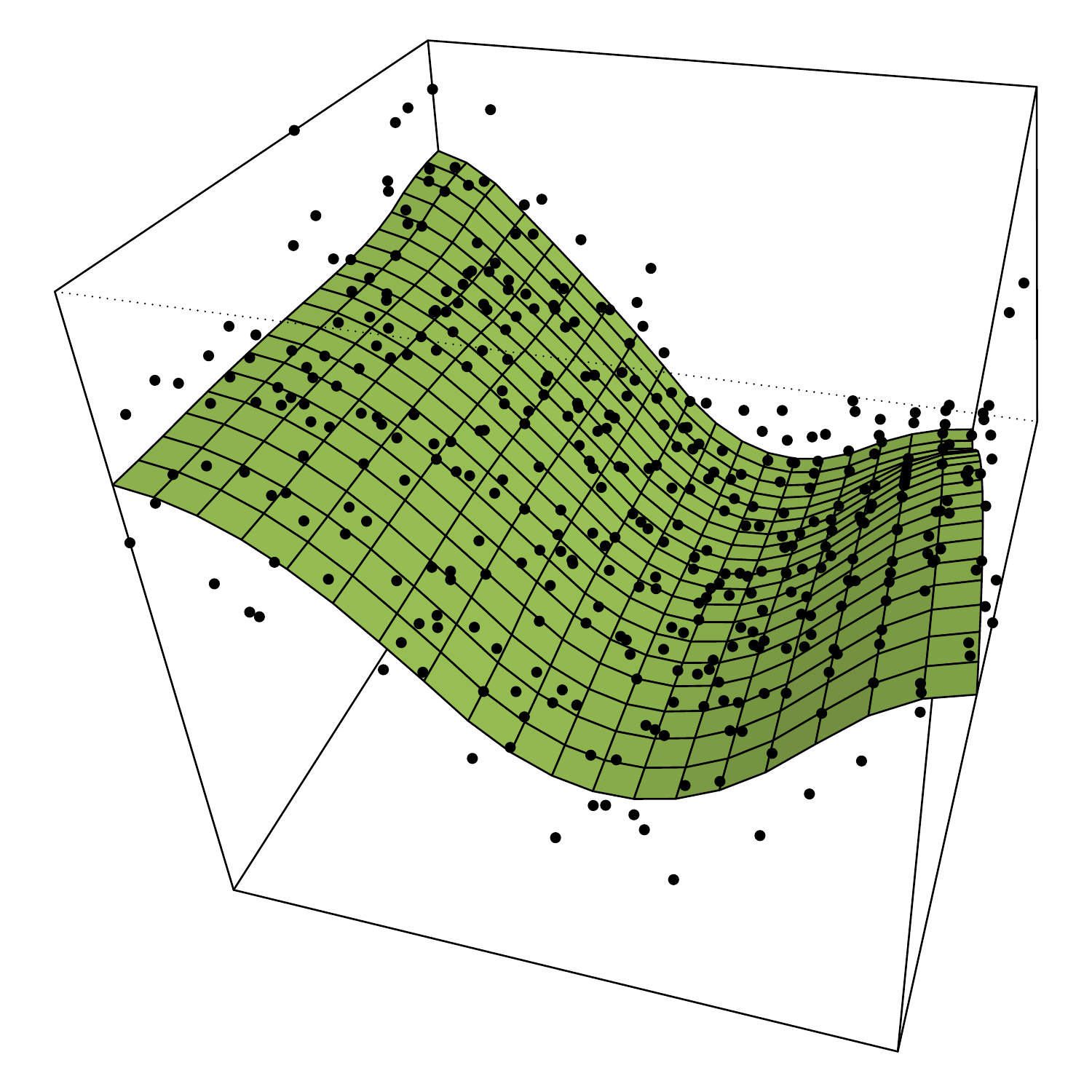} \\
		$\displaystyle
		\sum_{(i,j) \in E} \Bigg| n_i\bigg(\beta_i-\frac{1}{n_i}
		\sum_{\ell : (i,\ell)\in E}  \beta_\ell \bigg) - 
		n_j \bigg(\beta_j-\frac{1}{n_j}
		\sum_{\ell : (j,\ell)\in E} \beta_\ell \bigg)\Bigg|$
		
		\bigskip
		\caption{Graph trend filtering estimates of orders $k=0,1,2$ on a 2d
			grid. The utilized $\ell_1$ graph difference penalties are shown in
			elementwise detail below each plot (first, second, and third order
			graph differences).}
		\label{fig:grid}
	\end{figure}
	
	\subsection{$\ell_1$ versus $\ell_2$ Regularization}
	
	It is instructive to compare the graph trend filtering
	estimator, as defined in \eqref{eq:gtf}, \eqref{eq:gd1},
	\eqref{eq:gdk} to Laplacian smoothing \citep{graphlap}.  Standard
	Laplacian smoothing uses the same least squares loss as in
	\eqref{eq:gtf}, but replaces the penalty term with $\beta^\top L
	\beta$.  A natural generalization would be to allow for a power of the
	Laplacian matrix $L$, and define $k$th order graph Laplacian smoothing
	according to
	\begin{equation}
	\label{eq:gls}
	\hbeta = \argmin_{\beta \in \R^n} \,
	\|y-\beta\|_2^2 + \lambda \beta^\top L^{k+1} \beta.
	\end{equation}
	The above penalty term can be written as
	\smash{$\|L^{(k+1)/2} \beta\|_2^2$} for odd $k$, and
	\smash{$\|DL^{k/2}\beta\|_2^2$} for even $k$; i.e., this
	penalty is exactly \smash{$\|\op^{(k+1)} \beta\|_2^2$} for the
	graph difference operator \smash{$\op^{(k+1)}$} defined previously.
	
	As we can see, the critical difference between graph Laplacian
	smoothing \eqref{eq:gls} and graph trend filtering \eqref{eq:gtf} lies
	in the choice of penalty norm: $\ell_2$ in the former, and $\ell_1$ in
	the latter. The effect of the
	$\ell_1$ penalty is that the GTF program can set
	many (higher order) graph differences to zero exactly, and leave
	others at large nonzero values; i.e., the GTF estimate can
	simultaneously be smooth in some parts of the graph and wiggly in
	others.  On the other hand, due to the (squared) $\ell_2$ penalty,
	the graph Laplacian smoother cannot set any graph
	differences to zero exactly, and roughly speaking, must choose between
	making all graph differences small or large.  The relevant
	analogy here is the comparison between the lasso and ridge regression,
	or univariate trend filtering and smoothing splines
	\citep{trendfilter}, and the suggestion is that GTF can
	adapt to the proper local degree of smoothness, while Laplacian
	smoothing cannot.  This is strongly supported by the examples given
	throughout this paper.
	
	\subsection{Related Work}

	Some authors from the signal processing community, e.g.,
	\citet{bredies2010total,setzer2011infimal}, have studied total
	generalized variation (TGV), a higher order variant of total
	variation regularization.  Moreover, several discrete versions of
	these operators have been proposed. They are often similar to the
	construction that we have.
	However, the focus of these works is mostly on how well a discrete 
	functional approximates its continuous counterpart. This is quite
	different from our concern, as a signal on a graph (say a social
	network) may not have any meaningful continuous-space embedding at
	all. In addition, we are not aware of any study on the statistical
	properties of these regularizers. In fact, our theoretical analysis in
	Section~\ref{sec:theory} may be extended to these methods too.

	\section{Properties and Extensions}
	\label{sec:properties}
	
	We first study the structure of graph trend filtering
	estimates, then discuss interpretations and extensions.
	
	\subsection{Basic Structure and Degrees of Freedom}
	
	We describe the basic structure of graph trend filtering
	estimates and present an unbiased estimate for their degrees of
	freedom.  Let the tuning parameter $\lambda$ be
	arbitrary but fixed. By virtue of the $\ell_1$ penalty in
	\eqref{eq:gtf}, the solution \smash{$\hbeta$} satisfies
	$\supp(\op^{(k+1)} \hbeta) = A$ for some active set $A$
	(typically $A$ is smaller when $\lambda$ is larger).  Trivially,
	we can reexpress this as \smash{$\op^{(k+1)}_{-A} \hbeta = 0$}, or
	\smash{$\hbeta \in \nul(\op^{(k+1)}_{-A})$}.  Therefore, the basic
	structure of GTF estimates is revealed by analyzing the null space of
	the suboperator \smash{$\op^{(k+1)}_{-A}$}.
	
	\begin{lemma}
		\label{lem:nullspace}
		Assume without a loss of generality that $G$ is connected
		(otherwise the results apply to each connected component of $G$).
		Let $D,L$ be the oriented incidence matrix and Laplacian matrix of
		$G$.
		For even $k$, let $A \subseteq \{1,\ldots m\}$, and let $G_{-A}$
		denote the subgraph induced by removing the edges indexed by $A$
		(i.e., removing edges $e_\ell$, $\ell \in A$).  Let $C_1,\ldots C_s$
		be the connected components of $G_{-A}$.  Then
		\begin{equation*}
		\nul(\op^{(k+1)}_{-A}) = \spa\{\mathds{1}\} +
		(L^\dag)^{\frac{k}{2}} \spa\{\mathds{1}_{C_1},\ldots
		\mathds{1}_{C_s}\},
		\end{equation*}
		where $\mathds{1}=(1,\ldots 1) \in \R^n$, and
		$\mathds{1}_{C_1},\ldots \mathds{1}_{C_s} \in \R^n$ are the indicator
		vectors over connected components.  For odd $k$, let
		$A \subseteq \{1,\ldots n\}$.  Then
		\begin{equation*}
		\nul(\op^{(k+1)}_{-A}) = \spa\{\mathds{1}\} +
		\{ (L^\dag)^{\frac{k+1}{2}} v : v_{-A}=0 \}.
		\end{equation*}
	\end{lemma}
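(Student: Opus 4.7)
The plan is to split into two cases by the parity of $k$. In each case, I would use the recursive definition \eqref{eq:gdk} to factor $\op^{(k+1)}_{-A}$ as a composition, identify the null space of the outer factor via standard graph-theoretic facts, and then invert the inner factor ($L^{k/2}$ or $L^{(k+1)/2}$) using the Moore--Penrose pseudoinverse, absorbing the one-dimensional freedom from $\nul(L) = \spa\{\mathds{1}\}$ (which is where the connectedness assumption enters) into the $\spa\{\mathds{1}\}$ summand.

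For even $k$, the recursion gives $\op^{(k+1)}_{-A} = D_{-A} L^{k/2}$, where $D_{-A}$ is the oriented incidence matrix of the subgraph $G_{-A}$ obtained by deleting the edges in $A$. A classical fact about oriented incidence matrices then yields $\nul(D_{-A}) = \spa\{\mathds{1}_{C_1}, \ldots, \mathds{1}_{C_s}\}$. Thus $\beta \in \nul(\op^{(k+1)}_{-A})$ iff $L^{k/2}\beta$ lies in this span, and writing $\beta = (L^\dag)^{k/2} L^{k/2}\beta + (\beta - (L^\dag)^{k/2} L^{k/2}\beta)$ decomposes $\beta$ as an element of $(L^\dag)^{k/2}\spa\{\mathds{1}_{C_i}\}$ plus an element of $\nul(L^{k/2}) = \spa\{\mathds{1}\}$. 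For the reverse inclusion, one must check that for any $v$ in $\spa\{\mathds{1}_{C_i}\}$, the vector $L^{k/2}(L^\dag)^{k/2} v$ still lies in this span; since this map is the projection onto $\col(L) = \mathds{1}^\perp$ and the constant vector $\mathds{1} = \sum_i \mathds{1}_{C_i}$ is already in $\spa\{\mathds{1}_{C_i}\}$, the projection stays in the span.

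For odd $k$, the recursion gives $\op^{(k+1)}_{-A} = (L^{(k+1)/2})_{-A}$. Then $\beta \in \nul(\op^{(k+1)}_{-A})$ iff $L^{(k+1)/2}\beta$ has support in $A$, i.e., lies in $V = \{v : v_{-A} = 0\}$. The same pseudoinverse decomposition of $\beta$ yields the claimed expression $\spa\{\mathds{1}\} + \{(L^\dag)^{(k+1)/2}v : v_{-A} = 0\}$. The underlying arithmetic relies on the identity $L^j (L^\dag)^j = I - \tfrac{1}{n}\mathds{1}\mathds{1}^\top$, valid for any $j \ge 1$ under connectedness, together with $(L^\dag)^j \mathds{1} = 0$.

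The main obstacle is bookkeeping rather than deep mathematics: one has to verify that ``apply $(L^\dag)^j$ to the constraint subspace $V$ and add $\spa\{\mathds{1}\}$'' recovers exactly the preimage of $V$ under $L^j$, with no spurious elements and no loss. This hinges on the interaction between $V$ and $\col(L) = \mathds{1}^\perp$: in the even case it works cleanly because $\mathds{1}$ itself lies in $V$, so projecting onto $\mathds{1}^\perp$ does not exit $V$; in the odd case the analogous check has to be done via the explicit pseudoinverse identity above. Once both parities are handled, the recursive definition \eqref{eq:gdk} glues everything together.
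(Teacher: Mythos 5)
Your even-$k$ argument is complete and takes the same route as the paper's (factor $\op^{(k+1)}_{-A}=D_{-A}L^{k/2}$, use $\nul(D_{-A})=\spa\{\mathds{1}_{C_1},\ldots,\mathds{1}_{C_s}\}$, and invert $L^{k/2}$ on $\mathds{1}^\perp$ via the pseudoinverse); your observation that the reverse inclusion works because $\mathds{1}=\sum_i\mathds{1}_{C_i}$ already lies in $\nul(D_{-A})$ is exactly the right check. The gap is in the odd case, at precisely the point you defer to ``the analogous check via the explicit pseudoinverse identity'': that check does not go through. For $v$ with $v_{-A}=0$, your own identity gives $L^{(k+1)/2}(L^\dag)^{(k+1)/2}v = v - \tfrac{\mathds{1}^\top v}{n}\mathds{1}$, whose $-A$ components equal $-\tfrac{\mathds{1}^\top v}{n}\mathds{1}_{-A}$; this is nonzero whenever $\mathds{1}^\top v\neq 0$ and $A\neq\{1,\ldots,n\}$. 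Unlike the even case, the target subspace $\{w: w_{-A}=0\}$ does not contain $\mathds{1}$, so projecting onto $\mathds{1}^\perp$ does exit it. Concretely, for the chain on $3$ nodes with $k=1$ and $A=\{1\}$, one computes $\nul(L_{-A})=\spa\{\mathds{1}\}$ (dimension $1$), while $\spa\{\mathds{1}\}+\{L^\dag v: v_{-A}=0\}=\spa\{\mathds{1},\,L^\dag e_1\}$ has dimension $2$.

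What this reveals is that the difficulty is not yours alone: the displayed identity for odd $k$ holds only as the inclusion $\subseteq$ (your forward direction, which is correct), and becomes an equality once the constraint $\mathds{1}^\top v=0$ is added to the set on the right---a constraint that is automatic in the forward direction anyway, since $v=L^{(k+1)/2}\beta$ always lies in $\col(L)=\mathds{1}^\perp$. With that correction the dimension count is $1+(|A|-1)=|A|$, matching the $\max\{|A|,1\}$ used in Lemma \ref{lem:df}, whereas the uncorrected set generically has dimension $|A|+1$. The paper's own proof dismisses the odd case with ``the arguments are similar,'' so it does not surface this issue either; a complete proof should either carry the extra constraint $\mathds{1}^\top v=0$ explicitly or prove only the one inclusion that is true.
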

	
	The proof of Lemma \ref{lem:nullspace} appears in the Appendix. 
	The lemma is useful for a few reasons.  First, as motivated 
	above, it describes the coarse structure of GTF solutions.  When
	$k=0$, we can see (as
	\smash{$(L^\dag)^{0/2}=I$}) that \smash{$\hbeta$} will indeed be 
	piecewise constant over groups of nodes $C_1,\ldots C_s$ of $G$.  For
	$k=2,4,\ldots$, this structure is smoothed by multiplying such
	piecewise constant  levels by \smash{$(L^\dag)^{k/2}$}.
	Meanwhile, for $k=1,3\ldots$, the structure of the GTF estimate is
	based on assigning nonzero values to a subset $A$ of nodes, and then
	smoothing through multiplication by
	\smash{$(L^\dag)^{(k+1)/2}$}. Both of these smoothing operations,
	which depend on $L^\dag$, have interesting interpretations in terms of
	to the electrical network perspective for graphs. This is developed in
	the next subsection.
	
	
	A second use of Lemma \ref{lem:nullspace} is that it leads to a simple
	expression for the degrees of freedom, i.e., the effective number of
	parameters, of the
	GTF estimate \smash{$\hbeta$}. From results on generalized lasso
	problems \citep{genlasso,lassodf2}, we have \smash{$\df(\hbeta) =
		\E[\nuli(\op^{(k+1)}_{-A})]$}, with $A$ denoting the support of
	\smash{$\op^{(k+1)} \hbeta$}, and $\nuli(X)$ the dimension of the null
	space of a matrix $X$.  Applying Lemma \ref{lem:nullspace} then gives
	the following.
	
	\begin{lemma}
		\label{lem:df}
		Assume that $G$ is connected. Let \smash{$\hbeta$} denote the
		GTF estimate at a fixed but arbitrary
		value of $\lambda$. Under the normal error model $y\sim
		\cN(\beta_0,\sigma^2 I)$, the GTF estimate \smash{$\hbeta$} has
		degrees of freedom given by
		\begin{align}
		\nonumber
		\df(\hbeta) =
		\begin{cases}
		\E\sbr{\max\cbr{|A|,1}} &
		\text{odd $k$}, \\
		\E\sbr{
			\text{\small number of connected components of }\, G_{-A}} &
		\text{even $k$}.
		\end{cases}
		\end{align}
		Here \smash{$A=\supp(\op^{(k+1)}\hbeta)$} denotes the active set of
		\smash{$\hbeta$}.
	\end{lemma}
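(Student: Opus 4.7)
The plan is to combine a standard generalized-lasso degrees-of-freedom identity with the null-space description in Lemma \ref{lem:nullspace}, then perform a dimension count. Specifically, for a generalized lasso problem $\min_\beta \tfrac12\|y-\beta\|_2^2 + \lambda \|D\beta\|_1$ under $y \sim \cN(\beta_0, \sigma^2 I)$, the general-position results of Tibshirani and Taylor (2011, 2012) give, via Stein's lemma, $\df(\hbeta) = \E[\nuli(D_{-A})]$ with $A = \supp(D\hbeta)$. Specialized to GTF with $D = \op^{(k+1)}$, this yields $\df(\hbeta) = \E[\nuli(\op^{(k+1)}_{-A})]$, so the task reduces to computing the nullity of $\op^{(k+1)}_{-A}$ almost surely in $y$, for which Lemma \ref{lem:nullspace} supplies an explicit expression.

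For even $k$, Lemma \ref{lem:nullspace} gives $\nul(\op^{(k+1)}_{-A}) = \spa\{\mathds{1}\} + (L^\dag)^{k/2}\spa\{\mathds{1}_{C_1}, \ldots, \mathds{1}_{C_s}\}$, where $C_1, \ldots, C_s$ are the connected components of $G_{-A}$. The indicator vectors $\mathds{1}_{C_i}$ are pairwise orthogonal, spanning an $s$-dimensional subspace that already contains $\mathds{1} = \sum_i \mathds{1}_{C_i}$. When $k = 0$ the operator $(L^\dag)^{k/2}$ is the identity and the nullity is immediately $s$. When $k \geq 2$, connectedness of $G$ forces $\nul((L^\dag)^{k/2}) = \spa\{\mathds{1}\}$, so by rank--nullity the image of the $\mathds{1}_{C_i}$-span under $(L^\dag)^{k/2}$ has dimension $s - 1$; since $\mathds{1} \in \nul(L^\dag)$ is orthogonal to $\col(L^\dag)$ and hence to this image, adjoining $\spa\{\mathds{1}\}$ restores the total to $s$. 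In either subcase the nullity equals the number of connected components of $G_{-A}$, as claimed.

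For odd $k$, Lemma \ref{lem:nullspace} writes $\nul(\op^{(k+1)}_{-A}) = \spa\{\mathds{1}\} + \{(L^\dag)^{(k+1)/2} v : v_{-A} = 0\}$, with $A$ now indexing nodes. The subspace $\{v : v_{-A} = 0\}$ has dimension $|A|$; the operator $(L^\dag)^{(k+1)/2}$ has kernel $\spa\{\mathds{1}\}$ and image contained in $\col(L) = \mathds{1}^\perp$. Careful bookkeeping — accounting for the intersection of the parametrizing subspace with $\spa\{\mathds{1}\}$ and for the constraint that $v$ effectively lie in $\mathds{1}^\perp$ in order to produce a genuine null-space element — shows the image contributes dimension $|A| - 1$ when $|A| \geq 1$ and $0$ when $|A| = 0$. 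Since $\mathds{1}$ is linearly independent from any vector in $\mathds{1}^\perp$, adjoining $\spa\{\mathds{1}\}$ yields total nullity $\max\{|A|, 1\}$. The main obstacle in the whole argument is precisely this dimension bookkeeping — tracking how $\mathds{1}$ sits simultaneously in the parametrizing subspace (or its image) and in the kernel of every positive power of $L^\dag$; once that is handled, rank--nullity finishes the proof, and taking expectation over $y$ concludes.
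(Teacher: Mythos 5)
Your proposal is correct and follows essentially the same route as the paper, whose entire argument is the two sentences preceding the lemma: invoke the generalized-lasso degrees-of-freedom identity $\df(\hbeta)=\E[\nuli(\op^{(k+1)}_{-A})]$ from Tibshirani--Taylor and then apply Lemma \ref{lem:nullspace}. You additionally supply the dimension count that the paper leaves implicit, and you correctly flag the one subtle point in the odd-$k$ case (the parametrizing vector $v$ must effectively lie in $\mathds{1}^\perp$, which is what drops the contribution from $|A|$ to $|A|-1$ before $\spa\{\mathds{1}\}$ is adjoined).
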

	
	As a result of Lemma \ref{lem:df}, we can form simple unbiased
	estimate for \smash{$\df(\hbeta)$}; for $k$ odd, this is
	$\max\{|A|,1\}$, and for $k$ even, this is the number of connected
	components of $G_{-A}$, where $A$ is the support of
	\smash{$\op^{(k+1)}\hbeta$}.  When reporting degrees of freedom for
	graph trend filtering (as in the example in the introduction), we use
	these unbiased estimates.
	
	\subsection{Electrical Network Interpretation}
	
	Lemma \ref{lem:nullspace} reveals a mathematical
	structure for GTF estimates \smash{$\hbeta$}, which satisfy
	\smash{$\hbeta \in \nul(\op^{(k+1)}_{-A})$} for some set $A$.  It is 
	interesting to interpret the results using the electrical network
	perspective for graphs \citep{lxb}.  In this perspective, we imagine 
	replacing each edge in the graph with a resistor of value 1.  If $u
	\in \R^n$ describes how much current is going in at
	each node in the graph, then $v=Lu$ describes the induced
	voltage at each node. Provided that $\mathds{1}^\top c = 0$, which means
	that the total accumulation of current in the network is 0, we can
	solve for the current values from the voltage values: $u=L^\dag v$.
	
	The odd case in Lemma \ref{lem:nullspace} asserts that
	\begin{equation*}
	\nul(\op^{(k+1)}_{-A}) = \spa\{\mathds{1}\} +
	\{ (L^\dag)^{\frac{k+1}{2}} v : v_{-A}=0 \}.
	\end{equation*}
	For $k=1$, this says that GTF estimates are formed by assigning a
	sparse number of nodes in the graph a nonzero voltage $v$, then
	solving for the induced current $L^\dag v$ (and shifting this entire
	current vector by a constant amount).  For $k=3$, we assign a sparse
	number of nodes a nonzero voltage, solve for the induced current, and
	then {\it repeat this}: we relabel the induced current as input
	voltages to the nodes, and compute the new induced current.  This
	process is again iterated for $k=5,7,\ldots$.
	
	
	The even case in Lemma \ref{lem:nullspace} asserts that
	\begin{equation*}
	\nul(\op^{(k+1)}_{-A}) = \spa\{\mathds{1}\} +
	(L^\dag)^{\frac{k}{2}} \spa\{\mathds{1}_{C_1},\ldots
	\mathds{1}_{C_s}\}.
	\end{equation*}
	For $k=2$, this result says that GTF estimates are given by choosing a
	partition $C_1,\ldots C_s$ of the nodes, and assigning a constant
	input voltage to each element of the partition.  We then solve for the
	induced current (and potentially shift this by an overall constant
	amount). The process is iterated for $k=4,6,\ldots$ by relabeling the
	induced current as input voltage.
	
	The comparison between the
	structure of estimates for $k=2$ and $k=3$ is informative: in a sense,
	the above tells us that 2nd order GTF estimates will be
	{\it smoother} than 3rd order estimates, as a sparse input
	voltage vector need not induce a
	current that is piecewise constant over nodes in the graph. For
	example, an input voltage vector that has only a few nodes with very
	large nonzero values will induce a current that is peaked around these
	nodes, but not piecewise constant.
	
	\subsection{Extensions}
	
	Several extensions of the proposed graph trend filtering model are
	possible. Trend filtering over a weighted graph, for example, could be
	performed by using a properly weighted version of the edge incidence
	matrix in \eqref{eq:gd1}, and carrying forward the same recursion in
	\eqref{eq:gdk} for the higher order difference operators.
	As another example, the Gaussian regression loss in \eqref{eq:gtf}
	could be changed to another suitable likelihood-derived losses
	in order to accommodate a different data type for $y$, say, logistic
	regression loss for binary data, or Poisson regression loss for count
	data.
	
	In Section \ref{sec:madgtf}, we explore a modest extension of GTF,
	where we add a strongly convex prior term to the criterion in
	\eqref{eq:gtf} to assist in performing graph-based imputation from
	partially observed data over the nodes.
	In Section
	\ref{sec:sparsegtf}, we investigate a modification of the proposed
	regularization scheme, where we add a pure $\ell_1$ penalty on $\beta$
	in \eqref{eq:gtf}, hence forming a sparse variant of GTF.  Other
	potentially interesting penalty extensions include: mixing graph
	difference penalties of various orders, and tying together several
	denoising tasks with a group penalty.
	Extensions such as these are easily built, recall, as a result of the
	analysis framework used by the GTF program, wherein the estimate
	defined through direct regularization via an analyzing operator, the
	$\ell_1$-based graph difference penalty
	\smash{$\|\op^{(k+1)}\beta\|_1$}.

	\section{Computation}
	\label{sec:computation}
	
	Graph trend filtering is defined by a convex optimization problem
	\eqref{eq:gtf}.  In principle this means that, at least for small
	or moderately sized problems, its solutions can be reliably computed 
	using a variety of standard algorithms. In order to handle
	larger scale problems, we describe two specialized algorithms that
	improve on generic procedures by taking advantage of the
	structure of \smash{$\op^{(k+1)}$}.
	
	\subsection{A Fast ADMM Algorithm}
	
	We reparametrize \eqref{eq:gtf} by introducing auxiliary variables, so
	that we can apply ADMM. For even $k$, we use a special transformation
	that is critical for fast computation (following \citet{fasttf} in
	univariate trend filtering); for odd $k$, this is not possible.  The
	reparametrizations for even and odd $k$ are
	\begin{align*}
	&\min_{\beta,z \in \R^n} \, \half \|y-\beta\|_2^2 + \lambda \|Dz\|_1
	\;\,\st\;\, z=L^{\frac{k}{2}} x, \\
	&\min_{\beta, z\in\R^n} \, \half \|y-\beta\|_2^2 + \lambda \|z\|_1
	\;\,\st\;\, z=L^{\frac{k+1}{2}} x,
	\end{align*}
	respectively. Recall that $D$ is the oriented incidence matrix and $L$
	is the graph Laplacian. The augmented Lagrangian is
	\begin{equation*}
	\half \|y-\beta\|^2_2 + \lambda\|S z\|_1 + \frac{\rho}{2} \| z-L^q
	\beta + u\|_2^2 -\frac{\rho}{2}\|u\|_2^2,
	\end{equation*}
	where $S=D$ or $S=I$ depending on whether $k$ is even or odd, and
	likewise $q=k/2$ or $q=(k+1)/2$.  ADMM then proceeds by iteratively
	minimizing the augmented Lagrangian over $\beta$, minimizing over $z$,
	and performing a dual update over $u$.  The $\beta$ and $z$ updates
	are of the form, for some $b$,
	\begin{align}
	\label{eq:admm_linearsystem}
	&\beta \leftarrow (I+\rho L^{2q})^{-1} b, \\
	\label{eq:admm_prox}
	&z \leftarrow \argmin_{x \in \R^n} \, \half \|b-x\|_2^2 +
	\frac{\lambda}{\rho} \|S x\|_1,
	\end{align}
	The linear system in
	\eqref{eq:admm_linearsystem} is well-conditioned, sparse, and can
	be solved efficiently using the preconditioned conjugate gradient
	method. This involves only multiplication with Laplacian matrices.
	For a small enough choices of $\rho>0$ (the augmented Lagrangian 
	parameter), the system in \eqref{eq:admm_linearsystem} is
	diagonally dominant, special Laplacian/SDD solvers can be applied,
	which run in almost linear time
	\citep{spielmanTeng2004,koutis2011,kelner2013}.  
	
	For $S=I$, the update in \eqref{eq:admm_prox} is simply given by
	soft-thresholding, and for $S=D$, it is given by
	graph TV denoising, i.e., given by solving a graph fused lasso
	problem.  Note that this subproblem has the exact structure of
	the graph trend filtering problem \eqref{eq:gtf} with $k=0$.  A direct
	approach for graph TV denoising is available based on parametric
	max-flow \citep{chambolle2009total}, and this algorithm is empirically
	much faster than its worst-case complexity
	\citep{boykov2004experimental}.  In the special case that the
	underlying graph is a grid, a promising alternative method employs
	proximal stacking techniques \citep{barberosra2014}.
	
	\subsection{A Fast Newton Method}
	
	As an alternative to ADMM, a projected Newton-type method
	\citep{bertsekas1982projected,barbero2011fast} can be used to solve
	\eqref{eq:gtf} via its dual problem:
	\begin{equation*}
	\hat{v} = \argmin_{v \in \R^r} \, \|y-(\op^{(k+1)})^\top v\|_2^2
	\;\,  \st \;\, \|v\|_\infty \leq \lambda.
	\end{equation*}
	The solution of \eqref{eq:gtf} is then given by
	\smash{$\hbeta = y-(\op^{(k+1)})^\top \hat{v}$}. (For univariate trend
	filtering, \citet{l1tf} adopt a similar strategy, but instead use an
	interior point method.)  The projected Newton method performs
	updates using a reduced Hessian, so abbreviating
	\smash{$\op=\op^{(k+1)}$}, each iteration boils down to
	\begin{equation}
	\label{eq:pn_linearsystem}
	v \leftarrow a+(\op_I^\top)^\dag b,
	\end{equation}
	for some $a,b$ and set of indices $I$.  The linear system in
	\eqref{eq:pn_linearsystem} is always sparse, but conditioning becomes
	an issue as $k$ grows (note that the same problem does not occur in  
	\eqref{eq:admm_linearsystem} because of the addition of the identity 
	matrix $I$).   We have found empirically that a preconditioned
	conjugate gradient method works quite well for
	\eqref{eq:pn_linearsystem} for $k=1$, but struggles for larger $k$.
	
	\subsection{Computation Summary}
	
	In our experience, the following algorithms work well for the various
	order $k$ of graph trend filtering.  We remark that orders $k=0,1,2$
	are of most practical interest (and solutions of polynomial order $k
	\geq 3$ are less likely to be sought in practice).\footnote{Loosely
		speaking, each  order $k=0,1,2$ provides solutions that
		exhibit a different class of structure: $k=0$ gives piecewise
		constant solutions, $k=1$ gives piecewise linear, and $k=2$ gives
		piecewise smooth.  All orders $k \geq 3$ continue to give piecewise
		smooth fits, with less and less transparent differences (the
		practical differences between piecewise quadratic versus piecewise 
		linear fits is greater than piecewise cubic versus piecewise
		quadratic, etc.). Since the conditioning of the graph
		trend filtering operator \smash{$\op^{(k+1)}$} worsens as $k$
		increases, which makes computation more difficult, it makes most
		practical sense to simply choose $k=2$ whenever a piecewise smooth
		fit is desired.} 
	
	\vspace{1mm}
	\begin{center}
		{\centering
			\begin{tabular}{l|l}
				Order & Algorithm \\
				\hline
				$k=0$ & Parametric max-flow \\
				$k=1$ & Projected Newton method \\
				$k=2,4,\ldots$ &ADMM with parametric max-flow \\ 
				$k=3,5,\ldots$ &ADMM with soft-thresholding
			\end{tabular}
		}
	\end{center}
	\vspace{1mm}
	
	Figure \ref{fig:convergence} compares performances of the
	described algorithms on a moderately large simulated example, using a
	2d grid graph.  We see that when $k=1$, the projected Newton
	method converges faster than ADMM (superlinear versus at best linear 
	convergence).  When $k=2$, the story is reversed,
	as the projected Newton iterations quickly become stagnant, and the
	ADMM enjoys better convergence.
	
	\begin{figure}[htb]
		\centering
		\resizebox{\columnwidth}{!}{
			\begin{tabular}{cc}
				GTF with $k=1$ & GTF with $k=2$ \\
				\includegraphics[width=0.475\textwidth]{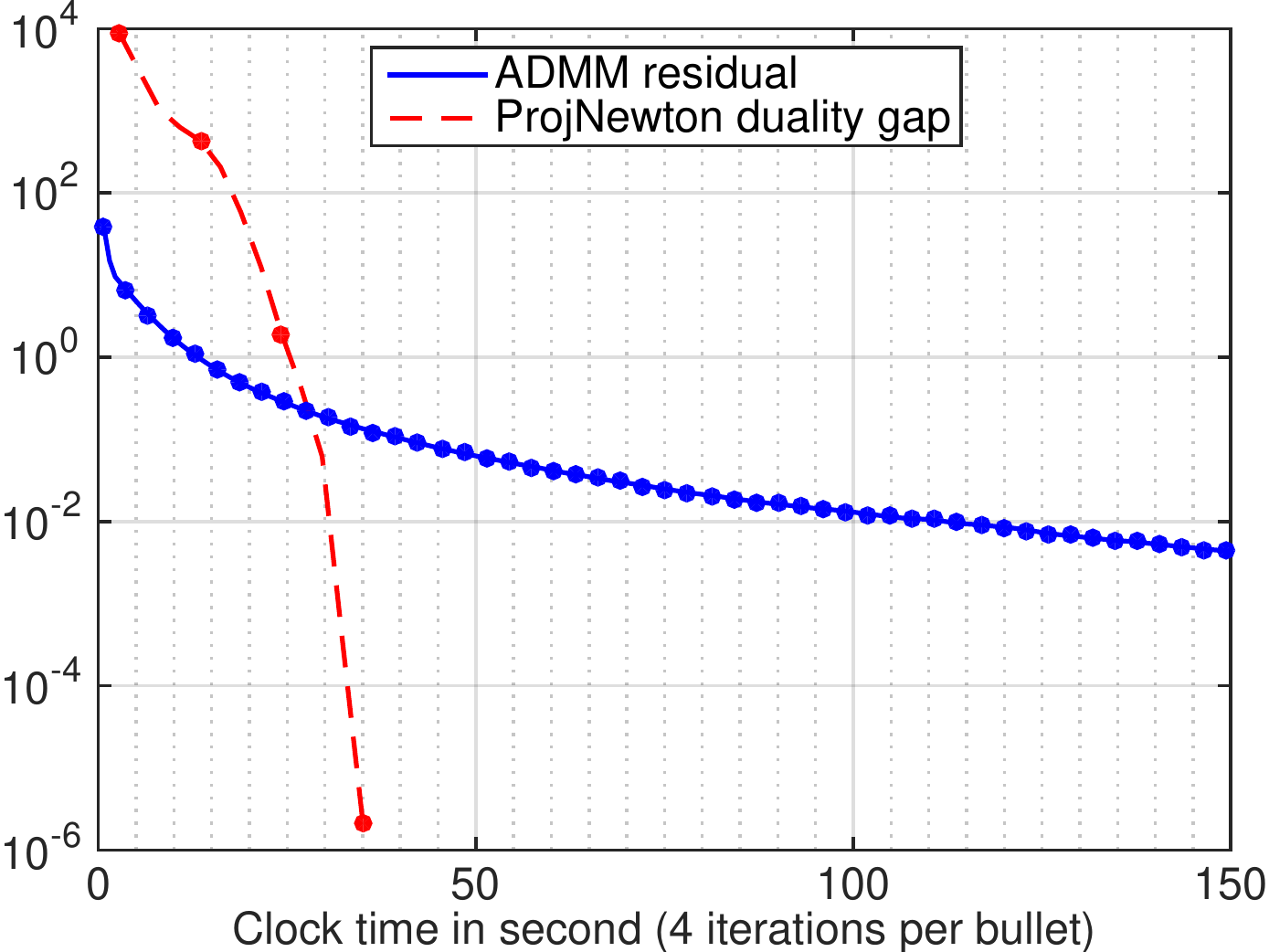} &
				\includegraphics[width=0.475\textwidth]{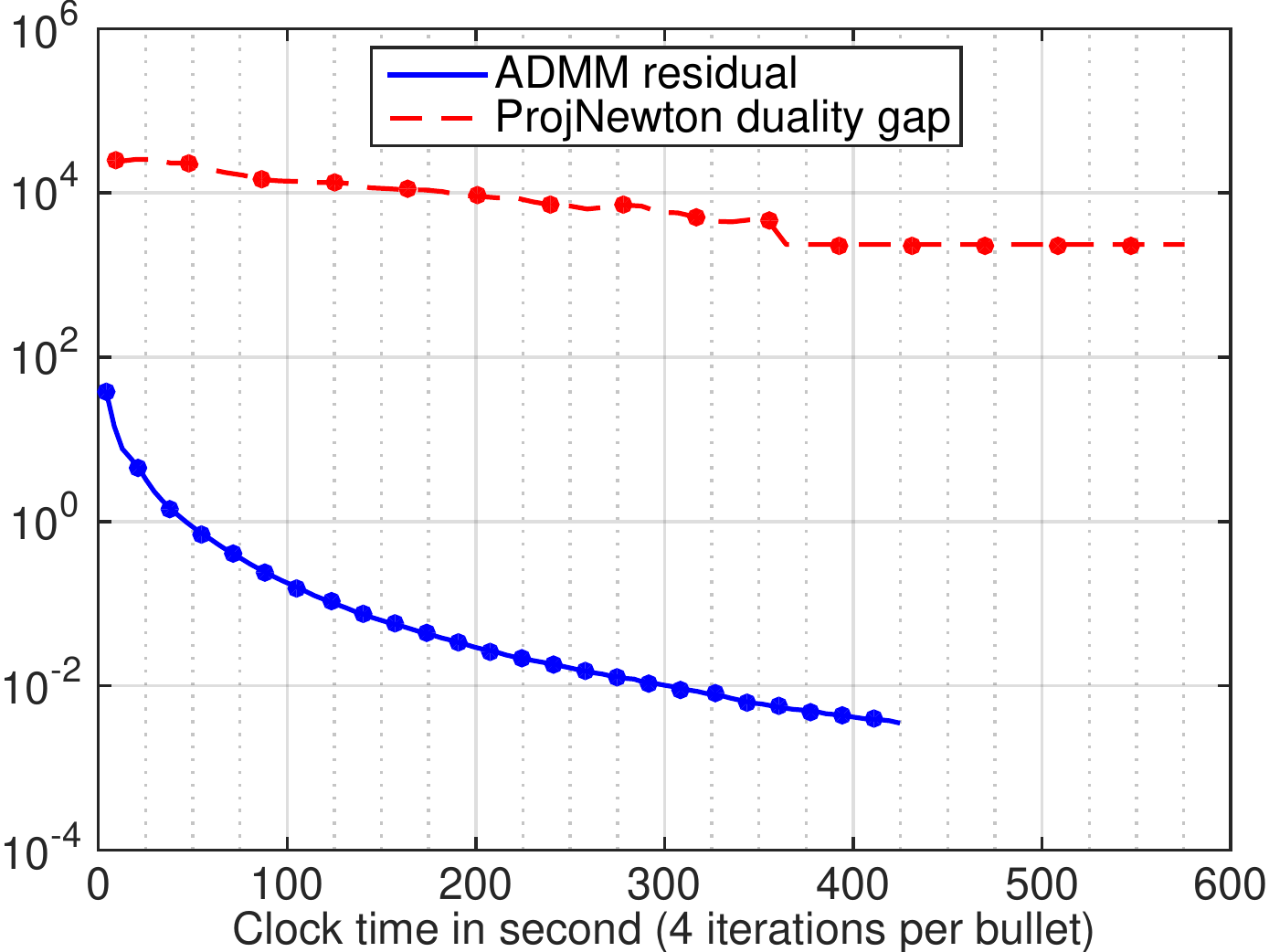}
			\end{tabular}
		}
		\caption{Convergence plots for projected Newton method and ADMM
			for solving GTF with $k=1$ and $k=2$. The algorithms are all run on
			a 2d grid graph (an $512\times 512$ image) with 262,144 nodes and
			523,264 edges. For projected Newton, we plot the duality gap across
			iterations; for ADMM, we plot the average of the primal
			and dual residuals (which also serves as a
			valid suboptimality bound in the ADMM framework).} 
		\label{fig:convergence}
	\end{figure}

	\section{Examples}
	\label{sec:examples}
	
	In this section, we present a variety of examples of running graph
	trend filtering on real graphs. 
	
	\subsection{Trend Filtering over the Facebook Graph}
	\label{sec:facebook}
	
	In the Introduction, we examined the denoising power of graph trend
	filtering in a spatial setting.  Here we examine the behavior of graph
	trend filtering on a nonplanar graph: the
	Facebook graph from the Stanford Network Analysis Project
	(\url{http://snap.stanford.edu}). This is composed of
	4039 nodes representing Facebook users, and 88,234
	edges representing friendships, collected from real survey
	participants; the graph has one connected component, but the observed
	degree sequence is very mixed, ranging from 1 to 1045 (refer to
	\citet{leskovec2012} for more details).
	
	We generated synthetic
	measurements over the Facebook nodes (users) based on three
	different ground truth models, so that we can precisely evaluate and
	compare the estimation accuracy of GTF, Laplacian smoothing, and
	wavelet smoothing.  For the latter, we again used the spanning tree
	wavelet design of \citet{graphwave}, because it performed among the
	best of wavelets designs in all data settings considered here.
	Results from other wavelet designs are presented in the Appendix. 
	The three ground truth models represent very
	different scenarios for the underlying signal $x$, each one favorable
	to different estimation methods. These are:
	
	\begin{enumerate*}
		\item {\bf Dense Poisson equation:} we solved the Poisson equation
		$Lx=b$ for $x$, where $b$ is arbitrary and dense (its entries were
		i.i.d. normal draws).
		
		\vspace{1mm}
		\item {\bf Sparse Poisson equation:} we solved the Poisson equation
		$Lx=b$ for $x$, where $b$ is sparse and has 30 nonzero entries
		(again i.i.d.\ normal draws).
		
		\vspace{1mm}
		\item {\bf Inhomogeneous random walk:} we ran a set of decaying
		random walks at different starter nodes in the graph, and recorded
		in $x$ the total number of visits at each
		node. Specifically, we chose 10 nodes as starter nodes, and
		assigned each starter node a decay probability uniformly at random
		between 0 and 1 (this is the probability that the walk terminates at
		each step instead of travelling to a neighboring node).  At each
		starter node, we also sent out a varying number of random walks,
		chosen uniformly between 0 and 1000.
	\end{enumerate*}
	
	In each case, the synthetic measurements were formed by adding
	noise to $x$. We note that model 1 is designed to be favorable for
	Laplace smoothing; model 2 is designed to be favorable for GTF; and
	in the inhomogeneity in model 3 is designed to be challenging for
	Laplacian smoothing, and favorable for the more adaptive GTF and
	wavelet methods.
	
	\begin{figure}[htbp]
		\centering
		\resizebox{\columnwidth}{!}{
			\begin{tabular}{cc}
				Dense Poisson equation & Sparse Poisson equation \\
				\includegraphics[width=0.475\textwidth]{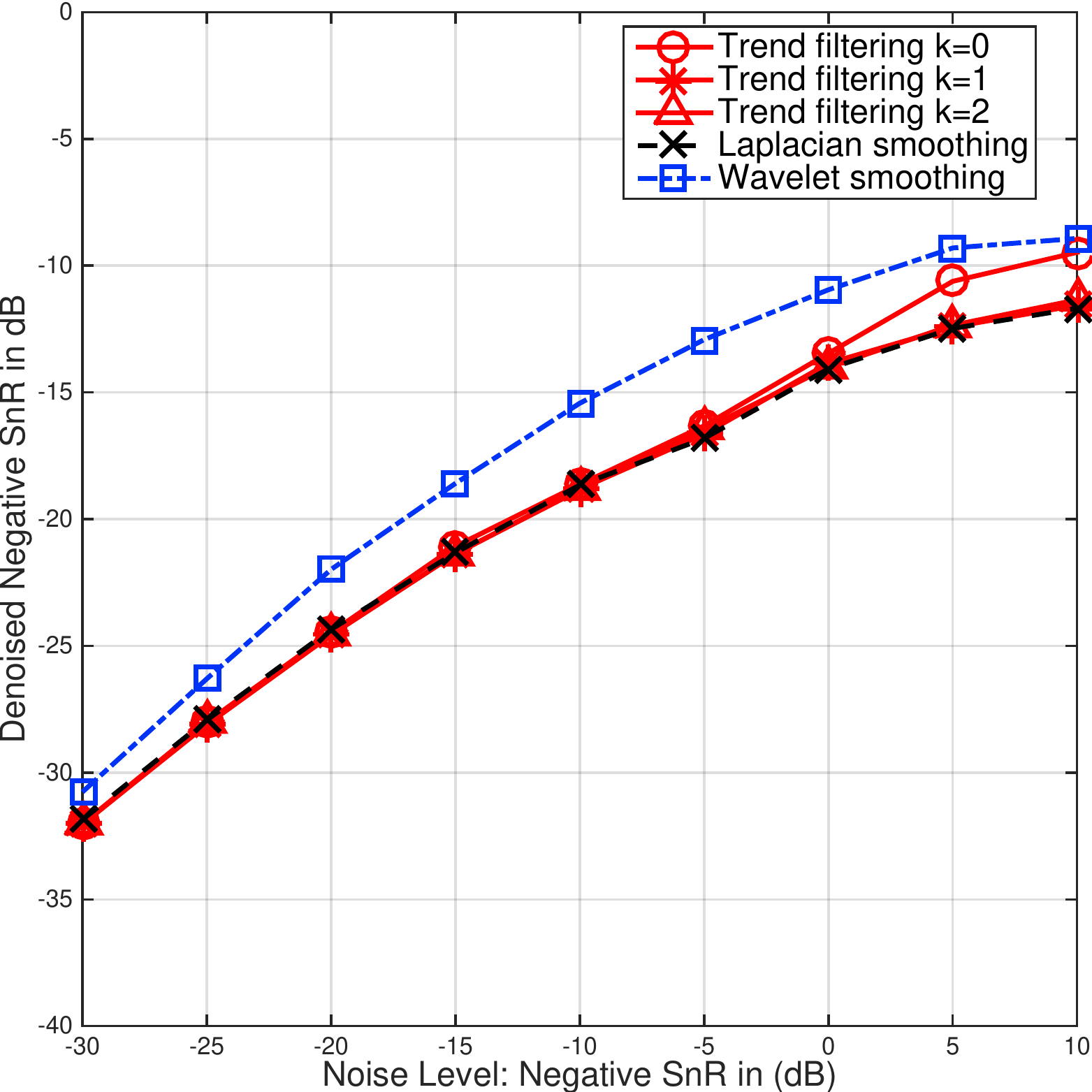} &
				\includegraphics[width=0.475\textwidth]{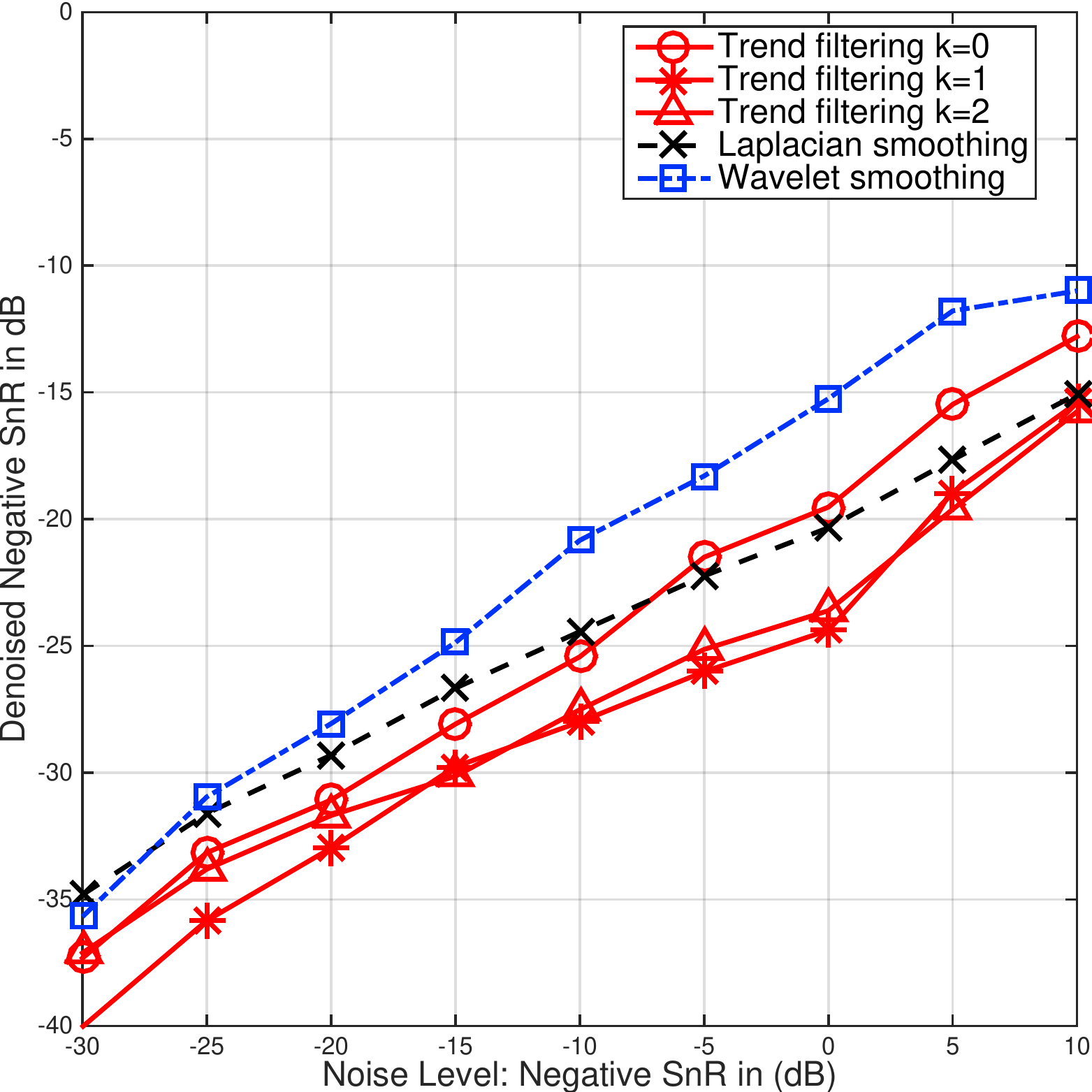}
			\end{tabular} 
		}
		
		\smallskip\smallskip 
		Inhomogeneous random walk \\
		\includegraphics[width=0.475\textwidth]{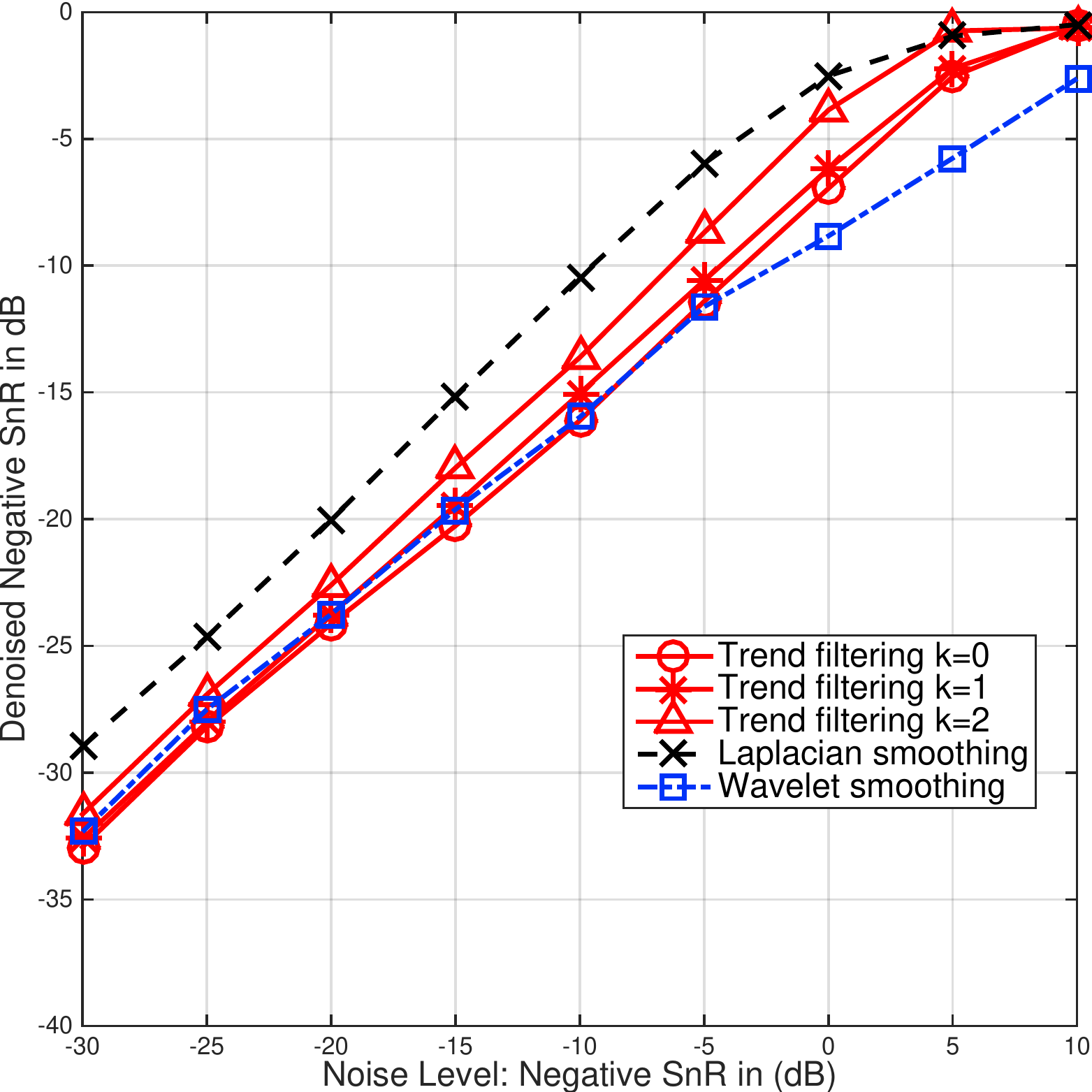}
		
		\caption{Performance of GTF and others for three generative models
			on the Facebook graph. The x-axis
			shows the negative SnR: $10 \log_{10} (n\sigma^2/\|x\|_2^2)$, where
			$n=4039$, $x$ is the underlying signal, and $\sigma^2$ is the noise
			variance. Hence the noise level is increasing from left to right.
			The y-axis shows the denoised negative SnR: $10 \log_{10}
			(\MSE/\|x\|_2^2)$, where MSE denotes mean squared error, so the
			achieved MSE is increasing from bottom to top.}
		\label{fig:facebook}
	\end{figure}
	
	Figure \ref{fig:facebook} shows the performance of the
	three estimation methods, over a wide range of noise levels in the
	synthetic measurements; performance here is measured by the best
	achieved mean squared error, allowing each method to be tuned
	optimally at each noise level.  The summary: GTF estimates
	are (expectedly) superior when the Laplacian-based sparsity pattern
	is in effect (model 2), but are nonetheless highly competitive in both
	other settings---the dense case, in which Laplacian smoothing thrives,
	and the inhomogeneous random walk case, in which wavelets thrive.
	
	\subsection{Graph-Based Transductive Learning over UCI Data}
	\label{sec:madgtf}
	
	Graph trend filtering can used for graph-based transductive learning, as
	motivated by the work of
	\citet{talukdar2009new,talukdar2010experiments}, who rely on Laplacian
	regularization.  Consider a semi-supervised learning setting, where we
	are given only a small number of seed labels over nodes of a graph,
	and the goal is to impute the labels on the remaining nodes.
	Write $O \subseteq \{1,\ldots n\}$ for the set of observed nodes, and
	assume that each observed label falls into $\{1,\ldots K\}$.  Then
	we can define the modified absorption problem under graph trend
	filtering regularization (MAD-GTF) by
	\begin{equation}
	\label{eq:madgtf}
	\hat{B} = \argmin_{B \in \R^{n\times K}} \;
	\sum_{j=1}^K \sum_{i \in O} (Y_{ij}-B_{ij})^2  +
	\lambda \sum_{j=1}^K \|\op^{(k+1)} B_j\|_1 +
	\epsilon \sum_{j=1}^K \|R_j-B_j\|_2^2.
	\end{equation}
	The matrix $Y\in\R^{n\times K}$ is an indicator matrix: each observed
	row $i \in O$ is described by $Y_{ij}=1$ if class $j$ is observed and
	$Y_{ij}=0$ otherwise.  The matrix $B \in \R^{n\times K}$ contains
	fitted probabilities, with $B_{ij}$ giving the probability that node
	$i$ is of class $j$.  We write $B_j$ for its $j$th column, and hence
	the middle term in the above criterion encourages each set of class
	probabilities to behave smoothly over the graph.  The last term in the
	above criterion ties the fitted probabilities to some given prior
	weights $R \in \R^{n\times K}$.  In principle
	$\epsilon$ could act as a second tuning parameter, but for simplicity
	we take $\epsilon$ to be small and fixed, with any $\epsilon>0$
	guaranteeing that the criterion in \eqref{eq:madgtf} is strictly
	convex, and thus has a unique solution \smash{$\hat{B}$}. The entries
	of \smash{$\hat{B}$} need not be probabilites in any strict sense, but
	we can still interpret them as relative probabilities, and imputation
	can be performed by assigning each unobserved node $i \notin O$ a
	class label $j$ such that $\hat{B}_{ij}$ is largest.
	
	\begin{figure}[!htb]
		\centering
		\includegraphics[width=0.7\textwidth]{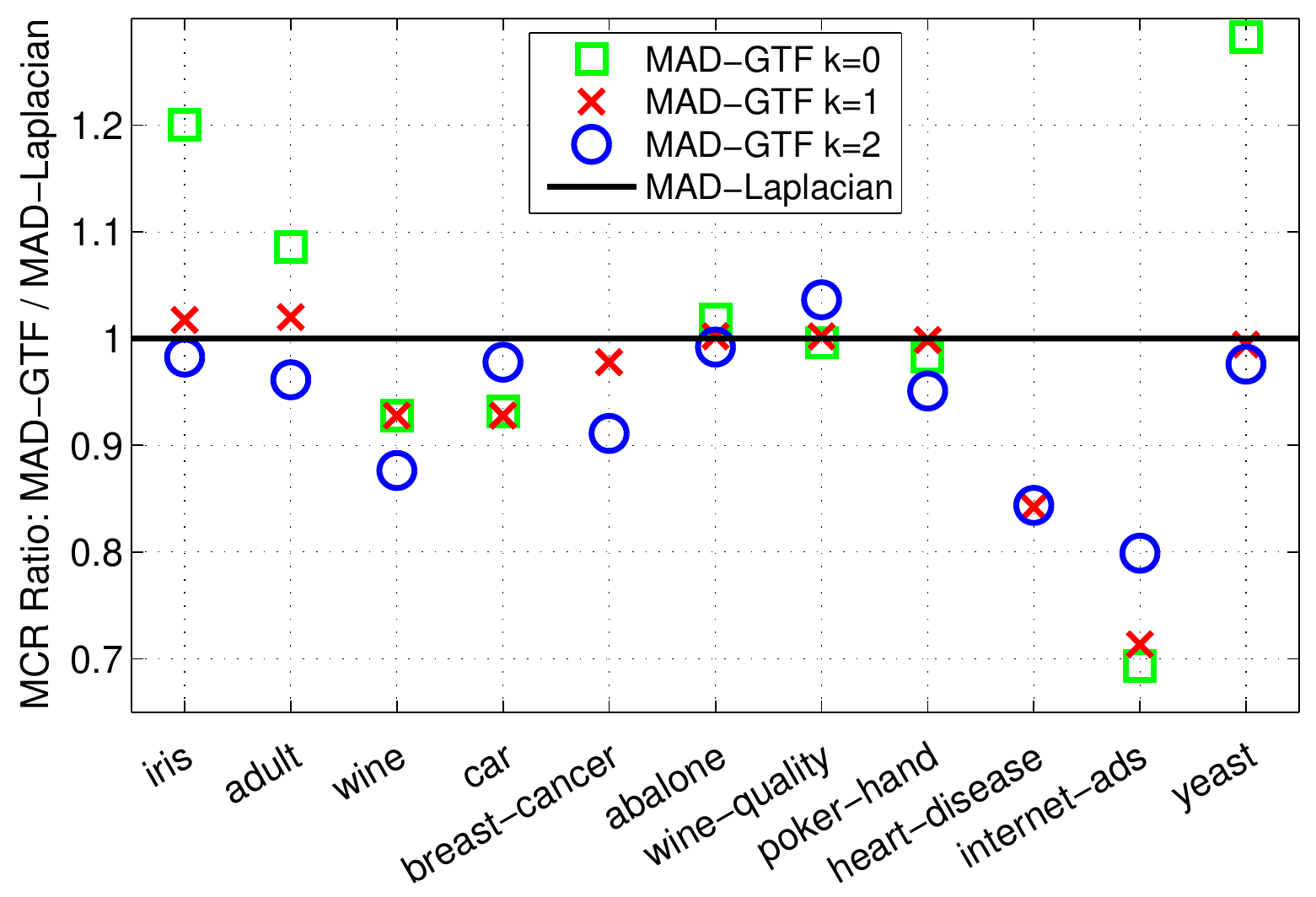} \\
		\captionof{figure}{Ratio of the misclassification rate of MAD-GTF to
			MAD-Laplacian, for graph-based imputation, on the 11 most popular
			UCI classification data sets.}
		\label{fig:uci}
	\end{figure}
	
	\begin{table}[!h]
		\centering
		\resizebox{\columnwidth}{!}{
			\begin{tabular}{|c|c|c|c|c|c|c|c|c|c|c|c|}
				\hline
				& iris & adult & wine & car & breast & abalone &
				wine-qual. & poker & heart & ads & yeast \\
				\hline
				\# of classes& 3 & 2 & 3 & 4 & 2\ & 29 &
				6 & 10 & 2 & 2 & 10
				\\
				\# of samples & 150 &  32,561 & 178 & 1,728 & 569 & 4,177 & 1,599 
				& 3,000 & 303 & 3,279 & 1,484 \\ 
				\hline\hline
				Laplacian & 0.085 & 0.270 & 0.060 & 0.316 & 0.064 & 0.872 &
				0.712 &0.814 & 0.208 & 0.306 & 0.566 \\\hline
				GTF, $k=0$ & 0.102 & 0.293 &0.055 & \textcolor{green}{0.294} &
				\textcolor{red}{0.500} 
				&\textcolor{red}{ 0.888} & 0.709 & 0.801 & \textcolor{red}{0.472} &
				\textcolor{green}{0.212}&\textcolor{red}{0.726} \\
				p-value & 0.254 & 0.648 &0.406 & \textcolor{green}{0.091} &
				\textcolor{red}{0.000} & 
				\textcolor{red}{0.090} & 0.953 & 0.732 & \textcolor{red}{0.000} &
				\textcolor{green}{0.006}&\textcolor{red}{0.000} \\\hline
				GTF, $k=1$  & 0.087 & 0.275 & \textcolor{green}{0.055} &
				\textcolor{green}{0.293} 
				& 0.063 & 0.874 & 0.713 & 0.813 & 0.175 & \textcolor{green}{0.218}
				& 0.563 \\ 
				p-value & 0.443 & 0.413 &\textcolor{green}{0.025} &
				\textcolor{green}{0.012} & 0.498 
				& 0.699 & 0.920 & 0.801 & 0.134 & \textcolor{green}{0.054}&0.636
				\\\hline 
				GTF, $k=2$ & 0.084 & 0.259 & \textcolor{green}{0.052} &0.309 & 
				\textcolor{green}{0.059} & 0.865 & 0.738 & 0.774 & 0.175 & 0.244 & 
				\textcolor{green}{0.552} \\  
				p-value & 0.798 & 0.482 &\textcolor{green}{0.024} & 0.523 & 
				\textcolor{green}{0.073} 
				& 0.144 & 0.479 & 0.138 & 0.301 & 0.212&\textcolor{green}{0.100}
				\\ 
				\hline
			\end{tabular}
		}
		\caption{Misclassification rates of MAD-Laplacian and MAD-GTF
			for imputation in the UCI data sets.  We also compute p-values over 
			the 10 repetitions for each data set (10 draws of nodes to serve
			as seed labels) via paired t-tests. Cases where MAD-GTF achieves
			significantly better misclassification rate, at the 0.1 level,
			are highlighted in \textcolor{green}{green}; cases where MAD-GTF
			achieves a significantly worse miclassification rate, at the 0.1
			level, are highlighted in \textcolor{red}{red}.}
		\label{tab:uci}
	\end{table}

	Our specification of MAD-GTF only deviates from the
	MAD proposal of \citet{talukdar2009new} in that these authors used the
	Laplacian regularization term $\sum_{j=1}^K B_j^\top L B_j$, in place
	of $\ell_1$-based graph difference regularizer in \eqref{eq:madgtf}. 
	If the underlying class probabilities are thought to have
	heterogeneous smoothness over the graph, then 
	replacing the Laplacian regularizer with the GTF-designed one might
	lead to better performance.
	As a broad comparison of the two methods, we ran them on the
	11 most popular classification data sets from the UCI Machine Learning
	repository (\url{http://archive.ics.uci.edu/ml/}).\footnote{We
		used all data sets here, except the ``forest-fires'' data set, which
		is a regression problem. Also, we zero-filled the missing data in
		``internet-ads'' data set and used a random one third of the data in
		the ``poker'' data set.}   For each data set, we constructed a
	$5$-nearest-neighbor graph based on the Euclidean distance between
	provided features, and randomly selected 5 seeds per class to
	serve as the observed class labels. Then we set $\epsilon=0.01$, used
	prior weights $R_{ij}=1/K$ for all $i$ and $j$, and chose the tuning
	parameter $\lambda$ over a wide grid of values to represent the best
	achievable performance by each method, on each experiment.
	Figure \ref{fig:uci} and Table~\ref{tab:uci} summarize the
	misclassification rates from imputation using MAD-Laplacian and
	MAD-GTF, averaged over 10 repetitions of the randomly selected seed
	labels.  We see that MAD-GTF with $k=0$ (basically a graph
	partition akin to MRF-based graph cut, using an Ising model)
	does not seem to work as well as the smoother alternatives.
	Importantly, MAD-GTF with $k=1$ and $k=2$ both perform at least as
	well, and sometimes better, than MAD-Laplacian on each one of 
	the UCI data sets.  Recall that these data sets were selected entirely
	based on their popularity,
	and not at all on the belief that they might represent favorable
	scenarios for GTF (i.e., not on the prospect that they might
	exhibit some heterogeneity in the distribution of class labels over
	their respective graphs).  Therefore, the fact that MAD-GTF nonetheless
	performs competitively in such a broad range of experiments is 
	convincing evidence for the utility of the GTF regularizer.
	
	\subsection{Event Detection with NYC Taxi Trips Data}
	\label{sec:sparsegtf}
	
	We illustrate a sparse variant of our proposed regularizers, given by
	adding a pure $\ell_1$ penalty to the coefficients in \eqref{eq:gtf},
	as in
	\begin{equation}
	\label{eq:sgtf}
	\hbeta = \argmin_{\beta \in \R^n} \,
	\half\|y-\beta\|_2^2 + \lambda_1 \|\op^{(k+1)}\beta\|_1 +
	\lambda_2 \|\beta\|_1.
	\end{equation}
	We call this {\it sparse graph trend filtering}, now with
	two tuning parameters $\lambda_1,\lambda_2 \geq 0$. Under the
	proper tuning, the sparse GTF estimate will be zero at many nodes in
	the graph, and will otherwise deviate smoothly from zero.   This can
	be useful in situations where the observed signal represents a
	difference between two smooth processes that are mostly
	similar, but exhibit (perhaps significant) differences over a few
	regions of the graph.  Here we
	apply it to the problem of detecting events based on
	abnormalities in the number of taxi trips at different locations of
	New York city.  This data set was kindly provided by authors of
	\citet{ferreira2014Vtopological}, who obtained the data from NYC Taxi
	\& Limosine Commission.\footnote{These authors also considered event
		detection, but their topological definition of an ``event'' is very
		different from what we considered here, and hence our results not
		directly comparable.}
	Specifically, we consider the graph to be the road
	network of Manhattan, which contains 3874 nodes (junctions) and 7070
	edges (sections of roads that connect two junctions).
	For measurements over the nodes, we used the number of taxi
	pickups and dropoffs over a particular time period of interest:
	12:00--2:00 pm on June 26, 2011, corresponding to the Gay Pride
	parade.  As pickups and
	dropoffs do not generically occur at road junctions, we used
	interpolation to form counts over the graph nodes.  A baseline
	seasonal average was calculated by considering data from the same time
	block 12:00--2:00 pm on the same day of the week across the nearest eight
	weeks.  Thus the measurements $y$ were then taken to be the difference
	between the counts observed during the Gay Pride parade and the
	seasonal averages.
	
	Note that the nonzero node estimates from sparse GTF applied to $y$,
	after proper tuning, mark events of interest, because
	they convey substantial differences between the observed and
	expected taxi counts. According to descriptions in the news,
	we know that the Gay Pride parade was a giant march down at noon from
	36th St.\ and Fifth Ave.\ all the way to Christopher St.\ in
	Greenwich Village, and traffic was blocked over the entire route for
	two hours (meaning no pickups and dropoffs could occur).   We therefore
	hand-labeled this route as a crude ``ground truth'' for the event of
	interest, illustrated in the left panel of Figure~\ref{fig:taxi}.
	
	
	\begin{figure}[htbp]
		\centering
		\begin{tabular}{cc}
			True parade route & Unfiltered signal \\
			\includegraphics[width=0.39\textwidth]{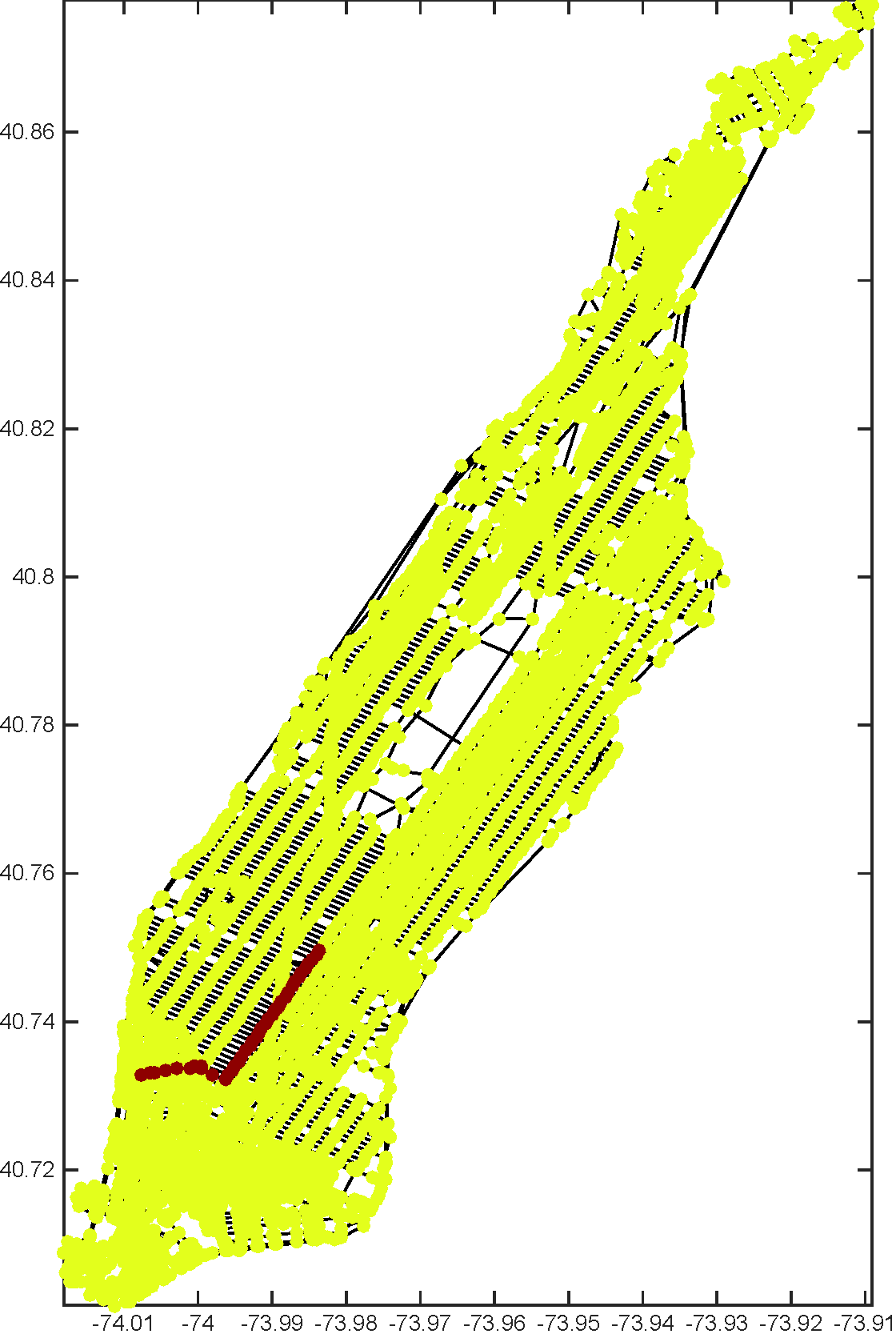} &
			\includegraphics[width=0.39\textwidth]{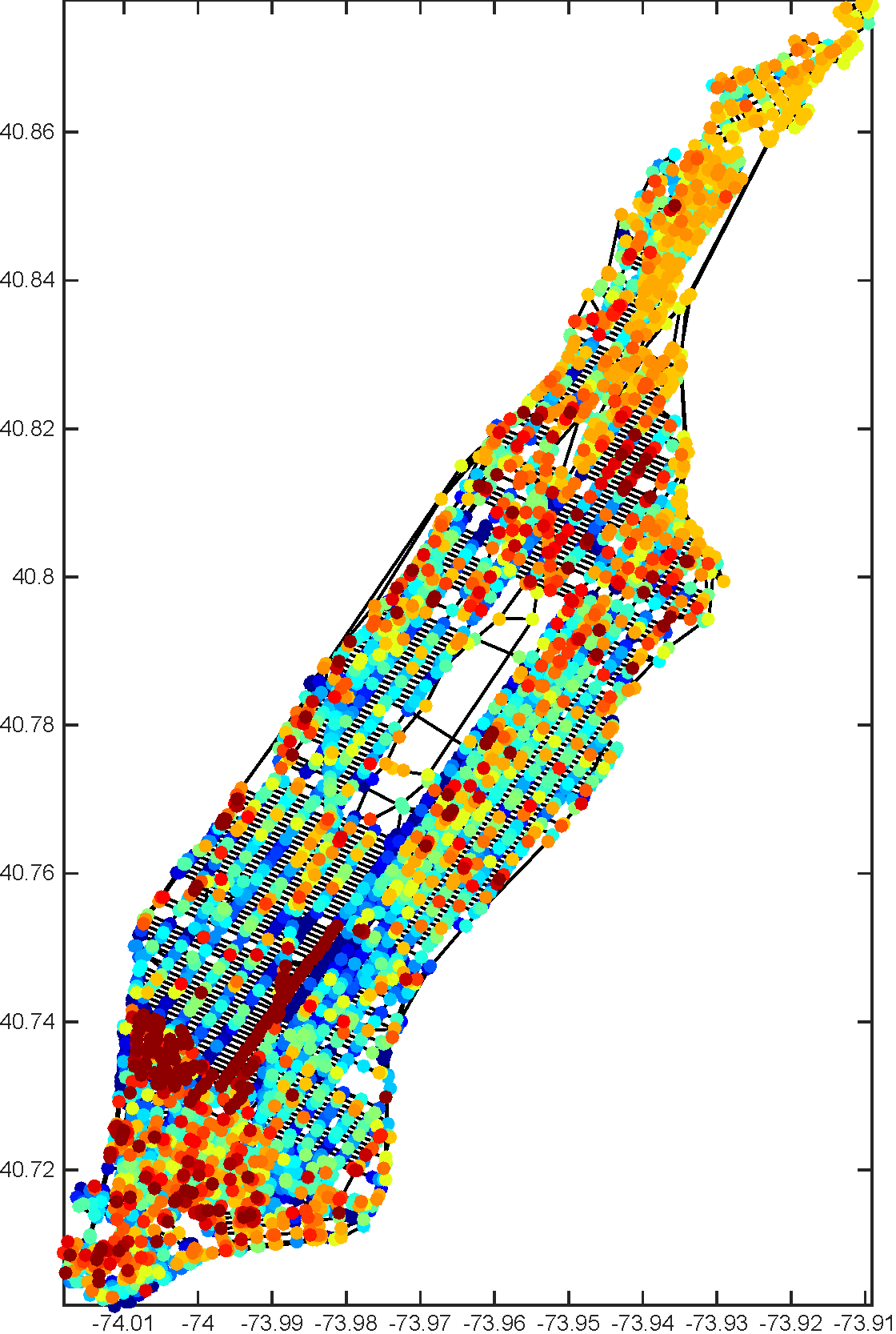}
			\smallskip \\
			Sparse trend filtering & Sparse Laplacian smoothing \\
			\includegraphics[width=0.39\textwidth]{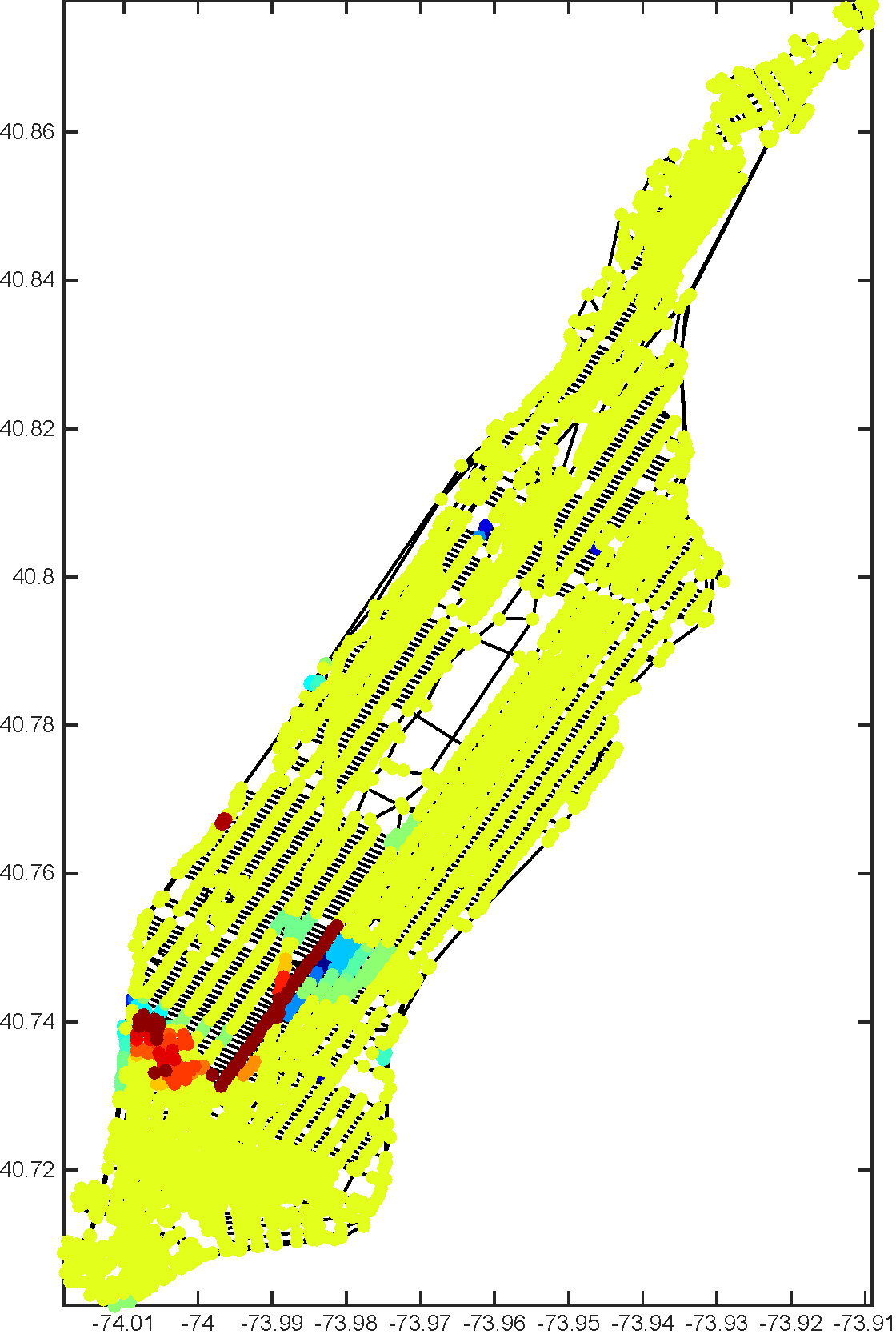} &
			\includegraphics[width=0.39\textwidth]{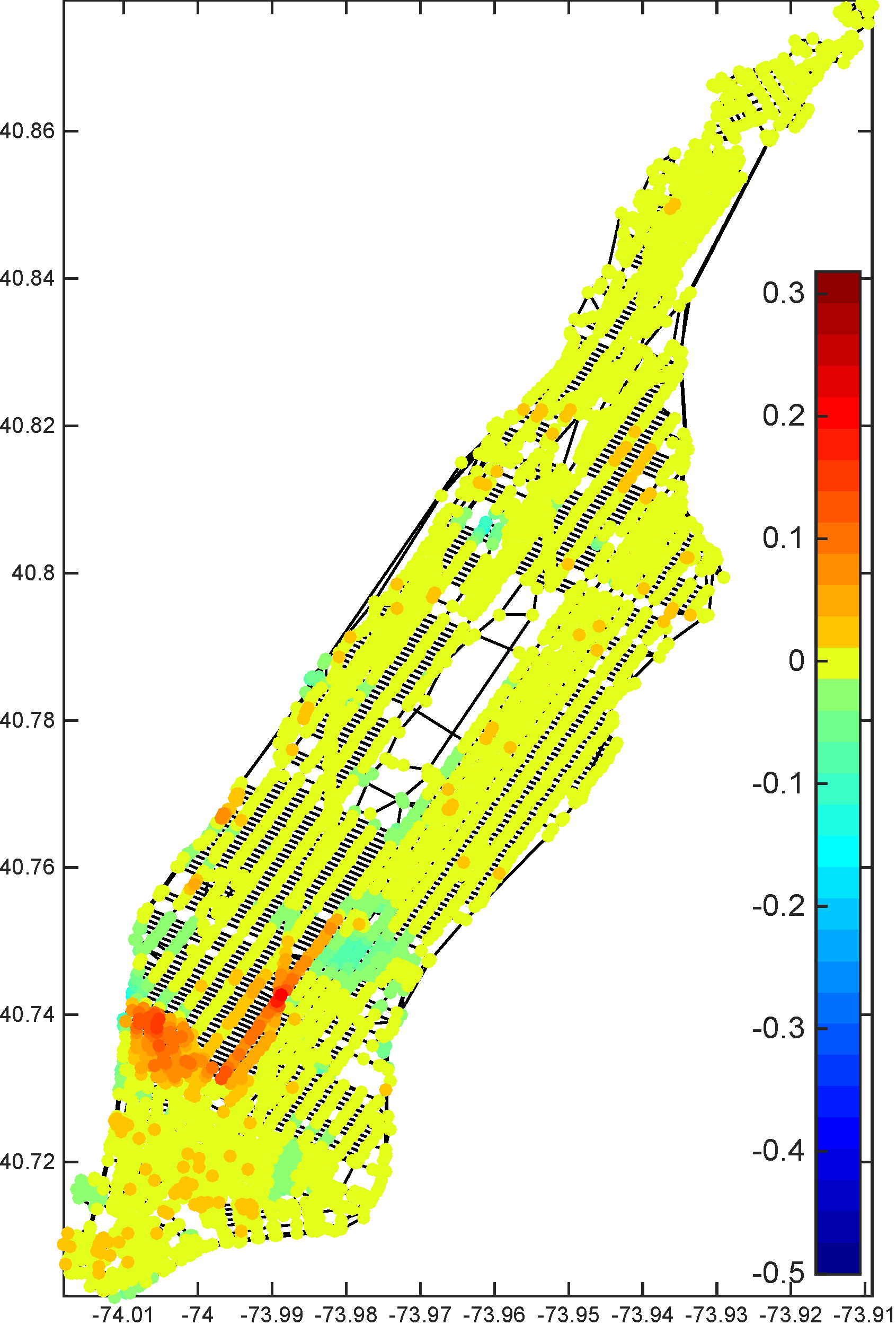}
		\end{tabular}
		\caption{Comparison of sparse GTF and sparse Laplacian smoothing. 
			We can see qualitatively that sparse GTF
			delivers better event detection with fewer false positives
			(zoomed-in, the sparse Laplacian plot shows a scattering of many
			non-zero colors). }
		\label{fig:taxi}
	\end{figure}
	
	In the bottom two panels of Figure~\ref{fig:taxi}, we compare
	sparse GTF with $k=0$ (i.e., the sparse graph fused lasso) and a
	sparse variant of Laplacian smoothing, obtained by replacing the first 
	regularization term in \eqref{eq:sgtf} by $\beta^\top L \beta$. 
	For a qualitative visual comparison, the smoothing parameter
	$\lambda_1$ was chosen so that both methods have 200 degrees of
	freedom (without any sparsity imposed).  The sparsity parameter was
	then set as $\lambda_2=0.2$. Similar to what we have seen already,
	GTF is able to better localize its estimates around strong
	inhomogenous spikes in the measurements, and is
	able to better capture the event of interest.  The result of sparse
	Laplacian smoothing is far from localized around the ground truth
	event, and displays many
	nonzero node estimates throughout distant regions of the graph.  If we
	were to decrease its flexibility (increase the smoothing parameter
	$\lambda_1$ in its problem formulation), then the sparse Laplacian
	output would display more smoothness over the graph, but the node
	estimates around the ground truth region would also be grossly
	shrunken.

	\section{Estimation Error Bounds}
	\label{sec:theory}
	
	In this section, we assume that
	$y \sim \cN(\beta_0,\sigma^2 I)$,
	and study asymptotic error rates for graph trend filtering. (The
	assumption of a normal error model could be relaxed, but is used for
	simplicity).  Our analysis actually focuses more broadly on the
	generalized lasso problem
	\begin{equation}
	\label{eq:genlasso}
	\hbeta = \argmin_{\beta \in \R^n} \,
	\half\|y-\beta\|_2^2 + \lambda \|\op \beta\|_1,
	\end{equation}
	where $\op \in \R^{r\times n}$ is an arbitrary linear operator, and
	$r$ denotes its number of rows.  Throughout, we specialize the derived
	results to the graph difference operator $\op=\op^{(k+1)}$, to obtain
	concrete statements about GTF over particular graphs. All proofs are
	deferred to the Appendix.
	
	\subsection{Basic Error Bounds}
	
	Using similar arguments to the basic inequality for the lasso
	\citep{statshd}, we have the following preliminary bound.
	
	\begin{theorem}
		\label{thm:basic}
		Let $M$ denote the maximum $\ell_2$ norm of the columns of
		$\op^\dag$. Then for a tuning parameter value
		$\lambda=\Theta(M\sqrt{\log{r}})$, the generalized lasso estimate
		\smash{$\hbeta$} in \eqref{eq:genlasso} has average squared error 
		\begin{equation*}
		\frac{\|\hbeta-\beta_0\|_2^2}{n} =
		O_\P \left( \frac{\nuli(\op)}{n} + \frac{M\sqrt{\log{r}}}{n} \cdot
		\|\op \beta_0\|_1 \right).
		\end{equation*}
	\end{theorem}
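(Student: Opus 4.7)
The plan is to follow the standard ``basic inequality'' argument used for the lasso, adapted to the generalized lasso by splitting the error $\hbeta - \beta_0$ into components on $\row(\op)$ and $\nul(\op)$. Starting from the optimality of $\hbeta$ in \eqref{eq:genlasso}, and writing $y = \beta_0 + \epsilon$ with $\epsilon \sim \cN(0,\sigma^2 I)$, one obtains the standard inequality
\begin{equation*}
\tfrac{1}{2}\|\hbeta - \beta_0\|_2^2 \;\leq\; \epsilon^\top (\hbeta - \beta_0) + \lambda \|\op \beta_0\|_1 - \lambda \|\op \hbeta\|_1.
\end{equation*}
The game is then to control the stochastic term $\epsilon^\top(\hbeta-\beta_0)$ in a way that produces only the two terms appearing in the conclusion.

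For the row-space part, I would use the identity $P_{\row(\op)} = \op^\dag \op$ and then Hölder's inequality:
\begin{equation*}
\epsilon^\top P_{\row(\op)}(\hbeta-\beta_0) = (\op^{\dag\top}\epsilon)^\top \op(\hbeta - \beta_0) \;\leq\; \|\op^{\dag\top}\epsilon\|_\infty \cdot \bigl(\|\op\hbeta\|_1 + \|\op\beta_0\|_1\bigr).
\end{equation*}
Because the $j$-th entry of $\op^{\dag\top}\epsilon$ is Gaussian with variance $\sigma^2\|(\op^\dag)_{\cdot j}\|_2^2 \le \sigma^2 M^2$, a standard Gaussian maximal inequality over the $r$ coordinates gives $\|\op^{\dag\top}\epsilon\|_\infty = O_\P(M\sqrt{\log r})$. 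Picking $\lambda$ a constant multiple of this quantity (which matches $\lambda = \Theta(M\sqrt{\log r})$) allows the $\|\op\hbeta\|_1$ term to be absorbed into the penalty, leaving only $O(\lambda\|\op\beta_0\|_1)$ on the right.

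For the null-space part, write $\nu = P_{\nul(\op)}(\hbeta - \beta_0)$ and apply Cauchy-Schwarz:
\begin{equation*}
\epsilon^\top \nu \;\leq\; \|P_{\nul(\op)}\epsilon\|_2 \cdot \|\nu\|_2 \;\leq\; \|P_{\nul(\op)}\epsilon\|_2 \cdot \|\hbeta - \beta_0\|_2.
\end{equation*}
Since $\|P_{\nul(\op)}\epsilon\|_2^2$ is $\sigma^2$ times a $\chi^2$ with $\nuli(\op)$ degrees of freedom, it is $O_\P(\nuli(\op))$. Then an AM-GM step $ab \le a^2 + b^2/4$ lets me split this contribution into $O_\P(\nuli(\op))$ plus $\tfrac{1}{4}\|\hbeta - \beta_0\|_2^2$, which can be absorbed on the left-hand side. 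Combining everything, dividing by $n$, and substituting the chosen $\lambda$ yields the claimed bound.

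The only subtle step is the null-space decomposition: one must recognize that the projection of the Gaussian noise onto $\nul(\op)$ is completely unregularized by the penalty and therefore contributes the unavoidable $\nuli(\op)/n$ term. Once that is in place, the argument is essentially bookkeeping of constants, and the result for GTF follows by specializing $\op = \op^{(k+1)}$, reading off $\nuli(\op^{(k+1)})$ from the structure of the graph difference operator (equal to $1$ for connected $G$ when $k$ is even, and to the number of connected components when $k$ is odd, per Lemma~\ref{lem:nullspace} with $A = \emptyset$), and bounding $M$ in terms of spectral properties of $L$.
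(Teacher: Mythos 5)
Your proposal is correct and follows essentially the same route as the paper: the basic inequality from optimality, H\"older plus a Gaussian maximal bound on the row-space part of the noise term (yielding the choice $\lambda=\Theta(M\sqrt{\log r})$), and a $\chi^2_{\nuli(\op)}$ bound on the null-space part. The only cosmetic difference is that the paper exploits the exact separation $\hbeta = P_{\nul(\op)}y + \tbeta$ (the penalty ignores the null-space component, so the problem decouples across $\row(\op)$ and $\nul(\op)$), which produces the $\nuli(\op)/n$ term directly and avoids your Cauchy--Schwarz/AM--GM absorption step.
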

	
	Recall that $\nuli(\op)$ denotes the dimension of the null space of
	$\op$. For the GTF operator \smash{$\op^{(k+1)}$} of any
	order $k$, note that \smash{$\nuli(\op^{(k+1)})$} is the number of
	connected components in the underlying graph.
	
	When both $\|\op \beta_0\|_1=O(1)$ and $\nuli(\op)=O(1)$,
	Theorem~\ref{thm:basic}
	says that the estimate \smash{$\hbeta$} converges in average squared
	error at the rate $M\sqrt{\log{r}}/n$, in probability.
	This theorem is quite general, as it applies to any
	linear operator $\op$, and one might therefore think that it cannot 
	yield fast rates. Still, as we show next, it does imply consistency
	for graph trend filtering in certain cases.
	
	\begin{corollary}
		\label{cor:basic}
		Consider the trend filtering estimator \smash{$\hbeta$} of order $k$,
		and the choice of the tuning parameter $\lambda$ as in Theorem
		\ref{thm:basic}.  Then:
		\begin{enumerate*}
			\item for univariate trend filtering (i.e., essentially GTF on a chain
			graph), 
			\begin{equation*}
			\frac{\|\hbeta-\beta_0\|_2^2}{n} = O_\P\left(\sqrt{\frac{\log{n}}{n}} \cdot
			n^k \|D^{(k+1)}\beta_0\|_1\right);
			\end{equation*}
			\item for GTF on an Erdos-Renyi random graph, with edge probability
			$p$, and expected degree $d=np \geq 1$,
			\begin{equation*}
			\frac{\|\hbeta-\beta_0\|_2^2}{n} = O_\P\left(
			\frac{\sqrt{\log(nd)}}{nd^{\frac{k+1}{2}}}
			\cdot \|\op^{(k+1)}\beta_0\|_1\right);
			\end{equation*}
			\item for GTF on a Ramanujan $d$-regular graph, and $d \geq 1$,
			\begin{equation*}
			\frac{\|\hbeta-\beta_0\|_2^2}{n} = O_\P\left(
			\frac{\sqrt{\log(nd)}}{nd^{\frac{k+1}{2}}}
			\cdot \|\op^{(k+1)}\beta_0\|_1\right).
			\end{equation*}
		\end{enumerate*}
	\end{corollary}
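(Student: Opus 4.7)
The plan is to specialize Theorem~\ref{thm:basic} to each graph family by controlling its three inputs: the nullity of $\op^{(k+1)}$, the number of rows $r$, and the quantity $M$, the maximum $\ell_2$ norm of a column of $(\op^{(k+1)})^\dag$. In all three cases the underlying graph is connected (with high probability for Erd\H{o}s--R\'enyi), so Lemma~\ref{lem:nullspace} gives $\nuli(\op^{(k+1)}) = 1$, which only contributes an $O(1/n)$ term dominated by the other piece in each regime of interest. The row count is $r=n$ for odd $k$ and $r=m$ (the number of edges) for even $k$, which produces the appropriate $\sqrt{\log n}$ or $\sqrt{\log(nd)}$ factor. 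The bulk of the work is bounding $M$.

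A useful uniform observation, valid for both parities of $k$, is that $(\op^{(k+1)})^\top \op^{(k+1)} = L^{k+1}$. Hence the nonzero singular values of $\op^{(k+1)}$ equal $\lambda_i(L)^{(k+1)/2}$, and in particular
\[
 \bigl\|(\op^{(k+1)})^\dag\bigr\|_2 \;=\; \lambda_2(L)^{-(k+1)/2}.
\]
This yields the coarse bound $M \le \lambda_2(L)^{-(k+1)/2}$, which turns out to be sharp up to constants for the two expander-like cases. For a Ramanujan $d$-regular graph, the defining spectral property of the adjacency matrix translates into $\lambda_2(L) \ge d - 2\sqrt{d-1} = \Theta(d)$, giving $M = O(d^{-(k+1)/2})$ deterministically. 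For the Erd\H{o}s--R\'enyi graph with $d=np$, I would invoke a standard spectral concentration result for sparse random graphs (e.g.\ a Kahn--Szemer\'edi/Feige--Ofek-type bound, or matrix Bernstein applied to the centered adjacency matrix) to conclude $\lambda_2(L) = \Theta(d)$ with high probability, and hence the same bound on $M$. Substituting into Theorem~\ref{thm:basic} then delivers parts (b) and (c).

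For the chain graph in part (a), the operator-norm bound is much too loose, since $\lambda_2(L) \asymp n^{-2}$ would give $M = O(n^{k+1})$ rather than the $O(n^{k+1/2})$ needed here. Instead I would exploit the explicit structure of $(D^{(k+1)})^\dag$: its columns are $(k+1)$-fold discrete antidifferences (equivalently, falling-factorial basis vectors as in \citet{fallfact}) supported around each index, and a direct entrywise estimate shows each such column has $\ell_2$ norm of order $n^{k+1/2}$. Because the chain-graph operator $\op^{(k+1)}$ differs from $D^{(k+1)}$ only by a bounded number of boundary rows (as noted after \eqref{eq:gdk}), the same $O(n^{k+1/2})$ bound carries over to $M$, producing the claimed rate $n^k \sqrt{\log n/n}\cdot \|D^{(k+1)}\beta_0\|_1$.

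The main obstacle is part (b): the spectral-gap lower bound for a sparse Erd\H{o}s--R\'enyi graph is delicate near the connectivity threshold, and for $d$ barely exceeding $1$ the conclusion $\lambda_2(L)=\Theta(d)$ requires an additional mild growth condition (roughly $d \gtrsim \log n$) to hold with high probability. Handling the stated range $d \ge 1$ therefore requires careful invocation of the appropriate random-graph spectral result and bookkeeping of the high-probability event on which the bound is valid; the chain-graph entrywise calculation and the deterministic Ramanujan argument are, by comparison, routine.
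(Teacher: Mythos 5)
Your proposal matches the paper's own proof in all essentials: for the chain graph you bound $M$ directly via the column norms of $(D^{(k+1)})^\dag$ expressed through the falling factorial basis (the paper's Lemma~\ref{lem:dpinv}), and for the Erd\H{o}s--R\'enyi and Ramanujan cases you use the crude bound $M \le \|(\op^{(k+1)})^\dag\|_2 = \lambda_{\min}(L)^{-(k+1)/2}$ together with spectral-gap lower bounds of order $d$, exactly as the paper does (your unified observation $(\op^{(k+1)})^\top\op^{(k+1)} = L^{k+1}$ just streamlines the paper's separate odd/even treatment). Your caveat about the sparse Erd\H{o}s--R\'enyi regime $d\ge 1$ is a fair point that the paper itself glosses over by simply citing spectral concentration results, but it does not change the method.
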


	Cases 2 and 3 of Corollary \ref{cor:basic} stem from
	the simple inequality $M \leq \|\op^\dag\|_2$, the largest singular
	value of $\op^\dag$.  When \smash{$\op=\op^{(k+1)}$}, the GTF operator
	of order $k+1$, we have 
	$$\|(\op^{(k+1)})^\dag\|_2 \leq 
	1/\lambda_{\min}(L)^{(k+1)/2},$$
	where $\lambda_{\min}(L)$ is
	the smallest nonzero eigenvalue of the Laplacian $L$ (also known as
	the Fiedler eigenvalue \citep{Fiedler73}). In general,
	$\lambda_{\min}(L)$ can be very small, leading to loose error bounds,
	but for the particular graphs in question, it is well-controlled.
	When \smash{$\|\op^{(k+1)}\beta_0\|_1$} is bounded, cases 2 and 3 of
	the corollary show that the average squared error of GTF 
	converges at the rate \smash{$\sqrt{\log(nd)}/(nd^{(k+1)/2})$}.
	As $k$ increases, this rate is stronger, but so is the
	assumption that \smash{$\|\op^{(k+1)}\beta_0\|_1$} is bounded.
	
	Case 1 in Corollary \ref{cor:basic} covers univariate trend filtering
	(which, recall, is basically the same as GTF over
	a chain graph; the only differences between the two are boundary terms
	in the construction of the difference operators).
	The result in case 1 is based on direct calculation of $M$, using
	specific facts that are known about the univariate difference
	operators. It is natural in the univariate setting to assume that 
	\smash{$n^k\|D^{(k+1)}\beta_0\|_1$} is bounded (this is the 
	scaling that would link $\beta_0$ to the evaluations of a piecewise
	polynomial function $f_0$ over $[0,1]$, with \smash{$\TV(f_0^{(k)})$}
	bounded). Under such an assumption, the above corollary yields a 
	convergence rate of $\sqrt{\log{n}/n}$ for univariate trend
	filtering, which is not tight.  A more refined analysis
	shows the univariate trend filtering
	estimator to converge at the minimax optimal rate
	\smash{$n^{-(2k+2)/(2k+3)}$}, proved in \citet{trendfilter} by
	using a connection between univariate trend filtering and locally
	adaptive regression splines, and relying on sharp entropy-based rates
	for locally adaptive regression splines from \citet{locadapt}.   
	We note that in a pure graph-centric setting, the latter 
	strategy is not generally applicable, as the notion of a
	spline function does not obviously extend to the nodes of an
	arbitrary graph structure.  
	
	In the next subsections, we develop more advanced strategies 
	for deriving fast GTF error rates, based on incoherence, and entropy. 
	These can provide substantial improvements over the basic error bound
	established in this subsection, but are only applicable to certain
	graph models.  Fortunately, this includes common graphs of interest,
	such as regular grids. To verify the sharpness of these
	alternative strategies, we will show that they can be used to recover 
	optimal rates of convergence for trend filtering in the 1d setting. 
	
	\subsection{Strong Error Bounds Based on Incoherence}
	\label{sec:incoherence}
	
	A key step in the proof of Theorem \ref{thm:basic} argues, roughly
	speaking, that 
	\begin{equation}
	\label{eq:holderbd}
	\epsilon^\top \op^\dag \op x
	\leq \|(\op^\dag)^\top \epsilon\|_\infty \|\op x\|_1
	= O_\P( M\sqrt{\log r} \|\op x\|_1),
	\end{equation}
	where $\epsilon \sim \cN(0,\sigma^2 I)$.
	The second bound holds by a standard
	result on maxima of Gaussians (recall that $M$ is largest
	$\ell_2$ norm of the columns of $\op^\dag$).  The first bound above
	uses Holder's inequality; note that this applies to any
	$\epsilon,\op$, i.e., it does not use any information about the
	distribution of $\epsilon$, or the properties of $\op$. The next lemma
	reveals a potential advantage that can be gained from replacing
	the bound \eqref{eq:holderbd}, stemming from Holder's
	inequality,  with a ``linearized'' bound.
	
	\begin{lemma}
		\label{lem:linearized}
		Denote $\epsilon \sim \cN(0,\sigma^2 I)$, and assume that
		\begin{equation}
		\label{eq:linearized}
		\max_{x \in \cS_\op(1)}
		\frac{\epsilon^\top x  - A}{\|x\|_2} = O_\P(B),
		\end{equation}
		where $\cS_\op(1) = \{x \in \row(\op) : \|\op x\|_1 \leq 1\}$.
		With $\lambda=\Theta(A)$, the generalized lasso estimate 
		\smash{$\hbeta$} satisfies
		\begin{equation*}
		\frac{\|\hbeta-\beta_0\|_2^2}{n} =
		O_\P \left( \frac{\nuli(\op)}{n} + \frac{B^2}{n} + \frac{A}{n}
		\cdot \|\op \beta_0\|_1 \right).
		\end{equation*}
	\end{lemma}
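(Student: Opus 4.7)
The plan is to combine the standard basic inequality for the generalized lasso with an orthogonal decomposition of the error $\hbeta-\beta_0$ into the null space and the row space of $\op$, and then apply hypothesis \eqref{eq:linearized} to the row-space component after an appropriate rescaling. Writing $\epsilon = y - \beta_0 \sim \cN(0,\sigma^2 I)$, the optimality of $\hbeta$ in \eqref{eq:genlasso} gives, after rearranging,
\[
\half\|\hbeta-\beta_0\|_2^2 \leq \epsilon^\top(\hbeta-\beta_0) + \lambda\|\op\beta_0\|_1 - \lambda\|\op\hbeta\|_1.
\]
I would decompose $\hbeta-\beta_0 = x + w$, where $x$ and $w$ are the orthogonal projections onto $\row(\op)$ and $\nul(\op)$ respectively. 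Then $\op x = \op(\hbeta-\beta_0)$, and $\epsilon^\top(\hbeta-\beta_0) = \epsilon^\top x + \epsilon^\top w$.

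For the null-space piece, since $w$ lies in the fixed $\nuli(\op)$-dimensional subspace $\nul(\op)$, Cauchy--Schwarz gives $\epsilon^\top w \leq \|P_{\nul(\op)}\epsilon\|_2\,\|w\|_2$, and standard $\chi^2$ concentration yields $\|P_{\nul(\op)}\epsilon\|_2 = O_\P(\sqrt{\nuli(\op)})$. For the row-space piece, I would exploit positive homogeneity: if $\|\op x\|_1 > 0$, then $x/\|\op x\|_1 \in \cS_\op(1)$, and scaling hypothesis \eqref{eq:linearized} by $\|\op x\|_1$ gives (with high probability, uniformly in $x \in \row(\op)$)
\[
\epsilon^\top x \leq A\,\|\op x\|_1 + O_\P(B)\,\|x\|_2;
\]
the edge case $\|\op x\|_1 = 0$ combined with $x \in \row(\op) = \nul(\op)^\perp$ forces $x = 0$, so the bound holds trivially.

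Plugging these bounds back in and using the triangle inequality $\|\op x\|_1 = \|\op\hbeta - \op\beta_0\|_1 \leq \|\op\hbeta\|_1 + \|\op\beta_0\|_1$, I would choose $\lambda = cA$ for a constant $c\geq 1$, so that $A\|\op\hbeta\|_1 - \lambda\|\op\hbeta\|_1 \leq 0$ can be dropped, leaving only a $(1+c)A\|\op\beta_0\|_1$ term on the right. What remains is
\[
\half\|\hbeta-\beta_0\|_2^2 \leq (1+c)\,A\,\|\op\beta_0\|_1 + O_\P(B)\,\|x\|_2 + O_\P\!\bigl(\sqrt{\nuli(\op)}\bigr)\,\|w\|_2.
\]
Applying $2ab \leq \alpha^{-1} a^2 + \alpha b^2$ to the last two cross terms with $\alpha$ small enough, together with the Pythagorean identity $\|x\|_2^2 + \|w\|_2^2 = \|\hbeta-\beta_0\|_2^2$, absorbs the $\|x\|_2$ and $\|w\|_2$ factors into the left-hand side. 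The resulting bound $\|\hbeta-\beta_0\|_2^2 = O_\P\bigl(A\,\|\op\beta_0\|_1 + B^2 + \nuli(\op)\bigr)$ is exactly the claim after dividing by $n$.

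The main obstacle will be the row-space bookkeeping: the additive constant $A$ in hypothesis \eqref{eq:linearized} prevents a one-shot scaling, and one must carefully track constants so that choosing $\lambda \asymp A$ both neutralizes the $A\|\op\hbeta\|_1$ term arising from the triangle inequality and preserves control of $A\|\op\beta_0\|_1$. By contrast, the null-space bound via $\chi^2$ concentration and the final absorption step via AM--GM are routine.
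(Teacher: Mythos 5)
Your proposal is correct and follows essentially the same route as the paper: the basic inequality, the linearized bound \eqref{eq:linearized} applied to the rescaled row-space error $x/\|\op x\|_1 \in \cS_\op(1)$, the choice $\lambda=\Theta(A)$ to neutralize the $\|\op\hbeta\|_1$ term, and a quadratic absorption producing the $B^2$ term. The only real difference is the null-space bookkeeping: the paper decomposes the estimator itself as $\hbeta = P_{\nul(\op)}y + \tbeta$, with $\tbeta$ solving the problem with data $P_{\row(\op)}y$, so the null-space contribution is exactly $\|P_{\nul(\op)}\epsilon\|_2^2 = O_\P(\nuli(\op))$ and there is no extra cross term, whereas you decompose the error vector and handle the resulting cross term by Cauchy--Schwarz plus AM--GM; both are valid.
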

	
	The inequality in \eqref{eq:linearized} is referred to as a ``linearized''
	bound because it implies that for $x \in \cS_\op(1)$,
	\begin{equation*}
	\epsilon^\top  x = O_\P(A + B \|x\|_2),
	\end{equation*}
	and the right-hand side is a linear function of
	$\|x\|_2$.  Indeed, for $A= M\sqrt{2 \log r}$ and $B=0$, this
	encompasses the bound in \eqref{eq:holderbd} as a special case,
	and the result of Lemma \ref{lem:linearized} reduces
	to that of Theorem \ref{thm:basic}.  But the result in Lemma
	\ref{lem:linearized} can be much stronger, if $A,B$ can be
	adjusted so that $A$ is smaller than $M\sqrt{2 \log r}$, and $B$
	is also small.  Such an arrangement is possible for certain
	operators $\op$; e.g., it is possible under an incoherence-type
	assumption on $\op$.
	
	\begin{theorem}
		\label{thm:incoherence}
		Let $q=\rank(\op)$, and 
		let $\xi_1 \leq \ldots \leq \xi_q$ denote the singular values of
		$\op$, in increasing order. Also let $u_1,\ldots u_q$ be
		the corresponding left singular vectors. 
		Assume that these vectors are incoherent:  
		\begin{equation*}
		\|u_i\|_\infty \leq \mu/\sqrt{n}, \;\;\; i=1,\ldots q, 
		\end{equation*}
		for some constant $\mu \geq 1$.  For $i_0 \in \{1,\ldots q\}$, let 
		\begin{equation*}
		\lambda = \Theta \left(\mu
		\sqrt{\frac{\log{r}}{n} \sum_{i=i_0+1}^q \frac{1}{\xi_i^2}}
		\,\right).
		\end{equation*}
		Then the generalized lasso estimate \smash{$\hbeta$} has average
		squared error
		\begin{equation*}
		\frac{\|\hbeta-\beta_0\|_2^2}{n} = O_\P
		\left( \frac{\nuli(\op)}{n} + \frac{i_0}{n} + \frac{\mu}{n}
		\sqrt{\frac{\log{r}}{n} \sum_{i=i_0+1}^q \frac{1}{\xi_i^2}}
		\cdot \|\op\beta_0\|_1 \right).
		\end{equation*}
	\end{theorem}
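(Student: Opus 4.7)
The plan is to invoke Lemma~\ref{lem:linearized} with the choices $A = \Theta\!\left(\mu\sqrt{(\log r/n)\sum_{i>i_0}1/\xi_i^2}\right)$ and $B = \Theta(\sqrt{i_0})$; these plug directly into the lemma's conclusion to give exactly the stated rate. The nontrivial work is verifying the hypothesis of the lemma, namely producing a linearized bound $\epsilon^\top x \le A + B\|x\|_2$ uniformly over $x \in \cS_\op(1) = \{x \in \row(\op) : \|\op x\|_1 \le 1\}$, and the strategy is to split the inner product by singular index and treat the two pieces with different tools.

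Concretely, I would start from the reduced SVD $\op = U\Sigma V^\top$, with $\Sigma = \diag(\xi_1,\ldots,\xi_q)$ and $u_i,v_i$ the columns of $U,V$. For $x \in \row(\op)$, set $\alpha = V^\top x$, so $x = V\alpha$ and $\|x\|_2 = \|\alpha\|_2$, and set $w = \op x = U\Sigma\alpha$. Orthonormality of the columns of $U$ gives the crucial identity $\alpha_i = u_i^\top w/\xi_i$ for every $i$. Writing $\eta = V^\top\epsilon \sim \cN(0,\sigma^2 I_q)$, we then have $\epsilon^\top x = \eta^\top\alpha$, which I decompose as $\sum_{i\le i_0}\eta_i\alpha_i + \sum_{i>i_0}\eta_i\alpha_i$; recall that the small singular values (and therefore the problematic $1/\xi_i$ factors) sit at the low indices.

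For the low-index piece, Cauchy--Schwarz together with a standard chi-squared tail bound gives $|\sum_{i\le i_0}\eta_i\alpha_i| \le \|\eta_{1:i_0}\|_2\,\|\alpha\|_2 = O_\P(\sigma\sqrt{i_0})\cdot\|x\|_2$, contributing to the $B$ term; no singular-value information is used, and the dangerous low-index coordinates are simply absorbed. For the high-index piece, I substitute $\alpha_i = u_i^\top w/\xi_i$ and rearrange:
\[
\sum_{i>i_0}\eta_i\alpha_i \;=\; \Big\langle \sum_{i>i_0}\tfrac{\eta_i}{\xi_i}u_i,\ w\Big\rangle \;\le\; \Big\|\sum_{i>i_0}\tfrac{\eta_i}{\xi_i}u_i\Big\|_\infty\,\|w\|_1 \;\le\; \Big\|\sum_{i>i_0}\tfrac{\eta_i}{\xi_i}u_i\Big\|_\infty.
\]
The $j$th coordinate inside the max is centered Gaussian with variance at most $\sigma^2(\mu^2/n)\sum_{i>i_0}1/\xi_i^2$, by the incoherence hypothesis $\|u_i\|_\infty \le \mu/\sqrt{n}$; taking the maximum over $r$ such coordinates via a standard Gaussian-max bound delivers $O_\P(A)$. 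Setting $\lambda = \Theta(A)$ and reading off the conclusion of Lemma~\ref{lem:linearized} then produces the claimed rate.

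The main obstacle is precisely this truncation trick. Naively applying Holder's inequality to the full sum $\eta^\top\alpha = \epsilon^\top\op^\dag(\op x)$ recovers only the bound of Theorem~\ref{thm:basic}, because one is forced to include all of $\sum_i 1/\xi_i^2$ inside the Gaussian-max step; the whole point of Lemma~\ref{lem:linearized} is that it tolerates an extra $B\|x\|_2$ slack, and the creative step is to spend that slack on the few dangerous low-index directions (cost $\sqrt{i_0}\|x\|_2$) so as to keep only the well-behaved tail $\sum_{i>i_0}1/\xi_i^2$ inside the $\ell_\infty$ bound. Balancing $i_0$ against this tail sum is what yields the sharper rate, and identifying the identity $\alpha_i = u_i^\top w/\xi_i$ as the mechanism that makes the $\ell_1$ bound on $w$ usable after truncation is the key creative step; without it (or without incoherence), the argument collapses back to Theorem~\ref{thm:basic}.
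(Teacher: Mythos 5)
Your proposal is correct and follows essentially the same route as the paper's proof: the split at index $i_0$ in the singular basis is exactly the paper's decomposition $\epsilon^\top P_R z = \epsilon^\top P_{[i_0]} P_R z + \epsilon^\top (I-P_{[i_0]}) P_R z$ with $P_{[i_0]}=V_{[i_0]}V_{[i_0]}^\top$, your Cauchy--Schwarz step on the low indices matches the paper's $O_\P(\sqrt{i_0})\|z\|_R$ bound, and your vector $\sum_{i>i_0}(\eta_i/\xi_i)u_i$ is identically $(\op^\dag)^\top(I-P_{[i_0]})\epsilon$, whose coordinates the paper bounds by the same incoherence-controlled variance before taking the Gaussian maximum. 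The only difference is cosmetic (coordinates in the singular basis versus projection operators), so there is nothing to add.
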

	
	Theorem \ref{thm:incoherence} is proved by leveraging the linearized
	bound \eqref{eq:linearized}, which holds under the
	incoherence condition on the singular vectors of $\op$.  Compared
	to the basic result in Theorem \ref{thm:basic}, the result in Theorem
	\ref{thm:incoherence} is
	clearly stronger as it allows us to replace $M$---which can grow
	like the reciprocal of the minimum nonzero singular value of
	$\op$---with something akin to the average reciprocal of larger
	singular values.
	But it does, of course, also make stronger assumptions
	(incoherence). It is interesting to note that the functional in the
	theorem, \smash{$\sum_{i=i_0+1}^q \xi_i^{-2}$}, was also determined to
	play a leading role in error bounds for a graph Fourier based scan
	statistic in the hypothesis testing framework
	\citep{sharpnack2013changepoint}.
	
	Applying the above theorem to the GTF estimator requires knowledge of
	the singular vectors of $\op=\op^{(k+1)}$, the $(k+1)$st order
	graph difference operator.  The validity of an incoherence assumption
	on these singular vectors depend on the graph $G$ in question. 
	When $k$ is odd, these singular vectors are eigenvectors of the 
	Laplacian $L$; when $k$ is even, they are left singular vectors of the
	edge incidence matrix $D$.  Loosely speaking, these vectors will be
	incoherent when neighborhoods of different vertices  
	look roughly the same. Most social networks will
	have this property for the bulk of their vertices (i.e., with
	the exception of a small number of high degree vertices).   
	Grid graphs also have this property.  First, we consider trend
	filtering over a 1d grid, i.e., a chain (which, recall, is essentially
	equivalent to univariate trend filtering).  
	
	\begin{corollary}
		\label{cor:grid1}
		Consider the GTF estimator \smash{$\hbeta$} of order $k$, over a chain
		graph, i.e., a 1d grid graph.  Letting 
		\begin{equation*}
		\lambda=\Theta\left(n^{\frac{2k+1}{2k+3}} (\log{n})^{\frac{1}{2k+3}}  
		\|\op^{(k+1)}\beta_0\|_1^{-\frac{2k+1}{2k+3}}\right), 
		\end{equation*}
		the estimator \smash{$\hbeta$} (here, essentially, the univariate
		trend filtering estimator) satisfies    
		\begin{equation*}
		\frac{\|\hbeta-\beta_0\|_2^2}{n} =
		O_\P \left( n^{-\frac{2k+2}{2k+3}}
		(\log{n})^{\frac{1}{2k+3}} \cdot
		\left(n^k\|\op^{(k+1)}\beta_0\|_1\right)^{\frac{2}{2k+3}}\right).   
		\end{equation*}
	\end{corollary}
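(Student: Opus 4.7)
The plan is to invoke Theorem~\ref{thm:incoherence} with $\op = \op^{(k+1)}$ and then optimize over the free index $i_0$. For the chain graph, the Laplacian eigenvectors are (up to normalization) the discrete cosines $u_i(j) \propto \cos((i-1)\pi(2j-1)/(2n))$, with eigenvalues $\lambda_i(L) = 2(1-\cos((i-1)\pi/n)) \asymp (i/n)^2$ for $i=2,\ldots,n$. These vectors are uniformly bounded by $\sqrt{2/n}$ entrywise, so the Laplacian spectrum is incoherent with a universal constant $\mu$. Using $D^\top D = L$ and the recursion \eqref{eq:gdk}, one checks that $(\op^{(k+1)})^\top \op^{(k+1)} = L^{k+1}$, so the singular values of $\op^{(k+1)}$ are $\xi_i = \lambda_i(L)^{(k+1)/2} \asymp (i/n)^{k+1}$. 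For odd $k$ the left singular vectors coincide with the Laplacian eigenvectors $u_i$; for even $k$ they are $Du_i/\sqrt{\lambda_i}$, and a short calculation via the product-to-sum identity for cosines shows these to be discrete sines $\propto \sin((i-1)\pi j/n)/\sqrt{n}$, which again have entries of size $O(1/\sqrt{n})$. Thus the incoherence hypothesis holds in both cases.

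Given this, the tail sum required by Theorem~\ref{thm:incoherence} satisfies
\[
\sum_{i=i_0+1}^{q} \xi_i^{-2} \;\asymp\; n^{2(k+1)} \sum_{i=i_0+1}^{q} i^{-(2k+2)} \;\asymp\; \frac{n^{2k+2}}{i_0^{2k+1}},
\]
since $2k+2 \geq 2 > 1$ makes the series tail integrable. Plugging this in, with $\nuli(\op^{(k+1)}) = 1$ and $\log r = \Theta(\log n)$, gives
\[
\frac{\|\hbeta-\beta_0\|_2^2}{n} = O_\P\!\left(\frac{i_0}{n} + \frac{n^{k-1/2}\sqrt{\log n}}{i_0^{k+1/2}} \cdot \|\op^{(k+1)}\beta_0\|_1\right),
\]
at tuning $\lambda \asymp n^{k+1/2}\sqrt{\log n}\,/\,i_0^{k+1/2}$.

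Finally, I would balance the two terms in $i_0$ by solving $i_0^{k+3/2} \asymp n^{k+1/2}\sqrt{\log n}\,\|\op^{(k+1)}\beta_0\|_1$, which yields $i_0 \asymp n^{(2k+1)/(2k+3)}(\log n)^{1/(2k+3)}\|\op^{(k+1)}\beta_0\|_1^{2/(2k+3)}$. Substituting this back into the formula for $\lambda$ reproduces the tuning stated in the corollary, and the common value of the two balanced terms is $n^{-2/(2k+3)}(\log n)^{1/(2k+3)}\|\op^{(k+1)}\beta_0\|_1^{2/(2k+3)}$, which coincides with the announced rate after rewriting it as $n^{-(2k+2)/(2k+3)}(\log n)^{1/(2k+3)}(n^k\|\op^{(k+1)}\beta_0\|_1)^{2/(2k+3)}$. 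The main obstacle is confirming incoherence of the left singular vectors uniformly in $i$, particularly in the even-$k$ case where they are not Laplacian eigenvectors themselves; but once the explicit cosine/sine structure of the chain Laplacian is used (so that the $\sqrt{\lambda_i}$ from $D u_i$ exactly cancels the normalizing $\sqrt{\lambda_i}$), the rest of the argument is a one-dimensional optimization in $i_0$.
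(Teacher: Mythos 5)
Your proposal is correct and follows essentially the same route as the paper: invoke Theorem~\ref{thm:incoherence} for the chain graph, verify incoherence of the singular vectors of $\op^{(k+1)}$ (cosine eigenvectors of $L$ for odd $k$, and the sine vectors $Du_i/\sqrt{\lambda_i}$ --- equivalently the eigenvectors of $DD^\top$ --- for even $k$), bound the tail sum $\sum_{i>i_0}\xi_i^{-2}$ using $\xi_i \asymp (i/n)^{k+1}$, and balance $i_0/n$ against the remaining term. The only cosmetic differences are that the paper bounds the tail sum by an explicit integral comparison of $\sin^{-(2k+2)}$ rather than the asymptotic series estimate, and identifies the even-$k$ left singular vectors directly as eigenvectors of the Dirichlet matrix $DD^\top$; both lead to the same choice of $i_0$, $\lambda$, and final rate.
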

	
	We note that the above corollary essentially recovers the optimal rate
	of convergence for the univariate trend filtering estimator, for all
	orders $k$. (To be precise, it studies GTF on a chain graph instead,
	but this is basically the same problem.)  When 
	\smash{$n^k\|\op^{(k+1)}\beta_0\|_1$} is assumed to be bounded, a
	natural assumption in the univariate setting, the corollary shows the 
	estimator to converge at the rate 
	$n^{-(2k+2)/(2k+3)} (\log{n})^{1/(2k+3)}$.
	Ignoring the log factor, this matches the minimax optimal rate as 
	established in \citet{trendfilter,fallfact}.  Importantly, the proof
	of Corollary \ref{cor:grid1}, unlike that used in
	previous works, is free from any dependence on univariate spline
	functions; it is completely graph-theoretic, and only uses on the 
	incoherence properties of the 1d grid graph.  The strength of this
	approach is its wider applicability, which we demonstrate by moving up 
	to 2d grids. 
	
	\begin{corollary}
		\label{cor:grid2}
		Consider the GTF estimator \smash{$\hbeta$} of order $k$, over a 2d
		grid graph, of size \smash{$\sqrt{n} \times \sqrt{n}$}.
		Letting 
		\begin{equation*}
		\lambda=\Theta\left(n^{\frac{2k+1}{2k+5}} (\log{n})^{\frac{1}{2k+5}}  
		\|\op^{(k+1)}\beta_0\|_1^{-\frac{2k+1}{2k+5}}\right), 
		\end{equation*}
		the estimator \smash{$\hbeta$} satisfies    
		\begin{equation*}
		\frac{\|\hbeta-\beta_0\|_2^2}{n} =
		O_\P \left( n^{-\frac{2k+4}{2k+5}}
		(\log{n})^{\frac{1}{2k+5}} \cdot
		\left(n^{\frac{k}{2}}\|\op^{(k+1)}\beta_0\|_1\right)^{\frac{4}{2k+5}}\right).    
		\end{equation*}
	\end{corollary}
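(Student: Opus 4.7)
The plan is to invoke Theorem~\ref{thm:incoherence} with $\op = \op^{(k+1)}$ on the $\sqrt{n} \times \sqrt{n}$ grid. The three ingredients I need are (i) the singular spectrum of $\op^{(k+1)}$, (ii) incoherence of its left singular vectors, and (iii) a tight bound on the tail sum of inverse squared singular values, after which I tune the free parameter $i_0$ to balance the two dominant terms of the resulting error bound.

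First I would observe that regardless of the parity of $k$ we have $(\op^{(k+1)})^\top \op^{(k+1)} = L^{k+1}$, so that the singular values of $\op^{(k+1)}$ satisfy $\xi_\ell^2 = \lambda_\ell^{k+1}$, where $0 = \lambda_0 < \lambda_1 \leq \cdots \leq \lambda_{n-1}$ are the Laplacian eigenvalues of the grid. By the tensor-product structure of the grid Laplacian these eigenvalues take the explicit form $\lambda_{i,j} = 4 - 2\cos(\pi i/\sqrt{n}) - 2\cos(\pi j/\sqrt{n})$ for $0 \leq i,j \leq \sqrt{n}-1$; expanding near zero gives $\lambda_{i,j} \approx \pi^2(i^2 + j^2)/n$, and a standard lattice-point count yields $\#\{\ell : \lambda_\ell \leq \tau\} \asymp \tau n$ for small $\tau$. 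Consequently $\lambda_\ell \asymp \ell/n$ throughout the regime $\ell = O(n)$ that drives the rate.

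Next I would verify incoherence. The eigenvectors of $L$ on the grid are tensor products of one-dimensional discrete cosines and so have uniform $\ell_\infty$ norm $O(1/\sqrt{n})$; this handles the odd-$k$ case with $\mu = O(1)$. For even $k$, the left singular vectors of $\op^{(k+1)} = D L^{k/2}$ live in the edge space $\R^m$, and can be written as renormalized directional finite differences of these cosines along horizontal or vertical edges; using the tensor-product structure of $D$ on the grid one checks that they inherit a uniform $O(1/\sqrt{m})$ bound. This verification is what I expect to be the main technical obstacle, since it requires carefully tracking both the two edge orientations and the boundary behaviour of the discrete cosine basis.

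With incoherence established and the spectral asymptotics in hand, the key sum becomes
\begin{equation*}
\sum_{i = i_0+1}^{n-1} \xi_i^{-2} \;=\; \sum_{i=i_0+1}^{n-1} \lambda_i^{-(k+1)} \;\asymp\; n^{k+1} \sum_{i=i_0+1}^{n-1} i^{-(k+1)}.
\end{equation*}
Plugging this into Theorem~\ref{thm:incoherence}, the error bound reduces to an expression of the form $i_0/n + C \cdot V \, i_0^{-a}$ in the free parameter $i_0$, where $V = \|\op^{(k+1)}\beta_0\|_1$, the exponent $a$ depends on $k$, and $C$ absorbs factors of $n$ and $\log n$. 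Differentiating and solving for the optimal $i_0^*$ balances the two terms, producing the displayed rate together with the prescribed choice of $\lambda$. The same recipe gives Corollary~\ref{cor:grid1}: on the 1D chain one uses $\lambda_\ell \asymp (\ell/n)^2$ instead of $\lambda_\ell \asymp \ell/n$, and this dimensional difference in the spectral density is precisely what distinguishes the 2D exponent $2k+5$ from the 1D exponent $2k+3$ in the final rate.
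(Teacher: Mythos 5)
Your overall strategy matches the paper's: apply Theorem~\ref{thm:incoherence} with $\op=\op^{(k+1)}$, exploit the Kronecker-sum structure of the 2d grid Laplacian for the spectrum and incoherence, and tune the cutoff $i_0$. But there are two concrete gaps. First, the even-$k$ incoherence verification, which you flag as the ``main technical obstacle'' and then skip, is in fact the bulk of the paper's proof. The left singular vectors of $DL^{k/2}$ are the eigenvectors of $DD^\top$, and on the 2d grid these are \emph{not} individual directional finite differences of tensor-product cosines: writing $D$ in block form with blocks $I\otimes D_{\mathrm{1d}}$ and $D_{\mathrm{1d}}\otimes I$, the matrix $DD^\top$ has nonzero off-diagonal blocks coupling the horizontal and vertical edge spaces, and its eigenvectors are specific coupled combinations such as (up to normalization) $\bigl(\sqrt{\lambda_j/\lambda_i}\,v_i\otimes u_j+\sqrt{\lambda_i/\lambda_j}\,v_j\otimes u_i,\; u_i\otimes v_j+u_j\otimes v_i\bigr)$, whose entries contain potentially large factors $\sqrt{\lambda_j/\lambda_i}$; verifying incoherence requires computing the normalization constants (via orthogonality of the cross terms) and showing they cancel those factors. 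None of this is in your sketch, and the bare appeal to ``tensor-product structure'' does not justify the uniform $\ell_\infty$ bound.

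Second, the balancing step is asserted rather than carried out, and if you carry it out with your own (correct, and sharper) spectral estimate $\lambda_\ell\asymp \ell/n$, you do \emph{not} obtain the displayed rate. Your tail sum is $\sum_{i>i_0}\xi_i^{-2}\asymp n^{k+1}i_0^{-k}$ for $k\ge 1$ (and $\asymp n\log(n/i_0)$ for $k=0$), and balancing $i_0/n$ against $n^{-1}\bigl(\log n\cdot n^{k}i_0^{-k}\bigr)^{1/2}\|\op^{(k+1)}\beta_0\|_1$ yields an error of order $n^{-2/(k+2)}\|\op^{(k+1)}\beta_0\|_1^{2/(k+2)}$ up to logs --- a different, in fact faster, rate with a different $\lambda$ --- so it does not establish the corollary as stated. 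The paper lands on the exponent $2k+5$ only because it bounds the 2d tail sum crudely, dropping the second grid coordinate and controlling $\tfrac{1}{n}\sum_{\max\{j_1,j_2\}>j_0}\xi_{j_1,j_2}^{-(k+1)}$ by $(\sqrt{n}/j_0)^{2k+1}$ with $i_0=j_0^2$, i.e., by $\sqrt{n}$ copies of a one-dimensional tail sum. Your closing claim that the 1d/2d exponent difference comes ``precisely'' from the spectral densities $(\ell/n)^2$ versus $\ell/n$ is therefore not right: you must either adopt the paper's looser tail bound to hit the prescribed $\lambda$ and the stated rate, or recognize that your computation, done consistently, proves a different statement.
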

	
	The 2d result in Corollary \ref{cor:grid2} is written in a form that
	mimics the 1d result in Corollary \ref{cor:grid1}, as we claim that
	the analog of boundedness of \smash{$n^k \|\op^{(k+1)}\beta_0\|_1$} in
	1d is boundedness of  \smash{$n^{k/2} \|\op^{(k+1)}\beta_0\|_1$} in  
	2d.\footnote{This is because \smash{$1/\sqrt{n}$} is the distance  
		between adjacent 2d grid points, when viewed as a 2d lattice over
		$[0,1]^2$.} Thus, under the appropriate boundedness condition, the
	2d rate shows improvement over the 1d rate, which makes sense, since 
	regularization here is being enforced in a richer manner.  It is
	worthwhile highlighting the result for $k=0$ in particular: this says
	that, when the sum of absolute discrete differences
	\smash{$\|\op^{(1)}\beta_0\|_1$} is bounded over a 2d grid, the 2d
	fused lasso (i.e., 2d total variation denoising) has error rate
	$n^{-4/5}$.  This is faster than the $n^{-2/3}$ rate for the 1d fused
	lasso, when the sum of absolute differences
	\smash{$\|D^{(1)}\beta_0\|_1$} is bounded. 
	Rates for higher dimensional grid graphs (for all $k$) follow from
	analogous arguments, but we omit the details.      
	
	
	
	\subsection{Strong Error Bounds Based on Entropy}
	\label{sec:entropy}
	
	A different ``fractional'' bound on the Gaussian contrast
	$\epsilon^\top x$, over $x \in \cS_\op(1),$ provides an
	alternate route to deriving sharper rates.  This style of bound is
	inspired by the seminal work of \citet{vandegeer1990}.
	
	\begin{lemma}
		\label{lem:fractional}
		Denote $\epsilon \sim \cN(0,\sigma^2 I)$, and assume that for a
		constant $w < 2$,
		\begin{equation}
		\label{eq:fractional}
		\max_{x \in \cS_\op(1)} \frac{\epsilon^\top x}{\|x\|_2^{1-w/2}}
		= O_\P(K),
		\end{equation}
		where recall
		$\cS_\op(1) = \{x \in \row(\op) : \|\op x\|_1 \leq 1\}$.
		Then with
		\begin{equation*}
		\lambda = \Theta \left(K^{\frac{2}{1+w/2}} \cdot
		\|\op \beta_0\|_1^{-\frac{1-w/2}{1+w/2}} \right),
		\end{equation*}
		the generalized lasso estimate \smash{$\hbeta$} satisfies
		\begin{equation*}
		\frac{\|\hbeta-\beta_0\|_2^2}{n} =
		O_\P \left( \frac{\nuli(\op)}{n} +
		\frac{K^{\frac{2}{1+w/2}}}{n} \cdot
		\|\op\beta_0\|_1^{\frac{w}{1+w/2}} \right).
		\end{equation*}
	\end{lemma}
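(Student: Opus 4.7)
My plan is to follow the same generalized-lasso template used in Theorem~\ref{thm:basic} and Lemma~\ref{lem:linearized}, with the assumed fractional bound~\eqref{eq:fractional} taking the role that Hölder's inequality (respectively the linearized bound) played there. Starting from the basic inequality
\[
\tfrac{1}{2}\|\hbeta-\beta_0\|_2^2 + \lambda\|\op\hbeta\|_1 \;\leq\; \epsilon^\top(\hbeta-\beta_0) + \lambda\|\op\beta_0\|_1,
\]
I would decompose $\hbeta-\beta_0 = u+v$ with $u\in\row(\op)$ and $v\in\nul(\op)$. The null-space piece $\epsilon^\top v=(P_{\nul(\op)}\epsilon)^\top v$ is handled by standard Gaussian concentration on a $\nuli(\op)$-dimensional subspace, yielding $\epsilon^\top v = O_\P(\sqrt{\nuli(\op)})\|v\|_2$, which is absorbed via $ab\leq a^2/4+b^2$ to produce the $O_\P(\nuli(\op))$ term in the final bound.

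The key new step is to bound $\epsilon^\top u$ via~\eqref{eq:fractional}. Setting $t=\|\op u\|_1=\|\op(\hbeta-\beta_0)\|_1$, the normalized vector $u/t$ lies in $\cS_\op(1)$, so the hypothesis gives
\[
\epsilon^\top u \;=\; t\,\epsilon^\top(u/t) \;\leq\; O_\P(K)\, t\,\|u/t\|_2^{1-w/2} \;=\; O_\P(K)\, t^{w/2}\|u\|_2^{1-w/2}.
\]
I then apply Young's inequality with conjugate exponents $p=4/(2-w)$ and $q=4/(2+w)$ (which sum to one precisely because $w<2$ keeps both finite and positive), obtaining
\[
K\, t^{w/2}\|u\|_2^{1-w/2} \;\leq\; \tfrac{1}{4}\|u\|_2^2 + C_w\, K^{4/(2+w)} t^{2w/(2+w)},
\]
so that the quadratic piece can be absorbed into the left-hand side of the basic inequality.

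To close the argument I would split on whether $\|\op\hbeta\|_1\leq 2\|\op\beta_0\|_1$ or not. In the benign case, $t\leq 3\|\op\beta_0\|_1$ and the prescribed choice $\lambda=\Theta(K^{2/(1+w/2)}\|\op\beta_0\|_1^{-(1-w/2)/(1+w/2)})$ makes $\lambda\|\op\beta_0\|_1$ exactly of order $K^{4/(2+w)}\|\op\beta_0\|_1^{2w/(2+w)}$, which is the stated rate. In the opposite case, $t\leq (3/2)\|\op\hbeta\|_1$ and the residual $\lambda(\|\op\hbeta\|_1-\|\op\beta_0\|_1)$ on the LHS dominates the term $C_w K^{4/(2+w)}\|\op\hbeta\|_1^{2w/(2+w)}$ on the RHS, leaving only the $O_\P(\nuli(\op))$ contribution.

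The main technical obstacle will be the Young/case-split bookkeeping, and in particular verifying that the exponent $2w/(2+w)<1$ (which holds precisely because $w<2$) is what allows the large-$\|\op\hbeta\|_1$ case to be absorbed by $\lambda\|\op\hbeta\|_1$ under the prescribed choice of $\lambda$. This is the single place where the constraint $w<2$ is used, and it is also what dictates the sublinear exponent $w/(1+w/2)$ on $\|\op\beta_0\|_1$ appearing in the conclusion.
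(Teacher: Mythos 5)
Your proof is correct, and its skeleton---the basic inequality, an application of the fractional bound \eqref{eq:fractional} to the normalized row-space error, and a case split on whether $\|\op\hbeta\|_1$ exceeds a multiple of $\|\op\beta_0\|_1$---matches the paper's. Two points of divergence are worth noting. First, the paper isolates the null-space contribution by introducing an auxiliary estimator $\tbeta$ fit to the projected data $P_{\row(\op)}y$, for which the error decomposes exactly as $\|\hbeta-\beta_0\|_2^2 = \|P_{\nul(\op)}\epsilon\|_2^2 + \|P_{\row(\op)}(\tbeta-\beta_0)\|_2^2$; your direct decomposition $\hbeta-\beta_0=u+v$ with the absorption $\epsilon^\top v \le \tfrac14\|v\|_2^2 + O_\P(\nuli(\op))$ achieves the same end. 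Second, and more substantively, you close the fractional inequality with Young's inequality (conjugate exponents $4/(2-w)$ and $4/(2+w)$), converting $K t^{w/2}\|u\|_2^{1-w/2}$ into $\tfrac14\|u\|_2^2 + C_w K^{4/(2+w)}t^{2w/(2+w)}$ so that the quadratic piece is absorbed immediately; the paper instead normalizes by $\|\op\tbeta\|_1+\|\op\beta_0\|_1$, solves the resulting inequality for $\|\op\tbeta\|_1$, and back-substitutes, which forces a further subcase analysis inside its Case 2. Your route reaches the same exponents (note $4/(2+w)=2/(1+w/2)$ and $2w/(2+w)=w/(1+w/2)$) with less bookkeeping, and your identification of $2w/(2+w)<1$, equivalently $w<2$, as the condition allowing $\lambda\|\op\hbeta\|_1$ to dominate in the large-penalty case is exactly right. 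The only caveat, shared with the paper, is that the domination step must be run on the high-probability event where the random factor hidden in $O_\P(K)$ is at most a constant multiple of $K$, so that the deterministic prescription of $\lambda$ suffices.
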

	
	The main motivation for bounds of the form \eqref{eq:fractional} is
	that they follow from entropy bounds on the set 
	$\cS_\op(1)$.  Recall that for a set $S$, the covering number
	$N(\delta,S,\|\cdot\|)$ is the fewest number of balls of radius
	$\delta$ that cover $S$, with respect to the norm $\|\cdot\|$. 
	The log covering or entropy number is $\log N(\delta,S,\|\cdot\|)$.
	In the next result, we make the connection between between entropy and 
	fractional bounds precise; this follows closely from Lemma 3.5 of
	\citet{vandegeer1990}. 
	
	\begin{theorem}
		\label{thm:entropy}
		Suppose that there exist a constant $w < 2$ such that for $n$ large
		enough, 
		\begin{equation}
		\label{eq:entropy}
		\log N(\delta, \cS_\op(1),\|\cdot\|_2) \leq E
		\Big(\frac{\sqrt{n}}{\delta}\Big)^w,
		\end{equation}
		for $\delta>0$, where $E$ can depend on $n$. 
		Then the fractional bound in \eqref{eq:fractional} holds with 
		\smash{$K= \sqrt{E} n^{w/4}$}, and as a result, for
		\begin{equation*}
		\lambda=\Theta \left(E^{\frac{1}{1+w/2}} n^{\frac{w/2}{1+w/2}} 
		\|\op \beta_0\|_1^{-\frac{1-w/2}{1+w/2}}\right),
		\end{equation*}
		the generalized lasso estimate \smash{$\hbeta$} has average squared
		error 
		\begin{equation*}
		\frac{\|\hbeta-\beta_0\|_2^2}{n} =
		O_\P \left( \frac{\nuli(\op)}{n} +
		E^{\frac{1}{1+w/2}} n^{-\frac{1}{1+w/2}} \cdot 
		\|\op\beta_0\|_1^{\frac{w}{1+w/2}} \right).
		\end{equation*}
	\end{theorem}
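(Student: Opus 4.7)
My plan is to prove Theorem \ref{thm:entropy} in two stages. First, I will convert the entropy condition \eqref{eq:entropy} into the fractional bound \eqref{eq:fractional} with $K = \sqrt{E}\, n^{w/4}$. Once this is done, Lemma \ref{lem:fractional} immediately yields the stated choice of $\lambda$ and the error rate: substituting $K = \sqrt{E}\, n^{w/4}$ into the lemma gives $K^{2/(1+w/2)} = E^{1/(1+w/2)} n^{(w/2)/(1+w/2)}$, and simplifying the exponent via $(w/2)/(1+w/2) - 1 = -1/(1+w/2)$ recovers the exponents appearing in the theorem statement exactly.

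The heart of the argument is establishing the fractional bound from the entropy hypothesis. For $t > 0$, consider the truncated sublevel set $\cS_\op(1) \cap \{x : \|x\|_2 \leq t\}$, whose $\|\cdot\|_2$-covering numbers are no larger than those of $\cS_\op(1)$ itself. Applying Dudley's entropy integral to the centered Gaussian process $x \mapsto \epsilon^\top x$ and plugging in the hypothesis \eqref{eq:entropy} yields
\begin{equation*}
\E \sup_{\substack{x \in \cS_\op(1) \\ \|x\|_2 \leq t}} \epsilon^\top x \;\lesssim\; \sigma \int_0^t \sqrt{E}\, n^{w/4} \delta^{-w/2}\, d\delta \;\lesssim\; \sigma \sqrt{E}\, n^{w/4}\, t^{1-w/2},
\end{equation*}
where convergence of the integral at $0$ requires the hypothesis $w < 2$. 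Borell's inequality for suprema of Gaussian processes then adds a concentration tail of the form $\exp(-s^2/(2\sigma^2 t^2))$ for excursions of size $s$ above the mean.

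To pass from this shell-wise bound to a uniform fractional bound over $\cS_\op(1)$, I would peel the set into dyadic shells $T_j = \{x \in \cS_\op(1) : 2^{j-1} < \|x\|_2 \leq 2^j\}$ for integers $j$ running up to $\log_2(\mathrm{diam}\, \cS_\op(1))$. Applying the bound above on each shell with radius $t = 2^j$ and confidence parameter $u_j = u_0 + c\log(1+|j|)$, and then taking a union bound, yields on a high-probability event that $\sup_{x \in T_j} \epsilon^\top x \lesssim \sqrt{E}\, n^{w/4}\, 2^{j(1-w/2)}$ for every shell simultaneously. Since $\|x\|_2 \asymp 2^j$ on $T_j$, this rearranges to $\epsilon^\top x \lesssim K \|x\|_2^{1-w/2}$ with $K = \sqrt{E}\, n^{w/4}$, uniformly in $x \in \cS_\op(1)$, which is exactly \eqref{eq:fractional}.

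The main technical obstacle is the peeling step: the confidence parameters $u_j$ must grow slowly enough that the stochastic fluctuation $\sigma 2^j \sqrt{u_j}$ does not swamp the main chaining term $\sqrt{E}\, n^{w/4}\, 2^{j(1-w/2)}$ on any shell, yet fast enough that $\sum_j e^{-u_j}$ is summable. A logarithmic growth $u_j \sim \log(1+|j|)$ suffices because only a polynomial (in $n$) number of shells need be considered, the small-scale floor being controlled by $n^{-c}$ and the large-scale ceiling by the reciprocal of the smallest nonzero singular value of $\op$. This is essentially the content of Lemma 3.5 of \citet{vandegeer1990}, already cited in the theorem statement; the remaining work is simply to verify that her hypotheses apply to our specific set $\cS_\op(1)$ under \eqref{eq:entropy}.
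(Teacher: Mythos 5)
Your proposal is correct and follows essentially the same route as the paper: both reduce the theorem to establishing the fractional bound \eqref{eq:fractional} with $K=\sqrt{E}\,n^{w/4}$ and then invoke Lemma \ref{lem:fractional}, with the fractional bound coming from the chaining-plus-peeling machinery of Lemma 3.5 of \citet{vandegeer1990}. The only difference is presentational: you unfold that lemma's proof (Dudley's entropy integral, Borell's inequality, dyadic peeling over $\|x\|_2$-shells of $\cS_\op(1)$, using $w<2$ for convergence of the integral at $0$), whereas the paper rescales $\cS_\op(1)$ by $\sqrt{n}/M$ so that its radius matches the normalization of van de Geer's lemma and cites it as a black box.
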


	To make use of the result in Theorem \ref{thm:entropy}, we
	must obtain an entropy bound as in \eqref{eq:entropy}, on the set
	\smash{$\cS_\op(1)$}.  The literature on entropy numbers is rich,   
	and there are various methods for computing entropy bounds, any of
	which can be used for these purposes as long as the bounds fit the
	form of \eqref{eq:entropy}, as required by the theorem.  
	For bounding the entropy of a set like \smash{$\cS_\op(1)$}, two
	common techniques are to use a characterization of the spectral decay
	of $\op^\dag$, 
	or an analysis of the correlations between columns of $\op^\dag$. For
	a nice review of such strategies and their applications, we refer the 
	reader to Section 6 of \citet{lassocorrelated} and Section 14.12 of 
	\citet{statshd}. We do not pursue either of these two strategies in the 
	current paper.  We instead consider a third, somewhat more
	transparent strategy, based on a  covering number bound of the columns
	of $\op^\dag$.   
	
	\begin{lemma}
		\label{lem:atoms}
		Let $g_1,\ldots g_r$ denote the ``atoms'' associated with the operator
		$\op$, i.e., the columns of $\op^\dag$, and let $\cG=\{\pm g_i :
		i=1,\ldots r\}$ denote the symmetrized set of atoms.  Suppose that
		there exists constants $\zeta,C_0$ with the following property: for
		each  $j=1,\ldots 2r$, there is an arrangement of $j$ balls having
		radius at most 
		\begin{equation*}
		C_0 \sqrt{n} j^{-1/\zeta},
		\end{equation*}
		with respect to the norm $\|\cdot\|_2$, that covers $\cG$.  Then the
		entropy bound in \eqref{eq:entropy} is met with $w=2 \zeta / (2 +
		\zeta)$ and $E=O(1)$.  Therefore, the generalized lasso estimate 
		\smash{$\hbeta$}, with
		\begin{equation*}
		\lambda = \Theta \left( n^{\frac{\zeta}{2+2\zeta}} 
		\|\op \beta_0\|_1^{-\frac{1}{1+\zeta}}\right),
		\end{equation*}
		satisfies
		\begin{equation*}
		\frac{\|\hbeta-\beta_0\|_2^2}{n} =
		O_\P \left( \frac{\nuli(\op)}{n} +
		n^{-\frac{2+\zeta}{2+2\zeta}} \cdot 
		\|\op\beta_0\|_1^{\frac{\zeta}{1+\zeta}} \right). 
		\end{equation*}
	\end{lemma}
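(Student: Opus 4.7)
The plan is to show that $\cS_\op(1)$ sits inside the convex hull $\conv(\cG)$ of the symmetrized atoms, translate the assumed covering of $\cG$ into a covering of this convex hull via a Maurey/Carl/Ball-Pajor type convex-hull entropy theorem, and then apply Theorem \ref{thm:entropy} with the resulting $w$ and $E$.

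First I would establish the containment $\cS_\op(1) \subseteq \conv(\cG)$. Since $\cG=\{\pm g_i\}$ is symmetric, $\conv(\cG)$ contains $0$. For any $x \in \cS_\op(1)$, the condition $x \in \row(\op)$ gives $x = \op^\dag \op x$, and writing $v = \op x$ with Jordan decomposition $v = v^+ - v^-$, we obtain $x = \sum_\ell v_\ell^+ g_\ell + \sum_\ell v_\ell^-(-g_\ell)$, a nonnegative combination of atoms in $\cG$ with total mass $\|v\|_1 \leq 1$, hence $x \in \conv(\cG)$. Consequently, $N(\delta,\cS_\op(1),\|\cdot\|_2) \leq N(\delta,\conv(\cG),\|\cdot\|_2)$.

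Next I would convert the atom covering into an entropy bound on $\conv(\cG)$. The hypothesis is exactly that $N(\delta,\cG,\|\cdot\|_2) \leq (C_0\sqrt{n}/\delta)^\zeta$ over the relevant range of $\delta$. By the classical convex-hull entropy theorem of Maurey, Carl, and Ball-Pajor in Hilbert space (see, e.g., Section 3 of \citet{vandegeer1990}), such a polynomial covering of $\cG$ upgrades to
\begin{equation*}
\log N(\delta, \conv(\cG), \|\cdot\|_2) \leq C \bigl(\sqrt{n}/\delta\bigr)^{2\zeta/(2+\zeta)},
\end{equation*}
with $C$ depending only on $C_0$ and $\zeta$. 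This is precisely the entropy hypothesis \eqref{eq:entropy} with $w=2\zeta/(2+\zeta) < 2$ and $E = O(1)$. Plugging these values into Theorem \ref{thm:entropy} and simplifying using $1+w/2 = (2+2\zeta)/(2+\zeta)$, $1/(1+w/2) = (2+\zeta)/(2+2\zeta)$, $(w/2)/(1+w/2) = \zeta/(2+2\zeta)$, $w/(1+w/2) = \zeta/(1+\zeta)$, and $(1-w/2)/(1+w/2) = 1/(1+\zeta)$ yields the stated choice of $\lambda$ and the displayed error rate.

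The main obstacle is the convex-hull entropy bound with the sharp exponent $w = 2\zeta/(2+\zeta)$. A naive empirical-averaging (Maurey) argument gives only the weaker $\log N(\delta,\conv(\cG)) = O((1/\delta)^2 \log r)$, corresponding to exponent $w=2$, which is both too weak to yield the rate we want and formally excluded by the assumption $w<2$ of Theorem \ref{thm:entropy}. Reaching $w = 2\zeta/(2+\zeta)$ requires a multi-scale refinement in which points of $\conv(\cG)$ are approximated by empirical averages drawn from successively finer coverings of $\cG$ and then chained together across scales; this is standard in the empirical-process literature but is the real technical input. The remaining steps---the inclusion $\cS_\op(1) \subseteq \conv(\cG)$ and the exponent bookkeeping into Theorem \ref{thm:entropy}---are routine.
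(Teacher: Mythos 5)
Your proposal is correct and follows essentially the same route as the paper: the paper likewise identifies $\cS_\op(1)$ with $\conv(\cG)$ and invokes the convex-hull entropy theorem (Theorem 1 of \citet{carl1997metric}) to upgrade the polynomial covering of the atoms to the entropy bound \eqref{eq:entropy} with $w=2\zeta/(2+\zeta)$, before plugging into Theorem \ref{thm:entropy}. The multi-scale chaining you flag as the main technical obstacle is exactly what that cited theorem supplies, so no further work is needed there.
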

	
	The entropy-based results in this subsection (Lemma
	\ref{lem:fractional}, Theorem \ref{thm:entropy}, and Lemma
	\ref{lem:atoms}) may appear more complex than those involving 
	incoherence in the previous subsection (Lemma \ref{lem:linearized} and
	Theorem \ref{thm:incoherence}).  Indeed, the same can be said of their
	proofs, which can be found in the Appendix.  But after all this
	entropy machinery has all been established, it can actually be
	remarkably easy to use, say, Lemma \ref{lem:atoms} to produce sharp
	results.  We conclude by giving an example.
	
	\begin{corollary}
		\label{cor:1dfl}
		Consider the 1d fused lasso, i.e., the GTF estimator with $k=0$ over a
		chain graph.  In this case, we have \smash{$\op=D^{(1)}$}, the
		univariate difference operator, and the symmetrized set 
		$\cG$ of atoms can be covered by $j$ balls with radius at most
		$\sqrt{2n/j}$, for $j=1,\ldots 2(n-1)$.   Hence, with 
		\smash{$\lambda=\Theta(n^{1/3} \|D^{(1)}\beta_0\|_1^{-1/3})$}, the
		1d fused lasso estimate \smash{$\hbeta$} satisfies
		\begin{equation*}
		\frac{\|\hbeta-\beta_0\|_2^2}{n} =
		O_\P \left( n^{-2/3} \cdot
		\|D^{(1)} \beta_0\|_1^{2/3} \right).
		\end{equation*}
	\end{corollary}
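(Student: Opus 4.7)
My plan is to verify the covering claim by direct computation on the explicit columns of $(D^{(1)})^\dag$, and then invoke Lemma \ref{lem:atoms} with $\zeta=2$ to obtain the stated $\lambda$ and the $n^{-2/3}$ rate.

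\textbf{Step 1: Identify the atoms.} Since $D^{(1)}$ has full row rank with $\nul(D^{(1)})=\spa\{\mathds{1}\}$, the $i$th column $g_i$ of $(D^{(1)})^\dag$ is the minimum-norm solution of $D^{(1)} g=e_i$, which is the step vector $(0,\dots,0,1,\dots,1)$ (with $n-i$ ones) recentered to be orthogonal to $\mathds{1}$. Thus $g_i(k)=-(n-i)/n$ for $k\le i$ and $g_i(k)=i/n$ for $k>i$, and the symmetrized atom set is $\cG=\{\pm g_i:i=1,\dots,n-1\}$, with $|\cG|=2(n-1)$.

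\textbf{Step 2: Pairwise distances.} For $i<i'$, a direct calculation splitting the sum over $k\le i$, $i<k\le i'$, and $k>i'$ gives
\begin{equation*}
\|g_i-g_{i'}\|_2^2 \;=\; \frac{(i'-i)(n-(i'-i))}{n} \;\le\; |i'-i|.
\end{equation*}
In particular, $\|g_i\|_2^2=i(n-i)/n\le n/4$, so every atom lies in a ball of radius $\sqrt{n}/2$ about the origin.

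\textbf{Step 3: The covering.} Given $1\le j\le 2(n-1)$, I split $\cG$ into its positive and negative halves and, within each half, partition the index set $\{1,\dots,n-1\}$ into $\lceil j/2\rceil$ consecutive blocks of length at most $\lceil 2(n-1)/j\rceil\le 2n/j$. Covering each block by a single ball centered at any of its atoms, the bound from Step 2 shows that every other atom in that block lies within distance $\sqrt{2n/j}$. This produces $j$ balls of radius at most $\sqrt{2n/j}=\sqrt{2}\,\sqrt{n}\,j^{-1/2}$ that cover $\cG$, verifying the hypothesis of Lemma \ref{lem:atoms} with $\zeta=2$ and $C_0=\sqrt{2}$.

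\textbf{Step 4: Apply Lemma \ref{lem:atoms}.} With $\zeta=2$, the lemma gives $w=2\zeta/(2+\zeta)=1$ and $E=O(1)$, the prescribed tuning parameter becomes $\lambda=\Theta(n^{1/3}\|D^{(1)}\beta_0\|_1^{-1/3})$, and the error bound specializes to
\begin{equation*}
\frac{\|\hbeta-\beta_0\|_2^2}{n} \;=\; O_\P\!\left(\frac{1}{n}+n^{-2/3}\|D^{(1)}\beta_0\|_1^{2/3}\right) \;=\; O_\P\!\left(n^{-2/3}\|D^{(1)}\beta_0\|_1^{2/3}\right),
\end{equation*}
since $\nuli(D^{(1)})=1$ is absorbed into the dominant term (assuming $\|D^{(1)}\beta_0\|_1$ is not vanishingly small; otherwise the $1/n$ term is itself the sharper bound). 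This is exactly the claim.

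The only delicate step is Step 3---making sure the combinatorics of the partition yields a radius bound valid uniformly in $j$, including the boundary cases $j=1$ (where the trivial ball of radius $\sqrt{n}/2\le\sqrt{2n}$ works) and $j=2(n-1)$ (where singletons give radius $0$). Everything else is either a clean algebraic identity (Step 2) or a direct plug-in to the already-established machinery.
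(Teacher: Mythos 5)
Your proposal is correct and follows essentially the same route as the paper: both establish the key bound $\|g_i-g_{i'}\|_2\le\sqrt{|i-i'|}$ for the recentered step-vector atoms of $(D^{(1)})^\dag$ (you via an explicit computation giving the exact identity $d(n-d)/n$, the paper via $(D^{(1)})^\dag=P_{\mathds{1}}^\perp H$ and the contraction property of projections), then cover $\cG$ with $j$ uniformly spaced balls of radius $O(\sqrt{n/j})$ and invoke Lemma \ref{lem:atoms} with $\zeta=2$. The minor off-by-one in your block count for odd $j$ only affects constants and is immaterial to the conclusion.
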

	
	This corollary rederives the optimal convergence rate of 
	$n^{-2/3}$ for the univariate fused lasso, assuming boundedness of
	\smash{$\|D^{(1)}\beta_0\|_1$}, as has been already shown in
	\citet{locadapt,trendfilter}.  Like Corollary \ref{cor:grid1} (but
	unlike previous works), its proof does not rely on any special facts  
	about 1d functions of bounded variation.  It only uses a covering
	number bound on the columns of the operator \smash{$(D^{(1)})^+$}, a
	strategy that, in principle, extends to many other settings
	(graphs). It is worth emphasizing just how simple this covering number 
	construction is, compared to the incoherence-based arguments that lead
	to the same result; we invite the curious reader to compare the proofs
	of Corollaries \ref{cor:grid1} and  \ref{cor:1dfl}.

	\section{Discussion}
	\label{sec:discussion}
	
	In this work, we proposed graph trend filtering as a
	useful alternative to Laplacian and wavelet smoothers on
	graphs.  This is analogous to the usefulness of univariate trend
	filtering in nonparametric regression, as an alternative to smoothing
	splines and wavelets \citep{trendfilter}.  We have documented
	empirical evidence for the superior local adaptivity of the
	$\ell_1$-based GTF over the $\ell_2$-based graph Laplacian smoother,
	and the superior robustness of GTF over wavelet smoothing in
	high-noise scenarios. Our theoretical analysis provides a basis for a
	deeper understanding of the estimation properties of GTF.
	More precise theoretical characterizations involving entropy 
	will be the topic of future work, as will comparisons between the error
	rates achieved by GTF and other common estimators, such as Laplacian  
	smoothing.  These extensions, and many others, are well within reach.   
	
	\subsubsection*{Acknowledgments}
	
	The authors would like to thank Harish Doraiswamy, Nivan Ferreira,
	Theodoros Damoulas and Claudio Silva for sharing the pre-processed NYC
	taxi data, Jeff Irion and Naoki Saito for their help with the
	implementation of the graph wavelet algorithms, as well as the 
	associate editor and anonymous reviewers for the valuable feedback.  
	
	YW was supported by the Singapore National Research Foundation under
	its International Research Centre @ Singapore Funding Initiative and
	administered by the IDM Programme Office.  JS was supported by
	NSF Grant DMS-1223137.  AS was supported by a Google Faculty
	Research Grant.  RT was supported by NSF Grant DMS-1309174.
	
	\appendix
	
	\section{Additional Analysis from Alternative Wavelet Designs}
	
	We provide detailed comparisons to a few recently proposed wavelet
	approaches for graph smoothing.  
	
	\subsection{Allegheny County Example}  
	\label{sec:allegheny}
	
	In addition to considering the wavelet design of \citet{graphwave} for
	the Allegheny County example, we also considered designs of
	\citet{coifman2006}---a
	classic method that builds diffusion wavelets on a graph, and
	\citet{irion2015}---a more recent graph wavelet construction.  
	In contrast to \citet{graphwave}, which produces a single
	signal-independent orthogonal basis for a graph, both 
	\citet{coifman2006,irion2015} build wavelet packets from a given graph  
	structure. A wavelet packet is an overcomplete basis indexed by a
	hierarchical data structure that can be used to generate
	an exponential number of orthogonal bases. This construction is
	computationally expensive as it typically involves computing 
	eigendecompositions of large matrices.  
	Once the wavelet packet has been constructed, for each input signal
	that one observes over the graph in question, one runs a ``best basis'' 
	algorithm to choose a particular orthogonal basis from the wavelet 
	packet by optimizing a particular cost function of the eventual
	wavelet coefficients. This is based on a message-passing-like dynamic
	programming algorithm, and can be quite efficient. Lastly,
	the denoising procedure is defined as usual (e.g., as in \citet{sure}),
	namely, one performs the basis transformation, soft-thresholds (or
	hard-thresholds) the coefficients, and then reconstructs the denoised
	signal.  
	
	In our experiments, we used the wavelet implementations released by
	the authors of \citet{coifman2006,irion2015} with their default
	settings. In particular, the former implementation of
	\citet{coifman2006} builds wavelets from a diffusion operator
	constructed from the adjacency matrix of a graph, and the cost
	function for the best basis is defined by the $\ell_1$ norm of the
	wavelet coefficients.  The latter implementation of \citet{irion2015}
	uses a more exhaustive search, building wavelet packets
	through a hierarchical partitioning and eigentransform of three
	different Laplacian matrices and a fourth generalized
	Haar-Walsh transform (GHWT), then choosing the best basis from all 
	four collections by optimizing a meta cost function of the
	$\ell_p$ norm of wavelet coefficients over $p \in \{0.1,0.2,\ldots
	2\}$.  This is the ``cumulative relative error'' defined in equation
	(7.5) of \citet{irion2015}.  
	
	In the left panel of Figure \ref{fig:pittsburgh-more}, we plot the
	mean squared errors for these new wavelet methods over the same 10
	simulations from the Allegheny County example in Figure
	\ref{fig:pittsburgh-errs} of
	Section \ref{sec:allegheny}.  The middle and right panels of
	the figure show the denoised signals from the
	new methods fit to the data in Figure
	\ref{fig:pittsburgh-maps}, at their optimal degrees of freedom (df)
	values (in terms of the achieved MSE).  
	We can see that the spanning tree
	wavelet design of \citet{graphwave} is the best performer among the
	three candidate wavelet designs. In a
	rough sense, the construction of \citet{irion2015} seems to perform
	similarly to that of \citet{graphwave}, in that the MSE is best for
	larger df values (corresponding to more nonzero wavelet coefficients,
	i.e., complex fitted models), whereas the construction of
	\citet{coifman2006} performs best for smaller df values 
	(fewer nonzero wavelet coefficients, i.e., simpler
	fitted models).
	
	\begin{figure}[!htb]
		\centering
		\begin{tabular}{ccc}
			MSE comparison & Irion wavelets, 194 df & Coifman wavelets, 78 df \\
			\includegraphics[width=0.33\textwidth]{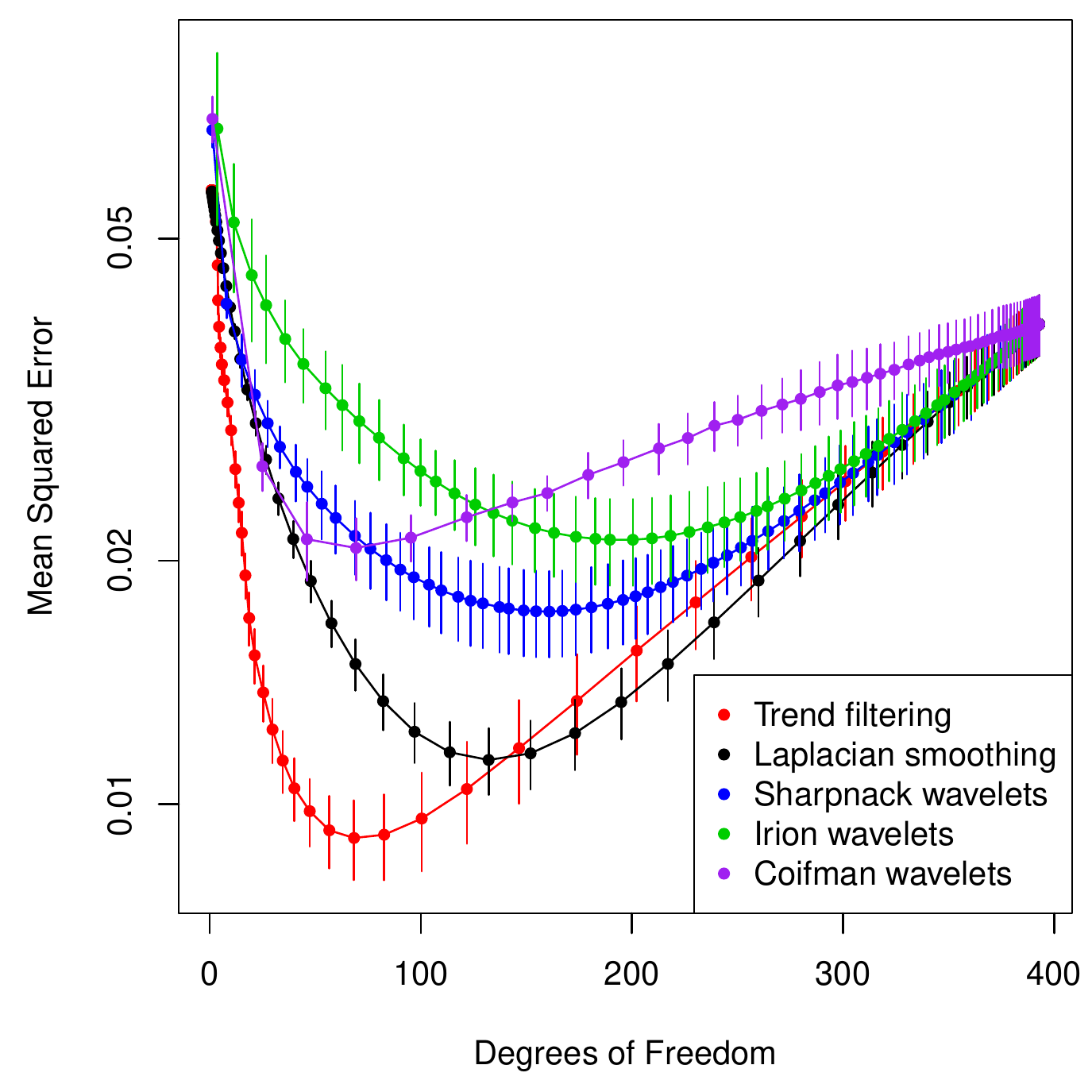} &
			\hspace{-5mm }\includegraphics[width=0.33\textwidth]{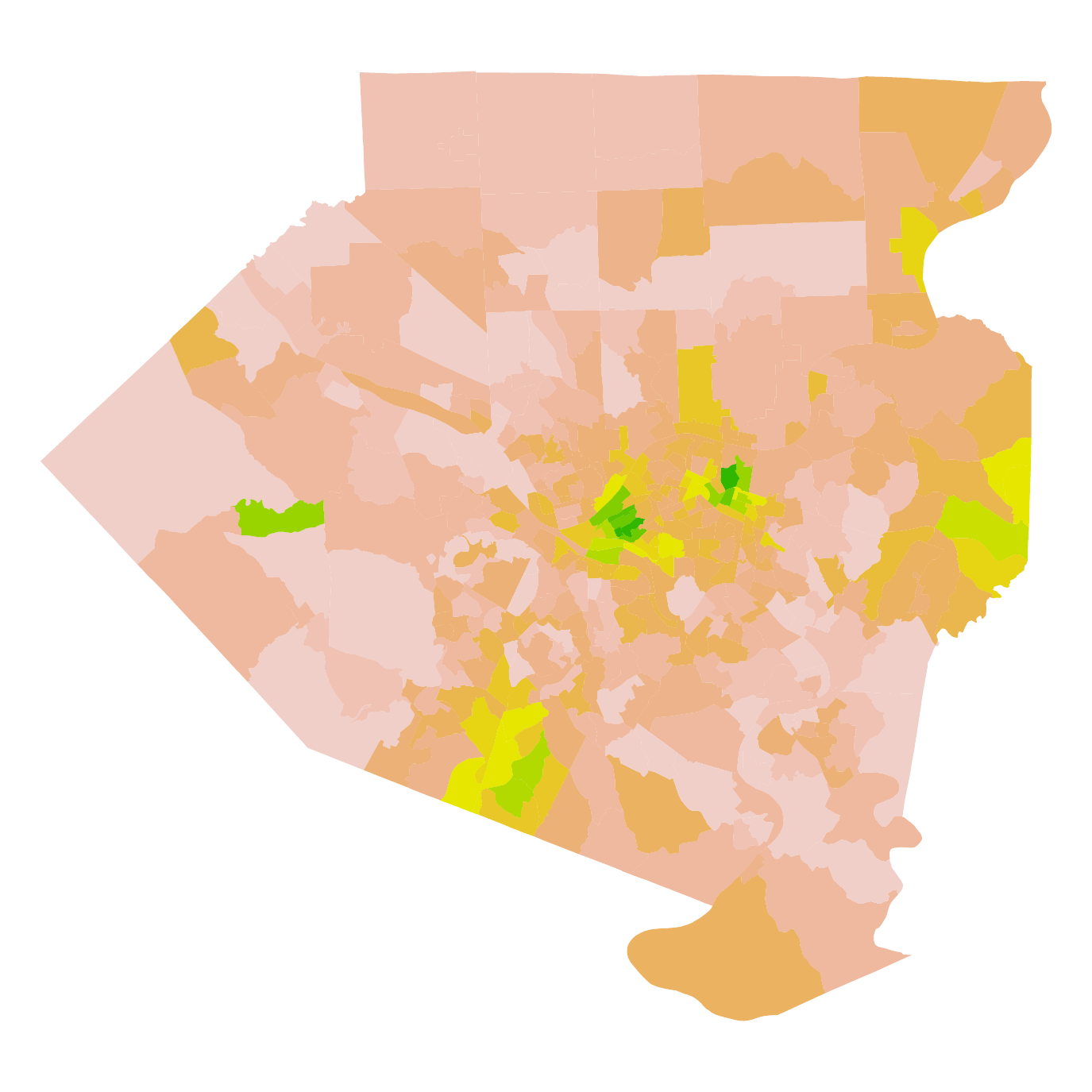} &
			\hspace{-5mm }\includegraphics[width=0.33\textwidth]{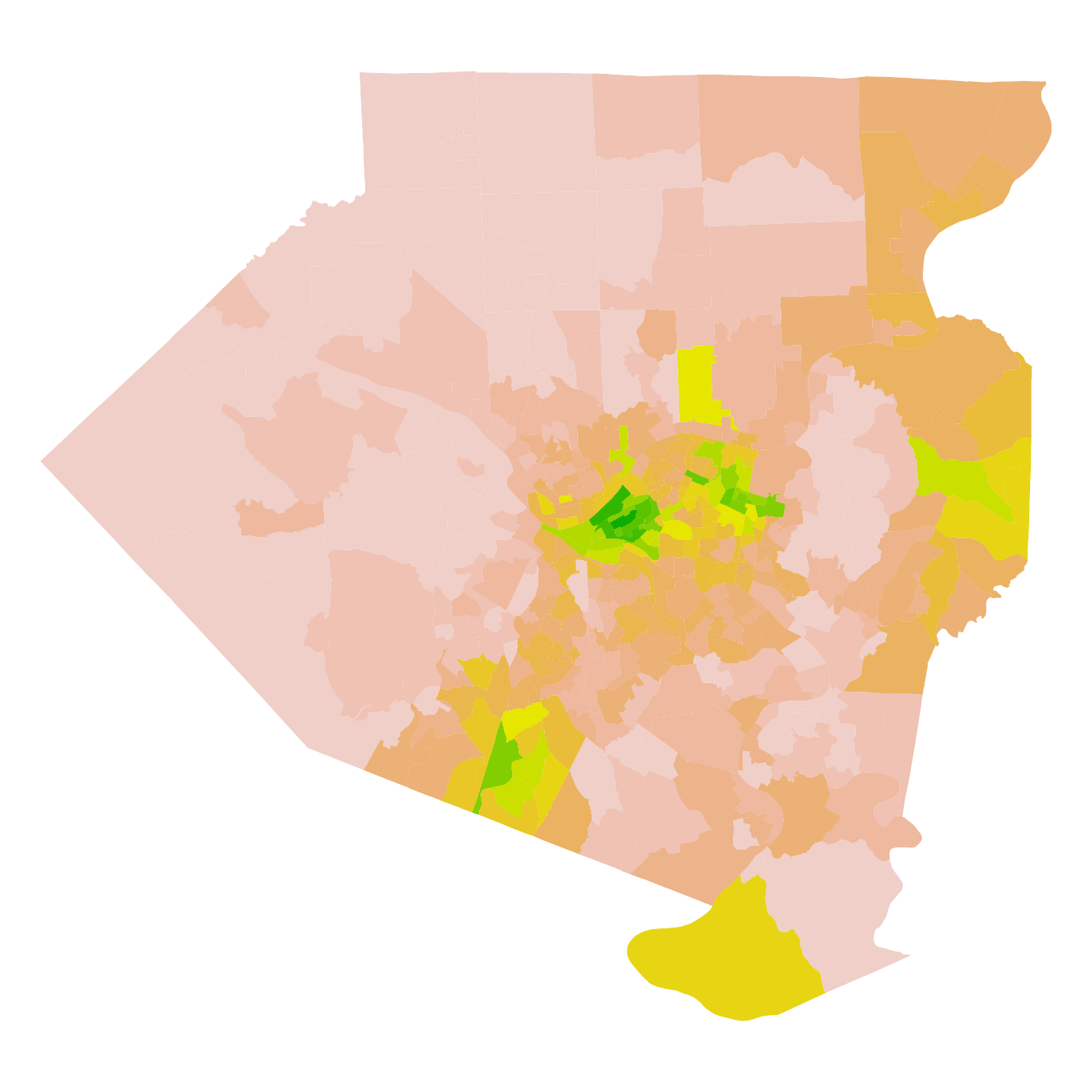} \\
		\end{tabular}
		\caption{Additional wavelet analysis of the Allegheny County example.}
		\label{fig:pittsburgh-more}
	\end{figure}
	
	\subsection{Facebook Graph Example}  
	
	Again, we consider the designs of \citet{coifman2006,irion2015} for the
	Facebook graph example of Section \ref{sec:facebook}.  Due to
	practical reasons, we had to change some of the default settings in
	the implementations provided by the authors of these 
	wavelet methods; in the wavelet implementation of \citet{coifman2006},
	we took the power of the diffusion operator to be 1 instead of 4
	(since the latter choice threw an error in the provided code); and in
	the wavelet implementation of \citet{irion2015}, we used another ``best
	basis'' algorithm that only searches within the basis collection 
	from the GHWT eigendecomposition, as the original algorithm was
	too slow due to the larger scale considered in this example.  (In most
	examples in \citet{irion2015}, the chosen bases come from the
	GHWT eigendecomposition.) We view these changes as minor, because when
	the same changes were applied to the methods of
	\citet{coifman2006,irion2015} on the smaller Allegheny County example,
	there are no obvious differences in the results. 
	
	Figure \ref{fig:facebook-more} shows the results for the two new
	wavelet methods on the Facebook graph simulation, using the same   
	setup as in Figure \ref{fig:facebook}.  Once again, we find that the
	spanning tree wavelets of \citet{graphwave} perform better or on par
	with the other two wavelet methods across essentially all scenarios.
	
	\begin{figure}[!htb]
		\centering
		\begin{tabular}{cc}
			Dense Poisson equation & Sparse Poisson equation \\
			\includegraphics[width=0.4\textwidth]{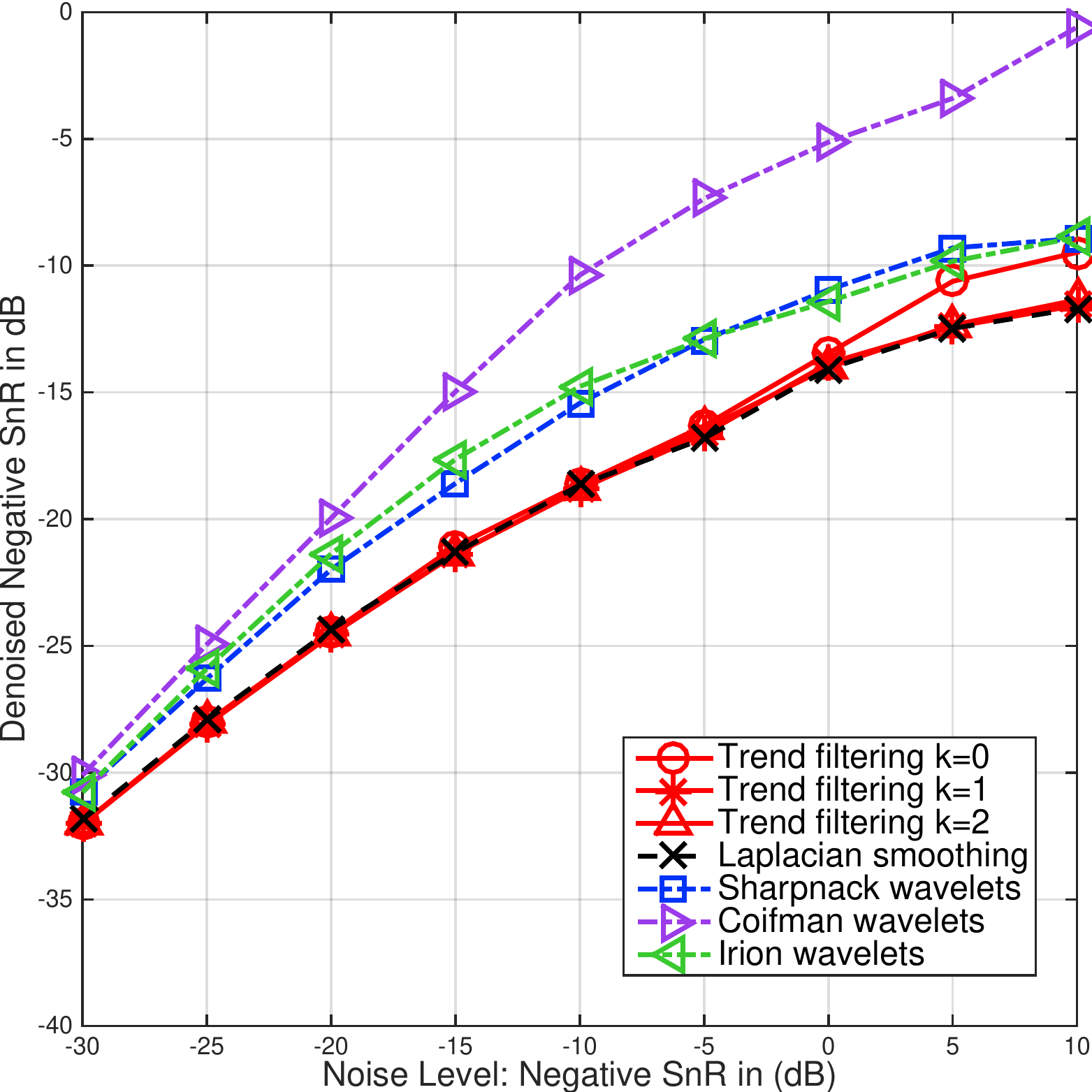} & 
			\includegraphics[width=0.4\textwidth]{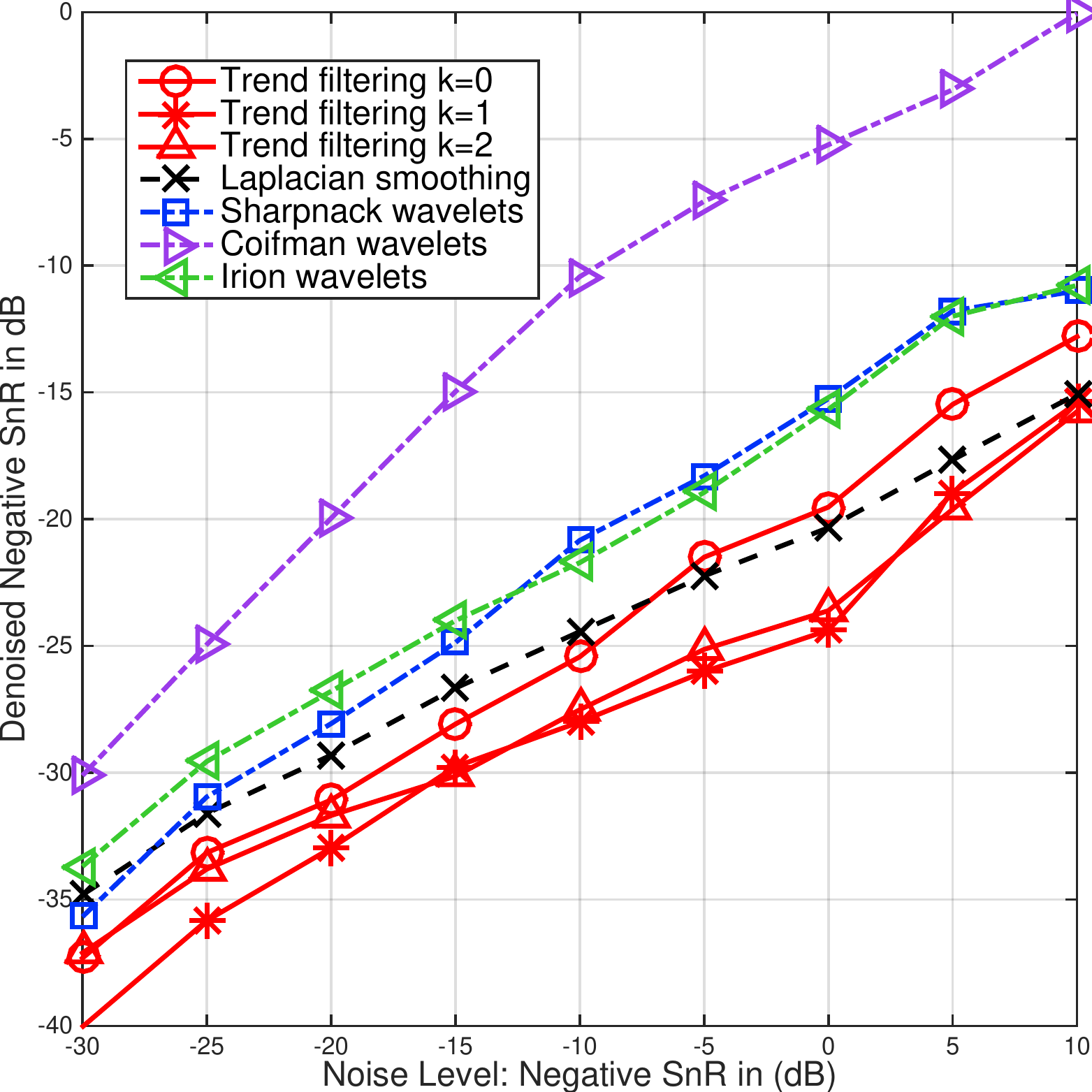}  
		\end{tabular} 
		
		\smallskip\smallskip 
		Inhomogeneous random walk \\ 
		\includegraphics[width=0.4\textwidth]{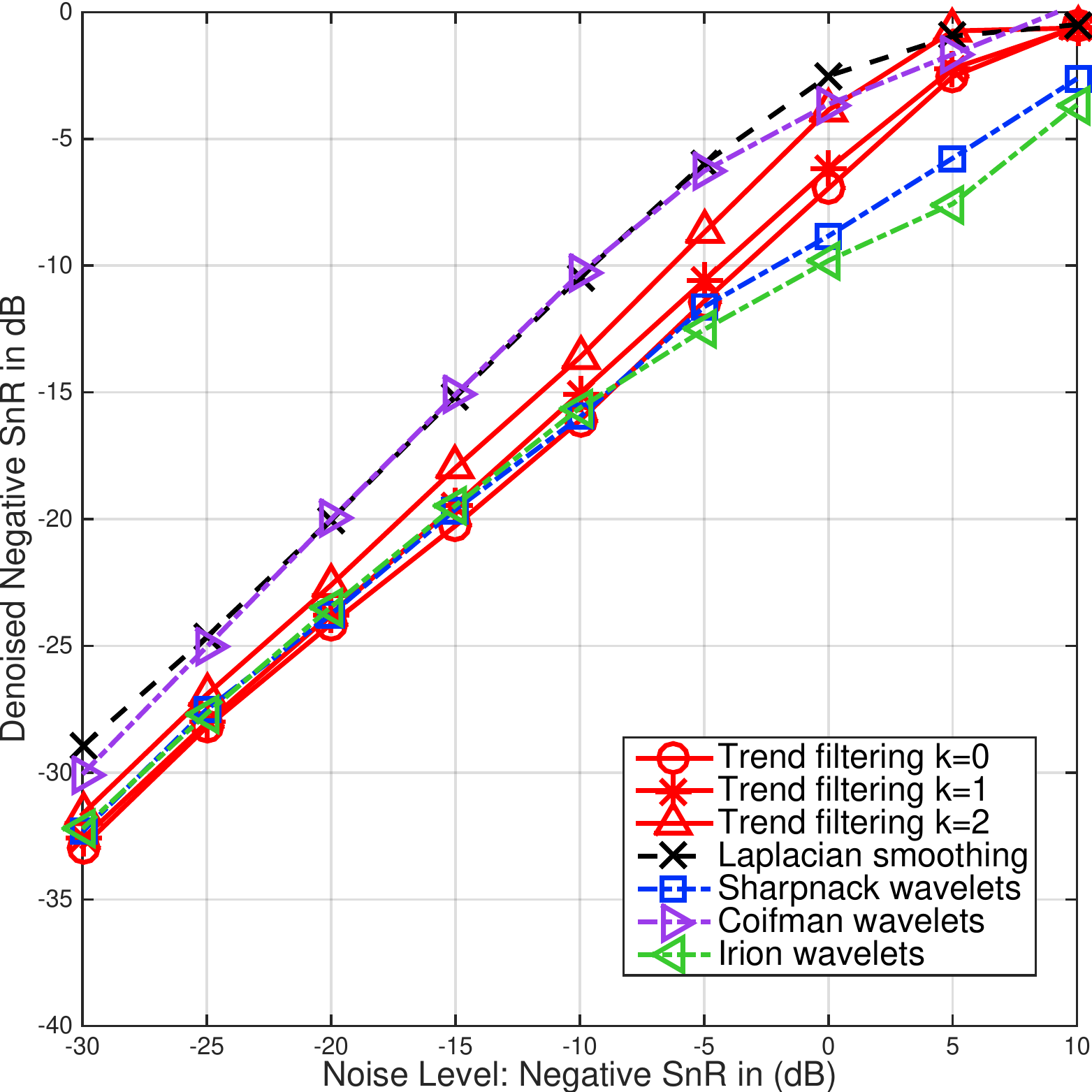}
		
		\caption{Additional wavelet analysis of the Facebook graph example.} 
		\label{fig:facebook-more}
	\end{figure}

	\section{Proofs of Theoretical Results}
	
	Here we present proofs of our theoretical results
	presented in Sections~\ref{sec:properties}~and~\ref{sec:theory}.
	
	\subsection{Proof of Lemma \ref{lem:nullspace}} 
	
	For even $k$, we have \smash{$\op^{(k+1)}=DL^{k/2}$}, so if
	$A$ denotes a subset of edges, then
	\smash{$\op^{(k+1)}_{-A}=D_{-A} L^{k/2}$}. Recall
	that for a connected graph, \smash{$\nul(L)=\spa\{\mathds{1}\}$}, and
	the same is true for any power of $L$.  This means that we can write 
	\begin{equation*}
	\nul(\op^{(k+1)}) = \spa\{\mathds{1}\} + 
	\spa\{\mathds{1}\}^\perp \cap \{ u : DL^{\frac{k}{2}} u = 0\}.  
	\end{equation*}
	Note that if $\mathds{1}^\top u=0$, then \smash{$v=L^{\frac{k}{2}} u 
		\iff u = (L^\dag)^{\frac{k}{2}} u$}.  Moreover, if $G_{-A}$ has   
	connected components $C_1,\ldots C_s$, then 
	\smash{$\nul(D_{-A})=\spa\{\mathds{1}_{C_1},\ldots
		\mathds{1}_{C_s}\}$}.  Putting these statements together
	proves the result for even $k$.  For $k$ odd, the arguments are
	similar.  
	
	\subsection{Proof of Theorem \ref{thm:basic}}
	
	By assumption we can write
	\begin{equation*}
	y = \beta_0 + \epsilon, \;\;\; \epsilon \sim \cN(0,\sigma^2 I).
	\end{equation*}
	Denote $R=\row(\op)$, the row
	space of $\op$, and $R^\perp=\nul(\op)$, the null space of
	$\op$.  Also let $P_R$ be the projection onto $R$,  and
	$P_{R^\perp}$ the projection onto $R^\perp$. Consider
	\begin{align*}
	\hbeta &= \argmin_{\beta\in\R^n} \, \half\|y-\beta\|_2^2 +
	\lambda \|\op\beta\|_1, \\
	\tbeta &= \argmin_{\beta\in\R^n} \, \half\|P_R y-\beta\|_2^2 +
	\lambda \|\op\beta\|_1.
	\end{align*}
	The first quantity \smash{$\hbeta \in \R^n$} is the estimate of
	interest, the second one \smash{$\tbeta \in R$} is easier to analyze.
	Note that
	\begin{equation*}
	\hbeta = P_{R^\perp} y + \tbeta,
	\end{equation*}
	and write $\|x\|_R = \|P_R x\|_2$,
	$\|x\|_{R^\perp} = \|P_{R^\perp} x\|_2$. Then
	\begin{equation*}
	\|\hbeta-\beta_0\|_2^2 = \|\epsilon\|_{R^\perp}^2 +
	\|\tbeta-\beta_0\|_R^2.
	\end{equation*}
	The first term is on the order
	$\mathrm{dim}(R^\perp)=\nuli(\op)$, and it suffices to
	bound the second term.
	Now we establish a basic inequality for \smash{$\tbeta$}.  By
	optimality of \smash{$\tbeta$}, we have
	\begin{equation*}
	\half\|y-\tbeta\|_R^2 + \lambda\|\op\tbeta\|_1 \leq
	\half\|y-\beta_0\|_R^2 + \lambda\|\op\beta_0\|_1,
	\end{equation*}
	and after rearranging terms,
	\begin{equation}
	\label{eq:basic}
	\|\tbeta-\beta_0\|_R^2 \leq 2\epsilon^\top P_R (\tbeta-\beta_0) +
	2\lambda\|\op\beta_0\|_1 - 2\lambda\|\op\tbeta\|_1.
	\end{equation}
	This is our basic inequality. In the first term above, we use
	$P_R = \op^\dag\op$, and apply Holder's inequality:
	\begin{equation}
	\label{eq:holder}
	\epsilon^\top \op^\dag \op (\tbeta-\beta_0) \leq
	\|(\op^\dag)^\top \epsilon\|_\infty \|\op(\tbeta-\beta_0)\|_1.
	\end{equation}
	If \smash{$\lambda \geq \|(\op^\dag)^\top\epsilon\|_\infty$}, then
	from \eqref{eq:basic}, \eqref{eq:holder}, and the triangle inequality,
	we see that  
	\begin{equation*}
	\|\tbeta-\beta_0\|_R^2 \leq 4 \lambda \|\op \beta_0\|_1.
	\end{equation*}
	Well, 
	\smash{$\|(\op^\dag)^\top\epsilon\|_\infty = O_\P(M \sqrt{\log{r}})$}
	by a standard result on the maximum of Gaussians (derived using the
	union bound, and Mills' bound on the Gaussian tail), where
	recall $M$ is the maximum $\ell_2$ norm of the columns of $\op^\dag$.
	Thus with $\lambda=\Theta(M\sqrt{\log{r}})$, we have from the above
	that 
	\begin{equation*}
	\|\tbeta-\beta_0\|_R^2 = O_\P \big( M \sqrt{\log{r}}
	\|\op\beta_0\|_1\big),
	\end{equation*}
	as desired.
	
	\subsection{Proof of Corollary \ref{cor:basic}}
	
	{\bf Case 1.} When \smash{$\hbeta$} is the univariate trend filtering
	estimator of order $k$, we are considering a penalty matrix
	$\op=D^{(k+1)}$, the univariate difference operator of order $k+1$.
	Note that $D^{(k+1)} \in \R^{(n-k-1) \times n}$, and its
	null space has constant dimension $k+1$.
	We show in Lemma \ref{lem:dpinv} of Appendix
	\ref{app:dpinv} that 
	\smash{$(D^{(k+1)})^\dag = P_R H_2^{(k)} / k!$}, where
	$R=\row(D^{(k+1)})$, and \smash{$H_2^{(k)} \in \R^{n\times (n-k-1)}$}
	contains the last $n-k-1$ columns of the order $k$ falling factorial
	basis matrix \citep{fallfact}, evaluated over the input points
	$x_1=1,\ldots x_n=n$. The largest column norm of
	\smash{$P_R H_2^{(k)} / k!$} is on the order of $n^{k+1/2}$, which
	proves the result.
	
	\smallskip\smallskip
	\noindent
	{\bf Cases 2 and 3.} When $G$ is the Ramanujan $d$-regular graph,
	the number of edges in the graph is $O(nd)$. The operator
	$\op=\op^{(k+1)}$ has number of rows $r=n$ when $k$ is odd
	and $r=O(nd)$ when $k$ is even; overall this is $O(nd)$.  The
	dimension of the null space of $\op$ is constant (it is in fact 1,
	since the graph is connected).  When $G$ is the Erdos-Renyi random
	graph, the same bounds apply to the number of rows and the dimension
	of the null space, except that the bounds become probabilistic ones.
	
	Now we apply the crude inequality, with $e_i$, $i=1,\ldots r$ denoting
	the standard basis vectors,
	\begin{equation*}
	M = \max_{i=1,\ldots r} \, \op^\dag e_i \leq \max_{\|x\|_2 \leq 1}
	\op^\dag x = \|\op^\dag\|_2,
	\end{equation*}
	the right-hand side being the maximum singular value of $\op^\dag$.
	As $\op=\op^{(k+1)}$, the graph difference operator of order $k+1$,
	we claim that
	\begin{equation}
	\label{eq:claim}
	\|\op^\dag\|_2 \leq 1/\lambda_{\min}(L)^{\frac{k+1}{2}},
	\end{equation}
	where $\lambda_{\min}(L)$ denotes the smallest nonzero eigenvalue of
	the graph Laplacian $L$.  To see this, note first that
	$\|\op^\dag\|_2=1/\sigma_{\min}(\op)$, where the denominator is the
	smallest nonzero singular value of $\op$.  Now for odd $k$, we have
	\smash{$\op^{(k+1)}=L^{(k+1)/2}$}, and the claim follows as
	\begin{equation*}
	\sigma_{\min}(L^{\frac{k+1}{2}}) =
	\min_{x \in R: \|x\|_2 \leq 1} \|L^{\frac{k+1}{2}} x\|_2 \geq 
	\big(\sigma_{\min}(L)\big)^{\frac{k+1}{2}},
	\end{equation*}
	and \smash{$\sigma_{\min}(L)=\lambda_{\min}(L)$}, since $L$ is
	symmetric. Above, $R$ denotes the row space of $L$ (the space
	orthogonal to the vector $\mathds{1}$ of all 1s). For even $k$, we
	have \smash{$\op^{(k+1)}=DL^{k/2}$}, and again
	\begin{equation*}
	\sigma_{\min}(DL^{\frac{k}{2}}) =
	\min_{x \in R: \|x\|_2 \leq 1} \| DL^{\frac{k+1}{2}} x\|_2 \geq 
	\sigma_{\min}(D) \big(\sigma_{\min}(L)\big)^{\frac{k}{2}},
	\end{equation*}
	where $\sigma_{\min}(D)=\sqrt{\lambda_{\min}(L)}$, since
	$D^\top D = L$. This verifies the claim.
	
	Having established \eqref{eq:claim}, it suffices to lower bound
	$\lambda_{\min}(L)$ for the two graphs in question.  Indeed, for both
	graphs, we have the lower bound
	\begin{equation*}
	\lambda_{\min}(L) = \Omega(d-\sqrt{d}).
	\end{equation*}
	e.g., see \citet{lubotzky1988ramanujan, marcus2014ramanujan} for the
	Ramanujan graph and \citet{feige2005spectral, chung2011spectra} for the
	Erdos-Renyi graph. This completes the proof.
	
	\subsection{Calculation of \smash{$(D^{(k+1)})^\dag$}}
	\label{app:dpinv}
	
	\begin{lemma}
		\label{lem:dpinv}
		The $(k+1)$st order discrete difference operator has pseudoinverse   
		$$(D^{(k+1)})^\dag =  P_R H_2^{(k)} / k!,$$
		where we denote $R=\row(D^{(k+1)})$, and 
		\smash{$H_2^{(k)} \in \R^{n\times (n-k-1)}$} the last $n-k-1$   
		columns of the $k$th order falling factorial basis matrix.  
	\end{lemma}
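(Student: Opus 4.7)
The plan is to verify the identity using the characterization of the Moore--Penrose pseudoinverse for matrices of full row rank. Since $D^{(k+1)} \in \R^{(n-k-1) \times n}$ has linearly independent rows, its pseudoinverse $(D^{(k+1)})^\dag$ is the unique matrix $B \in \R^{n \times (n-k-1)}$ satisfying (i) $D^{(k+1)} B = I_{n-k-1}$, and (ii) $\col(B) \subseteq \row(D^{(k+1)}) = R$. So the whole proof reduces to checking these two properties for $B = P_R H_2^{(k)}/k!$.

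Property (ii) is immediate because $P_R$ is precisely the orthogonal projection onto $R$, so every column of $P_R H_2^{(k)}/k!$ lies in $R$. For property (i), I would use two facts. First, $D^{(k+1)} P_R = D^{(k+1)}$: this holds because $P_{R^\perp}$ projects onto $\nul(D^{(k+1)})$, so $D^{(k+1)}(P_R x) = D^{(k+1)}(x - P_{R^\perp} x) = D^{(k+1)} x$ for every $x \in \R^n$. Consequently
\begin{equation*}
D^{(k+1)} \cdot \frac{P_R H_2^{(k)}}{k!} = \frac{D^{(k+1)} H_2^{(k)}}{k!},
\end{equation*}
and it suffices to show that $D^{(k+1)} H_2^{(k)} = k! \cdot I_{n-k-1}$.

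Establishing this identity is the main technical step, and is essentially the defining property of the falling factorial basis. Writing out the full $n \times n$ falling factorial basis matrix $H^{(k)}$ at inputs $1, 2, \ldots, n$, its first $k+1$ columns are (evaluations of) the polynomials $1, x, x(x-1), \ldots, x(x-1)\cdots(x-k+1)$, which span $\nul(D^{(k+1)})$ by construction. The remaining $n-k-1$ columns are (evaluations of) the shifted truncated falling factorials of degree $k$ with knots at $x = k+1, k+2, \ldots, n-1$. The combinatorial identity to verify is that applying the $(k+1)$st order forward difference operator to the $j$th such truncated falling factorial produces $k!$ in position $j$ and zeros elsewhere; this is a standard calculation using $\Delta^{(k+1)}[(x)_k^+] = k! \, \mathds{1}\{x = \text{knot}\}$ at unit spacing, and can be proved either by induction on $k$ (using $D^{(k+1)} = D^{(1)} D^{(k)}$ together with the corresponding recursion $h^{(k)}_{k+1+j}(x) = \sum_{i \leq x} h^{(k-1)}_{k+j}(i)/k$ for the basis columns) or by direct evaluation. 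The outcome is $D^{(k+1)} H_2^{(k)} = k! \cdot I_{n-k-1}$, as desired.

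Combining these gives $D^{(k+1)} \cdot P_R H_2^{(k)}/k! = I_{n-k-1}$, and together with property (ii) this uniquely identifies $P_R H_2^{(k)}/k!$ as $(D^{(k+1)})^\dag$. The main obstacle is purely bookkeeping: one needs to be careful that $H_2^{(k)}$ indexes exactly the $n-k-1$ truncated columns (not the polynomial ones), and that the unit spacing of the inputs $x_i = i$ is what produces the clean constant $k!$ rather than a diagonal matrix with varying entries. No probabilistic or continuous-time arguments are required.
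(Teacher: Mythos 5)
Your proposal is correct, but it takes a genuinely different route from the paper's. You verify the Moore--Penrose characterization directly: since $D^{(k+1)}$ has full row rank, a matrix $B$ equals $(D^{(k+1)})^\dag$ if and only if $D^{(k+1)}B = I_{n-k-1}$ and $\col(B) \subseteq \row(D^{(k+1)})$, and after the trivial observation $D^{(k+1)}P_R = D^{(k+1)}$ both conditions reduce to the single identity $D^{(k+1)}H_2^{(k)} = k!\, I_{n-k-1}$. The paper instead works with the normal equations $DD^\top x = Db$, embeds them into an augmented system governed by the invertible matrix $\tilde{D} = [\,C ;\, D\,]$, chooses the auxiliary right-hand side $a = CP_R b$ so that the auxiliary variable vanishes at the solution, and then invokes the inverse formula $\tilde{D}^{-1} = H/k!$ from Lemma 2 of \citet{fallfact}. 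Your key identity is exactly the lower-right block of $\tilde{D}H = k!\,I_n$, so both arguments ultimately rest on the same fact about the falling factorial basis; the paper imports it as a black box from the cited lemma, while you propose to re-derive it by a combinatorial difference calculation. Your approach buys a shorter, more transparent argument with no augmented-system bookkeeping, and your two reductions (the full-row-rank characterization of the pseudoinverse and the step $D^{(k+1)}P_R = D^{(k+1)}$) are both sound. The paper's approach buys generality for free: its concluding remark that the result holds for arbitrary input points follows because the cited inverse formula does, whereas your sketched verification of $D^{(k+1)}H_2^{(k)} = k!\,I$ is tied to unit spacing. If you wish to avoid redoing the difference calculus entirely, you can simply read the identity off the block structure of $\tilde{D}H = k!\,I_n$, which makes your proof both complete and shorter than the paper's.
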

	
	\begin{proof}
		We abbreviate $D=D^{(k+1)}$, and consider the linear system
		\begin{equation}
		\label{eq:ddt}
		DD^\top x = Db
		\end{equation}
		in $x$, where $b\in\R^n$ is arbitrary. We seek an expression for
		$x=(DD^\top)^{-1} D^\top = (D^\dag)^\top b$, and this will tell us the 
		form of $D^\dag$.  Define
		\begin{equation*}
		\tD = \left[\begin{array}{c} C \\ D \end{array}\right] \in
		\R^{n\times n},
		\end{equation*}
		where $C \in \R^{(k+1)\times n}$ is the matrix that collects the first
		row of each lower order difference operator, defined in Lemma 2 of
		\citet{fallfact}. From this same lemma, we know that
		\begin{equation*}
		\tD^{-1} = H/k!,
		\end{equation*}
		where $H=H^{(k)}$ is falling factorial basis matrix of order $k$,
		evaluated over $x_1,\ldots x_n$.  With this in mind, consider the
		expanded linear system
		\begin{equation}
		\label{eq:ddt2}
		\left[\begin{array}{cc} CC^\top & CD^\top \\
		DC^\top & DD^\top \end{array}\right]
		\left[\begin{array}{c} w \\ x \end{array}\right] =
		\left[\begin{array}{c} a \\ Db \end{array}\right].
		\end{equation}
		The second equation reads
		\begin{equation*}
		DC^\top w + DD^\top x = Db,
		\end{equation*}
		and so if we can choose $a$ in \eqref{eq:ddt2} so that at the
		solution we have $w=0$, then $x$ is the solution in \eqref{eq:ddt}.
		The first equation in \eqref{eq:ddt2} reads
		\begin{equation*}
		CC^\top w + CD^\top x = a,
		\end{equation*}
		i.e.,
		\begin{equation*}
		w = (CC^\top)^{-1} ( a - CD^\top x ).
		\end{equation*}
		That is, we want to choose
		\begin{equation*}
		a = CD^\top x =CD^\top(DD^\top)^{-1} D b = CP_R b, 
		\end{equation*}
		where $P_R$ is the projection onto row space of $D$.
		Thus we can reexpress \eqref{eq:ddt2} as
		\begin{equation*}
		\tD\tD^\top
		\left[\begin{array}{c} w \\ x \end{array}\right] =
		\left[\begin{array}{c} C P_R b \\ Db \end{array}\right]
		= \tD P_R b
		\end{equation*}
		and, using \smash{$\tD^{-1} = H/k!$},
		\begin{equation*}
		\left[\begin{array}{c} w \\ x \end{array}\right] = H^\top
		P_R b/k!.
		\end{equation*}
		Finally, writing $H_2$ for the last $n-k-1$ columns of $H$, we
		have $x = H_2^\top P_R b/k!$, as desired.
	\end{proof}
	
	\noindent
	{\it Remark.} The above proof did not rely on the input points
	$x_1,\ldots x_n$; indeed, the result holds true for any sequence
	of inputs used to define the discrete difference matrix and falling 
	factorial basis matrix.
	
	\subsection{Proof of Lemma \ref{lem:linearized}}
	
	We follow the proof of Theorem \ref{thm:basic}, up until the
	application of Holder's inequality in \eqref{eq:holder}. In place of
	this step, we use the linearized bound in \eqref{eq:linearized}, which
	we claim implies that
	\begin{equation*}
	\epsilon^\top P_R (\tbeta-\beta_0) \leq
	\tB \|\tbeta-\beta_0\|_R + A \|\op(\tbeta-\beta_0)\|_1, 
	\end{equation*}
	where \smash{$\tB=O_\P(B)$}.
	This simply follows from applying \eqref{eq:linearized} to
	\smash{$x=P_R(\tbeta-\beta_0)/\|\op(\tbeta-\beta_0)\|_1$}, which is
	easily seen to be an element of $\cS_\op(1)$.
	Hence we can take take $\lambda = \Theta(A)$, and argue as in the 
	proof of Theorem \ref{thm:basic} to arrive at
	\begin{equation*}
	\|\tbeta-\beta_0\|_R^2 \leq \tB \|\tbeta-\beta_0\|_R +
	\tA \|\op\beta_0\|_1,
	\end{equation*}
	where \smash{$\tA=O_\P(A)$}.  Note that the above is a quadratic
	inequality of the form $ax^2-bx-c \leq 0$ with
	\smash{$x=\|\tbeta-\beta_0\|_R$}.  As $a>0$, the  
	larger of its two roots serves as a bound for $x$, i.e.,
	$x \leq (b + \sqrt{b^2+4ac})/(2a) \leq b/a + \sqrt{c/a}$, or 
	$x^2 \leq 2b^2/a^2 + 2c/a$, which means that
	\begin{equation*}
	\|\tbeta-\beta_0\|_R^2 \leq 2\tB^2 + 2\tA\|\op\beta_0\|_1 
	= O_\P \big( B^2 + A\|\op\beta_0\|_1\big),
	\end{equation*}
	completing the proof.
	
	\subsection{Proof of Theorem \ref{thm:incoherence}}
	
	For an index $i_0 \in \{1,\ldots q\}$, let
	\begin{equation*}
	C = \mu \sqrt{\frac{2\log{2r}}{n}
		\sum_{i=i_0+1}^q \frac{1}{\xi_i^2}}.
	\end{equation*}
	We will show that
	\begin{equation*}
	\max_{x \in \cS_\op(1)}
	\frac{ \epsilon^\top x  - 1.001\sigma C}{\|x\|_2} = O_\P(\sqrt{i_0}). 
	\end{equation*}
	Invoking Lemma \ref{lem:linearized} with $A=1.001\sigma C$ and
	\smash{$b=\sqrt{i_0}$}  would then give the result.  
	
	Henceforth we denote $[i] = \{1,\ldots i\}$. 
	Recall that $q=\rank(\op)$. Let the singular value decomposition of
	$\op$ be 
	\begin{equation*}
	\op = U \Sigma V^\top,
	\end{equation*}
	where $U \in \R^{r\times q}$, $V \in \R^{n\times q}$ are
	orthogonal, and $\Sigma \in \R^{q \times q}$ has diagonal 
	elements $(\Sigma)_{ii} = \xi_i >0$ for $i \in [q]$.
	First, let us establish that
	\begin{equation*}
	\op^\dag = V \Sigma^{-1} U^\top.
	\end{equation*}
	Consider an arbitrary point $x = P_R z \in \cS_\op(1)$.
	Denote the projection \smash{$P_{[i_0]} =  V_{[i_0]} V_{[i_0]}^\top$}
	where $V_{[i_0]}$ contains the first $i_0$ right singular vectors.  We   
	can decompose
	\begin{equation*}
	\epsilon^\top  P_R z = \epsilon^\top P_{[i_0]} P_R z +
	\epsilon^\top (I - P_{[i_0]}) P_R z.
	\end{equation*}
	The first term can be bounded by
	\begin{equation*}
	\epsilon^\top P_{[i_0]} P_R z \le \| P_{[i_0]} \epsilon \|_2 
	\| z \|_R = O_\P(\sqrt{i_0} \|z\|_R),
	\end{equation*}
	using the fact that $\| P_{[i_0]} \epsilon\|_2^2 \overset{d}{=} 
	\sum_{i=1}^{i_0} \epsilon_i^2$. We can bound the second term by 
	\begin{equation*}
	\epsilon^\top (I - P_{[i_0]}) P_R z = \epsilon^\top
	(I - P_{[i_0]}) \op^\dag \op z \le
	\| (\op^{\dag})^\top (I - P_{[i_0]})
	\epsilon \|_\infty,
	\end{equation*}
	using $P_R = \op^\dag \op$, Holder's inequality, and the fact that
	$\|\op z\|_1 \leq 1$.
	Define $g_j = (I - P_{[i_0]}) \op^\dag e_j$ for $j \in [r]$ with
	$e_j$ the $j$th canonical basis vector. So,
	\begin{equation*}
	\| g_j \|_2^2 =
	\|[\, 0 \;\, V_{[n]\backslash [i_0]} \,] \cdot \Sigma^{-1} U^\top e_j 
	\|_2^2 \le \frac{\mu^2}{n} \sum_{i = i_0+1}^q \frac{1}{\xi_i^2},  
	\end{equation*}
	by rotational invariance of $\|\cdot\|_2$ and the incoherence
	assumption on the columns of $U$.  By a standard result on maxima
	of Gaussians,
	\begin{equation*}
	\|(\op^{\dag})^\top (I - P_{[i_0]}) \epsilon \|_\infty =
	\max_{j \in [r]} \, |g_j^\top \epsilon| \leq 1.001 \sigma 
	\sqrt{2 \log (2r)  \frac{\mu^2}{n} \sum_{i =  i_0+1}^q
		\frac{1}{\xi_i^2} } = 1.001 \sigma C,
	\end{equation*}
	with probability approaching 1.  Putting these two terms together
	completes the proof, as we have shown that
	\begin{equation*}
	\frac{\epsilon^\top P_R z - 1.001\sigma C}{\|z\|_R} =
	O_\P(\sqrt{i_0}),
	\end{equation*}
	with the probability bound on the right-hand side not depending on
	$z$.   
	
	\subsection{Proof of Corollary \ref{cor:grid1}}
	
	We focus on the $k$ odd and $k$ even cases separately.
	
	\smallskip\smallskip
	\noindent
	{\bf Case for $k$ odd.}  When $k$ is odd, we have 
	\smash{$\op=\op^{(k+1)}=L^{(k+1)/2}$}, where $L$ the    
	graph Laplacian of a chain graph (i.e., 1d grid graph), to be
	perfectly explicit,
	\begin{equation*}
	L = \left[\begin{array}{rrrrrr}
	1 & -1 & 0 & \ldots & 0 & 0  \\
	-1 & 2 & -1 & \ldots & 0 & 0 \\
	0 & -1 & 2 & \ldots & 0 & 0 \\
	\vdots & & \ddots & \ddots & \ddots & \\
	0 & 0 & \ldots & -1 & 2 & -1 \\
	0 & 0 & \ldots & 0 & -1 & 1 \\
	\end{array}\right].
	\end{equation*}
	In numerical methods for differential equations, this matrix $L$ is
	called the finite difference operator for the
	1d Laplace equation with Neumann boundary conditions  
	\citep[e.g.,][]{conte1980,godunov1987}, and is known to have
	eigenvalues and eigenvectors 
	\begin{gather*}
	\xi_i = 4 \sin^2\Big(\frac{\pi(i-1)}{2n}\Big), \;\;\; 
	\text{for $i=1,\ldots n$}, \\
	u_{ij} = \begin{cases}
	\frac{1}{\sqrt{n}} & \text{if $i=1$} \smallskip\\
	\sqrt{\frac{2}{n}} \cos\Big(\frac{\pi(i-1)(j-1/2)}{n}\Big) & 
	\text{otherwise}
	\end{cases},
	\;\;\; \text{for $i,j = 1,\ldots n$}.
	\end{gather*}
	Therefore, the eigenvectors of $L$ are incoherent with constant 
	$\mu=\sqrt{2}$.   
	This of course implies the same of $L^{(k+1)/2}$, which shares the
	eigenvectors of $L$.  Meanwhile, the eigenvalues of $L^{(k+1)/2}$ are
	just given by raising those of $L$ to the power of $(k+1)/2$, and for 
	$i_0 \in \{1,\ldots n\}$, we compute the partial sum of their squared 
	reciprocals, as in
	\begin{equation*}
	\frac{1}{n} \sum_{i=i_0+1}^n \frac{1}{\xi_i^{k+1}} = 
	\frac{1}{n} \sum_{i=i_0+1}^n 
	\frac{1}{4^{k+1} \sin^{2k+2}(\pi(i-1)/(2n))}  
	\leq \int_{(i_0-1)/n}^{(n-2)/n} \frac{1}{4^{k+1} \sin^{2k+2}(\pi x/2)}
	dx,  
	\end{equation*}
	where we have used the fact that the right-endpoint Riemann sum, for 
	a monotone nonincreasing function, is an underestimate of its
	integral.  Continuing on, the above integral can be bounded by
	\begin{equation*}
	\frac{1}{4^{k+1} \sin^{2k}(\pi i_0/(2n))}
	\int_{(i_0-1)/n}^1 \frac{1}{\sin^2(\pi x/2)} dx =
	\frac{2\cot(\pi i_0/(2n))}{4^{k+1} \pi \sin^{2k}(\pi i_0/(2n))}  
	\leq \frac{1}{4^{k+1}\pi} \left(\frac{2n}{\pi i_0}\right)^{2k+1},  
	\end{equation*}
	the last step using a Taylor expansion around 0. Hence to choose a
	tight a bound as possible in Theorem \ref{thm:incoherence}, we seek to
	balance $i_0$ with $\sqrt{(n/i_0)^{2k+1} \log{n}} \cdot \|\op^{(k+1)}
	\beta_0\|_1$. This is 
	accomplished by choosing
	\begin{equation*}
	i_0 =  n^{\frac{2k+1}{2k+3}} (\log{n})^{\frac{1}{2k+3}} 
	\|\op^{(k+1)}\beta_0\|_1^{\frac{2}{2k+3}}, 
	\end{equation*}
	and applying Theorem \ref{thm:incoherence} gives the result for $k$
	odd.
	
	\smallskip\smallskip
	\noindent
	{\bf Case for $k$ even.} When $k$ is even, we instead have
	\smash{$\op=\op^{(k+1)} = DL^{k/2}$}, where $D$ is the edge incidence
	matrix of a 1d chain, and $L=D^\top D$. It is
	clear that the left singular vectors of $DL^{k/2}$ are simply the left
	singular vectors of $D$, or equivalently, the eigenvectors of
	$DD^\top$. 
	To be explicit, 
	\begin{equation*}
	DD^\top = \left[\begin{array}{rrrrrr}
	2 & -1 & 0 & \ldots & 0 & 0  \\
	-1 & 2 & -1 & \ldots & 0 & 0 \\
	0 & -1 & 2 & \ldots & 0 & 0 \\
	\vdots & & \ddots & \ddots & \ddots & \\
	0 & 0 & \ldots & -1 & 2 & -1 \\
	0 & 0 & \ldots & 0 & -1 & 2 \\
	\end{array}\right],
	\end{equation*}
	which is called the finite
	difference operator associated with the 1d Laplace equation under
	Dirichlet boundary conditions in numerical methods \citep[e.g.,][]{conte1980,godunov1987},
	and is known to have eigenvectors 
	\begin{equation*}
	u_{ij} = \sqrt{\frac{2}{n}} \sin\Big(\frac{\pi ij}{n} \Big),
	\;\;\; \text{for $i,j = 1,\ldots n-1$}.
	\end{equation*}
	It is evident that these vectors are incoherent, with constant
	$\mu=\sqrt{2}$.  Furthermore, the singular values of $DL^{k/2}$ are 
	exactly the eigenvalues of $L$ raised to the power of $(k+1)/2$, and
	the remainder of the proof goes through as in the $k$ odd case.
	
	\subsection{Proof of Corollary \ref{cor:grid2}}
	
	Again we treat the $k$ odd and even cases separately.
	
	\smallskip\smallskip
	\noindent
	{\bf Case for $k$ odd.}  As $k$ is odd, the GTF operator is 
	\smash{$\op=\op^{(k+1)}=L^{(k+1)/2}$}, where the $L$ is the
	Laplacian matrix of a 2d grid graph.  Writing \smash{$L_{\mathrm{1d}} 
		\in \R^{\ell \times \ell}$} for the Laplacian matrix over a 1d grid of
	size $\ell=\sqrt{n}$ (and $I \in \R^{\ell\times \ell}$ for the identity
	matrix), we note that 
	\begin{equation*}
	L = I \otimes L_{\mathrm{1d}} + L_{\mathrm{1d}} \otimes I,
	\end{equation*}
	i.e., the 2d grid Laplacian $L$ is the Kronecker sum of the 1d grid
	Laplacian \smash{$L_{\mathrm{1d}}$}, so its eigenvectors are given by
	all pairwise Kronecker products of eigenvectors of
	\smash{$L_{\mathrm{1d}}$}, of the form $u_i \otimes u_j$.  Moreover, it 
	is not hard to see that each $u_i \otimes u_j$ has unit norm (since 
	$u_i,u_j$ do) and $\|u_i\otimes u_j\|_{\infty}\leq 2/\sqrt{n}$. This
	allows us to conclude that the eigenvectors of $L$ obey the
	incoherence property with $\mu=2$. 
	
	The eigenvalues of $L$ are given by all
	pairwise sums of eigenvalues in the 1d case. Indexing by 2d grid
	coordinates, we may write these as
	\begin{equation*}
	\xi_{j_1,j_2} = 4 \sin^2\Big(\frac{\pi(j_1-1)}{2\ell}\Big) + 
	4 \sin^2\Big(\frac{\pi(j_2-1)}{2\ell}\Big), \;\;\;
	\text{for $j_1,j_2 = 1,\ldots \ell$}.
	\end{equation*}
	Eigenvalues of $L^{(k+1)/2}$ are just given by raising the above 
	to the power of $(k+1)/2$, and for $j_0 \in \{1,\ldots \ell\}$,
	we let $i_0=j_0^2$, and compute the sum
	\begin{equation*}
	\frac{1}{n} \sum_{\max\{j_1 j_2\} > j_0}  
	\frac{1}{\xi_{j_1,j_2}^{k+1}} \leq
	\frac{2}{n} \sum_{j_1=j_0+1}^\ell \sum_{j_2=1}^\ell 
	\frac{1}{\xi_{j_1,j_2}^{k+1}} \leq
	\frac{2}{\ell} \sum_{j_1=j_0+1}^\ell \frac{1}
	{4^{k+1} \sin^{2k+2}(\pi (j_1-1)/(2\ell))}.
	\end{equation*}
	Just as we argued in the 1d case (for $k$ odd), the above is bounded by  
	\begin{equation*}
	\frac{2}{4^{k+1}\pi}\left(\frac{2\ell}{\pi j_0}\right)^{2k+1},  
	\end{equation*}
	and thus we seek to balance $i_0=j_0^2$ with 
	\smash{$\sqrt{(\ell/j_0)^{2k+1} \log{n}} \cdot
		\|\op^{(k+1)}\beta_0\|_1$}.  This yields
	\begin{equation*}
	j_0 = \ell^{\frac{2k+1}{2k+5}} (\log{n})^{\frac{1}{2k+5}}
	\|\op^{(k+1)}\beta_0\|_1^{\frac{2}{2k+5}},  
	\end{equation*}
	i.e.,
	\begin{equation*}
	i_0 = n^{\frac{2k+1}{2k+5}} (\log{n})^{\frac{2}{2k+5}} 
	\|\op^{(k+1)}\beta_0\|_1^{\frac{4}{2k+5}},
	\end{equation*}
	and applying Theorem \ref{thm:incoherence} gives the result for $k$
	odd. 
	
	\smallskip\smallskip
	\noindent
	{\bf Case for $k$ even.} For $k$ even, we have the GTF operator being 
	\smash{$\op=\op^{(k+1)}=DL^{k/2}$}, where $D$ is the edge incidence
	matrix of a 2d grid, and $L=D^\top D$.  It will be helpful to write 
	\begin{equation*}
	D = \left[\begin{array}{c} 
	I \otimes D_{\mathrm{1d}} \\
	D_{\mathrm{1d}} \otimes I 
	\end{array}\right],
	\end{equation*}
	where \smash{$D_{\mathrm{1d}} \in \R^{(\ell-1)\times \ell}$} is the
	difference operator for a 1d grid of size $\ell=\sqrt{n}$ (and $I \in  
	\R^{\ell\times \ell}$ is the identity matrix).  It suffices
	to check the incoherence of the left singular vectors of $DL^{k/2}$,
	since the eigenvalues of $DL^{k/2}$ are those of $L$ raised to
	the power of $(k+1)/2$, and so the rest of the proof then follows
	precisely as in the case when $k$ is odd.  The left singular vectors
	of $DL^{k/2}$  are the same as the left singular vectors of $D$, which 
	are the eigenvectors of $DD^\top$.  Observe that 
	\begin{equation*}
	DD^\top = \left[\begin{array}{cc} 
	I \otimes D_{\mathrm{1d}}D_{\mathrm{1d}}^\top  & 
	D_{\mathrm{1d}}^\top \otimes D_{\mathrm{1d}}  \smallskip \\
	D_{\mathrm{1d}} \otimes D_{\mathrm{1d}}^\top & 
	D_{\mathrm{1d}}D_{\mathrm{1d}}^\top \otimes I 
	\end{array}\right].
	\end{equation*}
	Let $u_i$, $i=1,\ldots  \ell-1$ be the eigenvectors of 
	\smash{$D_{\mathrm{1d}}D_{\mathrm{1d}}^\top$}, corresponding to
	eigenvalues $\lambda_i$, $i=1,\ldots \ell-1$. Define
	\smash{$v_i=D_{\mathrm{1d}}^\top u_i/\sqrt{\lambda_i}$}, 
	$i=1,\ldots \ell-1$, and \smash{$e=\mathds{1}/\sqrt{\ell}$}, where    
	\smash{$\mathds{1}=(1,\ldots 1) \in \R^{\ell}$} is the vector of all
	1s. A straightforward calculation verifies that     
	\begin{align*}
	DD^\top 
	\left[\begin{array}{c}  
	v_i \otimes u_i \\
	u_i \otimes v_i 
	\end{array}\right] &= 
	2\lambda_i
	\left[\begin{array}{c}  
	v_i \otimes u_i \\
	u_i \otimes v_i 
	\end{array}\right], \;\;\;\text{for $i=1,\ldots \ell-1$}, \\
	DD^\top 
	\left[\begin{array}{c}  
	e \otimes u_i \\
	0
	\end{array}\right] &= 
	\lambda_i
	\left[\begin{array}{c}  
	e \otimes u_i \\
	0
	\end{array}\right], \;\;\;\text{for $i=1,\ldots \ell-1$}, \\
	DD^\top 
	\left[\begin{array}{c}  
	0 \\
	u_i \otimes e 
	\end{array}\right] &= 
	\lambda_i
	\left[\begin{array}{c}  
	0 \\
	u_i \otimes e
	\end{array}\right], \;\;\;\text{for $i=1,\ldots \ell-1$}. 
	\end{align*}
	Hence we have derived $3(\ell-1)$ eigenvectors of $DD^\top$. 
	Note that the vectors $v_i$, $i=1,\ldots \ell-1$ are actually the
	eigenvectors of 
	\smash{$L_{\mathrm{1d}}=D_{\mathrm{1d}}^\top D_{\mathrm{1d}}$} 
	(corresponding to the $\ell-1$ nonzero eigenvalues), and from our work
	in the 1d case, recall, both $v_i$, $i=1,\ldots \ell-1$ (studied for
	$k$ odd) and $u_i$, $i=1,\ldots \ell-1$ (studied for $k$ even) are
	unit vectors satisfying the incoherence property with $\mu=\sqrt{2}$.
	This means that the above eigenvectors are all unit norm, and are also 
	incoherent, with constant $\mu=2$. 
	
	There are
	$(\ell-1)(\ell-2)$ more eigenvectors of $DD^\top$, as the rank of
	$DD^\top$ is $n-1=\ell^2-1$. A somewhat longer but still
	straightforward calculation verifies that 
	\begin{align*}
	DD^\top 
	\left[\begin{array}{c}  
	v_i \otimes u_j + v_j \otimes u_i \smallskip \\
	\sqrt{\frac{\lambda_i}{\lambda_j}} u_i \otimes v_j + 
	\sqrt{\frac{\lambda_j}{\lambda_i}}  u_j \otimes v_i
	\end{array}\right] &= 
	(\lambda_i+\lambda_j)
	\left[\begin{array}{c}  
	v_i \otimes u_j + v_j \otimes u_i \smallskip \\
	\sqrt{\frac{\lambda_i}{\lambda_j}} u_i \otimes v_j + 
	\sqrt{\frac{\lambda_j}{\lambda_i}}  u_j \otimes v_i
	\end{array}\right],\text{ for $i<j$}, \\
	DD^\top 
	\left[\begin{array}{c}  
	\sqrt{\frac{\lambda_j}{\lambda_i}} v_i \otimes u_j +
	\sqrt{\frac{\lambda_i}{\lambda_j}} v_j \otimes u_i \smallskip \\
	u_i \otimes v_j + u_j \otimes v_i
	\end{array}\right] &= 
	(\lambda_i+\lambda_j)
	\left[\begin{array}{c}  
	\sqrt{\frac{\lambda_j}{\lambda_i}} v_i \otimes u_j +
	\sqrt{\frac{\lambda_i}{\lambda_j}} v_j \otimes u_i \smallskip \\
	u_i \otimes v_j + u_j \otimes v_i
	\end{array}\right],\text{ for $i<j$}.
	\end{align*}
	Modulo the appropriate normalization, we have derived the remaining
	$(\ell-1)(\ell-2)$ eigenvectors of $DD^\top$.  It remains to check
	their incoherence, once we have normalized them (to have unit norm).
	As the eigenvectors in the first and second expressions above are
	simply (block) rearrangements of each other, it does not matter which
	form we study; consider, say, those in the second expression, and fix
	$i<j$.  The entrywise absolute maximum of the eigenvector in question
	is at most \smash{$\sqrt{\lambda_j/\lambda_i}(4/\sqrt{n})$}.  Thus it 
	suffices show that the normalization constant for this eigenvector is
	on the order of \smash{$\sqrt{\lambda_j/\lambda_i}$}.  Observe that 
	\begin{equation*}
	\left\|\left[\begin{array}{c}  
	\sqrt{\frac{\lambda_j}{\lambda_i}} v_i \otimes u_j +
	\sqrt{\frac{\lambda_i}{\lambda_j}} v_j \otimes u_i \smallskip \\ 
	u_i \otimes v_j + u_j \otimes v_i
	\end{array}\right]\right\|_2^2 = \left\|\left[\begin{array}{c}   
	\sqrt{\frac{\lambda_j}{\lambda_i}} v_i \otimes u_j\smallskip \\
	u_i \otimes v_j
	\end{array}\right]\right\|_2^2 + \left\|\left[\begin{array}{c}  
	\sqrt{\frac{\lambda_i}{\lambda_j}} v_j \otimes u_i \smallskip \\
	u_j \otimes v_i
	\end{array}\right]\right\|_2^2.
	\end{equation*}
	Here the cross-term is
	\smash{$(v_i^\top
		\otimes u_j^\top)(v_j\otimes u_i)=(v_i^\top v_j)(u_i^\top u_j)=0$},
	as \smash{$v_i^\top v_j = 0$} and \smash{$u_i^\top u_j=0$}.
	This means that the normalization constant lies within 
	\smash{$[\sqrt{\lambda_j/\lambda_i+2},
		\sqrt{2\lambda_j/\lambda_i+2}]$}. In particular, the lower bound 
	shows that the incoherence property holds with $\mu = 4$. This 
	completes the proof.

	\subsection{Proof of Lemma \ref{lem:fractional}}
	
	As before, we follow the proof of Theorem \ref{thm:basic} up until the
	application of Holder's inequality in \eqref{eq:holder}, but we use
	the fractional bound in \eqref{eq:fractional} instead.  We claim that
	this implies
	\begin{equation*}
	\epsilon^\top P_R (\tbeta-\beta_0) \leq
	\tK \|\tbeta-\beta_0\|_R^{1-w/2}
	(\|\op\tbeta\|_1 + \|\op\beta_0\|_1)^{w/2},
	\end{equation*}
	where \smash{$\tK=O_\P(K)$}.  This is verified by noting that
	\smash{$x = P_R(\tbeta-\beta_0)/(\|\op\tbeta\|_1 + \|\op\beta_0\|_1)
		\in \cS_\op(1)$}, applying \eqref{eq:fractional} to $x$, and then
	rearranging.  Therefore, as in the proof of Theorem \ref{thm:basic},
	we have
	\begin{equation}
	\label{eq:basic_frac}
	\|\tbeta-\beta_0\|_R^2 \leq
	2 \tK \|\tbeta-\beta_0\|_R^{1-w/2}
	(\|\op\tbeta\|_1 + \|\op\beta_0\|_1)^{w/2} +
	2\lambda (\|\op\beta_0\|_1 - \|\op\tbeta\|_1),
	\end{equation}
	We now set
	\begin{equation*}
	\lambda = \Theta \left(K^{\frac{2}{1+w/2}}
	\|\op \beta_0\|_1^{-\frac{1-w/2}{1+w/2}} \right),
	\end{equation*}
	and in the spirit of \citet{vandegeer1990,locadapt}, we proceed to
	argue in cases.   
	
	\smallskip\smallskip
	\noindent
	{\bf Case 1.}  Suppose that
	\smash{$\frac{1}{2}\|\op\tbeta\|_1 \geq \|\op\beta_0\|_1$}.
	Then we see that \eqref{eq:basic_frac} implies
	\begin{equation}
	\label{eq:basic_frac_case1}
	0 \leq \|\tbeta-\beta_0\|_R^2 \leq
	\tK \|\tbeta-\beta_0\|_R^{1-w/2}
	\Big(\frac{3}{2}\Big)^{w/2}
	\|\op\tbeta\|_1^{w/2} -\lambda \|\op\tbeta\|_1,
	\end{equation}
	so that
	\begin{equation*}
	\lambda \|\op\tbeta\|_1 \leq \tK \|\tbeta-\beta_0\|_R^{1-w/2} 
	\|\op\tbeta\|_1^{w/2},
	\end{equation*}
	where for simplicity have absorbed a constant factor $2(3/2)^{w/2}$ 
	into \smash{$\tK$} (since this does not change the fact that
	\smash{$\tK=O_\P(K)$}), and thus   
	\begin{equation*}
	\|\op\tbeta\|_1 \leq
	\Big(\frac{\tK}{\lambda}\Big)^{\frac{1}{1-w/2}} \|\tbeta-\beta_0\|_R.
	\end{equation*}
	Plugging this back into \eqref{eq:basic_frac_case1} gives
	\begin{equation*}
	\|\tbeta-\beta_0\|_R^2 \leq \tK \|\tbeta-\beta_0\|^{1-w/2}_R
	\Big(\frac{\tK}{\lambda}\Big)^{\frac{w/2}{1-w/2}}
	\|\tbeta-\beta_0\|_R^{w/2},
	\end{equation*}
	or
	\begin{equation*}
	\|\tbeta-\beta_0\|_R \leq \tK^{\frac{1}{1+w/2}}
	\Big(\frac{1}{\lambda}\Big)^{\frac{w/2}{1-w/2}}
	= O_\P\left(K^{\frac{1}{1+w/2}}
	\|\op\beta_0\|_1^{\frac{w/2}{1+w/2}} \right),
	\end{equation*}
	as desired.
	
	\smallskip\smallskip
	\noindent
	{\bf Case 2.} Suppose that
	\smash{$\frac{1}{2}\|\op\tbeta\|_1 \leq \|\op\beta_0\|_1$}.
	Then from \eqref{eq:basic_frac},
	\begin{equation*}
	\|\tbeta-\beta_0\|_R^2 \leq
	\underbrace{2\lambda \|\op\beta_0\|_1}_{a} +
	\underbrace{2\tK \|\tbeta-\beta_0\|_R^{1-w/2}
		3^{w/2} \|\op\beta_0\|_1^{w/2}}_{b},
	\end{equation*}
	and hence either \smash{$\|\tbeta-\beta_0\|_R^2 \leq 2a$}, or
	\smash{$\|\tbeta-\beta_0\|_R^2 \leq 2b$}, and $a \leq b$.  The first
	subcase is straightforward and leads to
	\begin{equation*}
	\|\tbeta-\beta_0\|_R \leq 2\sqrt{\lambda\|\op\beta_0\|_1} = 
	O_\P\left(K^{\frac{1}{1+w/2}}
	\|\op\beta_0\|_1^{\frac{w/2}{1+w/2}} \right),
	\end{equation*}
	as desired.  In the second subcase, we have by assumption
	\begin{align}
	\label{eq:lossbd}
	\|\tbeta-\beta_0\|_R^2 &\leq
	2\tK \|\tbeta-\beta_0\|_R^{1-w/2}
	\|\op\beta_0\|_1^{w/2}, \\
	\label{eq:penbd}
	2\lambda \|\op\beta_0\|_1 & \leq
	\tK \|\tbeta-\beta_0\|_R^{1-w/2}
	\|\op\beta_0\|_1^{w/2},
	\end{align}
	where again we have absorbed a constant factor $2(3^{w/2})$ into  
	\smash{$\tK$}.  Working from \eqref{eq:penbd}, we derive
	\begin{equation*}
	\|\op\beta_0\|_1 \leq
	\Big(\frac{\tK}{2\lambda}\Big)^{\frac{1}{1-w/2}} \|\tbeta-\beta_0\|_R,
	\end{equation*}
	and plugging this back into \eqref{eq:lossbd}, we see
	\begin{equation*}
	\|\tbeta-\beta_0\|_R^2 \leq 2\tK \|\tbeta-\beta_0\|^{1-w/2}_R
	\Big(\frac{\tK}{2\lambda}\Big)^{\frac{w/2}{1-w/2}}
	\|\tbeta-\beta_0\|_R^{w/2},
	\end{equation*}
	and finally
	\begin{equation*}
	\|\tbeta-\beta_0\|_R \leq 2\tK^{\frac{1}{1+w/2}}
	\Big(\frac{1}{\lambda}\Big)^{\frac{w/2}{1-w/2}}
	= O_\P\left(K^{\frac{1}{1+w/2}}
	\|\op\beta_0\|_1^{\frac{w/2}{1+w/2}} \right).
	\end{equation*}
	This completes the second case, and the proof.
	
	\subsection{Proof of Theorem \ref{thm:entropy}}
	
	The proof follows closely from Lemma 3.5 of \citet{vandegeer1990}.
	However, this author uses a different problem scaling than ours, so
	some care must be taken in applying the lemma.  First we abbreviate 
	\smash{$\cS=\cS_\op(1)$}, and define 
	\smash{$\tS= \cS \cdot \sqrt{n}/M$}, where recall $M$ is the  
	maximum column norm of $\op^\dag$.  Now it is not hard to check
	that 
	\begin{equation*}
	\cS = \{ x \in \row(\op) : \|\op x\|_1 \leq 1\} = 
	\op^\dag \{ \alpha \in \col(\op) : \|\alpha\|_1 \leq 1\},
	\end{equation*}
	so that 
	\smash{$\max_{x \in \cS} \|x\|_2 \leq M$}, and 
	\smash{$\max_{x \in \tS} \|x\|_2 \leq \sqrt{n}$}.  This is important
	because Lemma 3.5 of \citet{vandegeer1990} concerns a form of 
	``local'' entropy that allows for deviations on the order of 
	$\sqrt{n}$ in the norm $\|\cdot\|_2$, or equivalently, constant order
	in the scaled metric $\|\cdot\|_n=\|\cdot\|_2/\sqrt{n}$.  Hence, the 
	entropy bound in \eqref{eq:entropy} translates into
	\begin{equation*}
	\log N(\delta, \tS,\|\cdot\|_2) \leq E 
	\Big(\frac{\sqrt{n}}{M}\Big)^w
	\Big(\frac{\sqrt{n}}{\delta}\Big)^w,
	\end{equation*}
	that is,
	\begin{equation*}
	\log N(\delta, \tS,\|\cdot\|_n) \leq E 
	\Big(\frac{\sqrt{n}}{M}\Big)^w \delta^{-w}.
	\end{equation*}
	Now we apply Lemma 3.5 of \citet{vandegeer1990}:
	in the scaled metric used by this author,
	\begin{equation*}
	\max_{x \in \tS} \frac{\epsilon^\top x}
	{\sqrt{n}\|x\|_n^{1-w/2}} = O_\P\left(\sqrt{E} 
	\Big(\frac{\sqrt{n}}{M}\Big)^{w/2} \right),
	\end{equation*}
	that is,
	\begin{equation*}
	\max_{x \in \tS} \frac{\epsilon^\top x}
	{\|x\|_2^{1-w/2}} = O_\P\left(\sqrt{E} 
	\big(\sqrt{n}\big)^{w/2}
	\Big(\frac{\sqrt{n}}{M}\Big)^{w/2} \right),
	\end{equation*}
	and finally,
	\begin{equation*}
	\max_{x \in \cS} \frac{\epsilon^\top x}
	{\|x\|_2^{1-w/2}} = O_\P\left(\sqrt{E} 
	\big(\sqrt{n}\big)^{w/2}\right),
	\end{equation*}
	as desired.

	\subsection{Proof of Corollary \ref{lem:atoms}}
	
	For each $j=1,\ldots 2r$, 
	if $\cG$ is covered by $j$ balls having radius at most  
	\smash{$C_0 \sqrt{n} j^{-1/\zeta}$}, with respect to the norm 
	$\|\cdot\|_2$, then it is covered by $j$ balls having radius at  
	most $C_0 j^{-1/\zeta}$, with respect to the scaled norm 
	$\|\cdot\|_n=\|\cdot\|_2/\sqrt{n}$.  By Theorem 1 of
	\citet{carl1997metric}, this implies that for each $j=1,2,3,\ldots$, 
	the convex hull $\conv(\cG)$ is covered by $2^j$ balls having radius
	at most $C_0' j^{-(1/2+1/\zeta)}$, with respect to $\|\cdot\|_n$,
	for another constant $C_0'$.  Converting this back to an entropy 
	bound in our original metric, and noting that
	\smash{$\conv(\cG)=\cS_\op(1)$}, we have 
	\begin{equation*}
	\log(\delta, \cS_\op(1), \|\cdot\|_2) \leq C_0''
	\Big(\frac{\sqrt{n}}{\delta}\Big)^{\frac{1}{1/2+1/\zeta}},  
	\end{equation*}
	for a constant $C_0''$, as needed.  This proves the lemma.
	
	\subsection{Proof of Corollary \ref{cor:1dfl}}
	
	According to Lemma \ref{lem:dpinv}, we know
	that \smash{$(D^{(1)})^\dag = P_{\mathds{1}}^\perp H$}, where $H$ is an  
	$n\times (n-1)$ lower triangular matrix with $H_{ij}=1$ if $i > j$ and  
	0 otherwise, and $P_{\mathds{1}}^\perp$ is the projection map
	orthogonal to the all 1s vector.  Thus $g_i = P_{\mathds{1}}^\perp
	h_i$, $i=1,\ldots n-1$, with $h_1,\ldots h_{n-1}$ denoting the columns
	of $H$.  It is immediately 
	apparent that 
	\begin{equation*}
	\|g_i-g_\ell\|_2 \leq \|h_i-h_\ell\|_2 \leq \sqrt{i-\ell},
	\end{equation*}
	for all $i>\ell$.  Now, given $2j$ balls at our disposal, consider
	centering the first $j$ balls at 
	\begin{equation*}
	g_d, g_{2d}, \ldots g_{j d},
	\end{equation*}
	where $d=\lfloor n/j \rfloor$.  Also let these balls have radius
	$\sqrt{n/j}$. By construction, then, we see that 
	\begin{equation*}
	\|g_1-g_d\|_2 \leq \sqrt{n/j}, \;
	\|g_d-g_{2d}\|_2 \leq \sqrt{n/j}, \;
	\ldots
	\|g_{jd}-g_{n-1}\|_2 \leq \sqrt{n/j},
	\end{equation*}
	which means that we have covered $g_1,\ldots g_{n-1}$ with $j$
	balls of radius $\sqrt{n/j}$.
	
	We can cover $-g_1,\ldots,-g_{n-1}$
	with the remaining $j$ balls analogously. Therefore, we have shown
	that $2j$ balls require a radius of $\sqrt{n/j}$, or in other words,
	$j$ balls require a radius of $\sqrt{2n/j}$.

\bibliographystyle{abbrvnat}
\bibliography{ryantibs,bibfile}

\end{document}